\newcommand{\icon}{\raisebox{-14pt}{\includegraphics[width=3.0em]{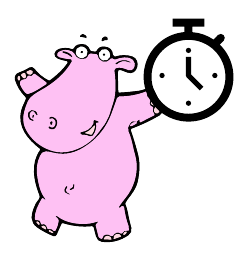}}\xspace}
\newtheorem{theorem}{Theorem}
\newtheorem{lemma}{Lemma}[section]
\newtheorem{corollary}[lemma]{Corollary}
\newtheorem{proposition}[theorem]{Proposition}
\newtheorem{definition}{Definition}
\newtheorem{remark}[lemma]{Remark}
\newcommand{\dt}{\Delta}
\newcommand{\TS}{Time-SSM}
\newcommand{\para}[1]{\paragraph{#1}}
\newlength{\defbaselineskip}
\definecolor{mycolor1}{RGB}{72,86,126}
\definecolor{mycolor2}{RGB}{142,171,192}
\definecolor{mycolor3}{RGB}{199,181,160}
\definecolor{mycolor4}{RGB}{183,191,153}
\definecolor{mypurple}{RGB}{128,0,128}
\title{ \icon \hspace{-5pt}Time-SSM: Simplifying and Unifying State Space Models for Time Series Forecasting}
\author[$\dagger$]{Jiaxi Hu}
\author[$\dagger$]{Disen Lan}
\author[$\dagger$]{Ziyu Zhou}
\author[$\ddagger$]{Qingsong Wen}
\author[$\dagger *$]{Yuxuan Liang}
\affil[$\dagger$]{Hong Kong University of Science and Technology (Guangzhou)}
\affil[$\ddagger$]{Squirrel AI}
\affil[$\dagger$]{{\texttt{jiaxihu@hkust-gz.edu.cn}, \texttt{yuxliang@outlook.com}}}
\date{}
\begin{document}

\maketitle

\begin{abstract}

State Space Models (SSMs) have emerged as a potent tool in sequence modeling tasks in recent years. These models approximate continuous systems using a set of basis functions and discretize them to handle input data, making them well-suited for modeling time series data collected at specific frequencies from continuous systems. Despite its potential, the application of SSMs in time series forecasting remains underexplored, with most existing models treating SSMs as a black box for capturing temporal or channel dependencies. To address this gap, this paper proposes a novel theoretical framework termed Dynamic Spectral Operator, offering more intuitive and general guidance on applying SSMs to time series data. Building upon our theory, we introduce Time-SSM, a novel SSM-based foundation model with only one-seventh of the parameters compared to Mamba. Various experiments validate both our theoretical framework and the superior performance of Time-SSM. 

\end{abstract}

\section{Introduction}
State Space Models (SSMs) \cite{gu2021efficiently,gu2023mamba,gu2022parameterization,smith2022simplified} are recent deep learning models that have demonstrated significant potential in various sequence modeling tasks. Based on controlled differential equations and discretization techniques, SSMs naturally align with time series data, which are meticulously collected at predetermined sampling frequencies from underlying continuous-time dynamical systems. 
Recently, a series of models \cite{patro2024simba,atik2024timemachine,wang2024mamba,behrouz2024mambamixer,zhang2023effectively,hu2024attractor} encapsulate the state-of-the-art variants (e.g., Mamba \cite{gu2023mamba}) as black-box modules, attempting augmenting their temporal modeling prowess.

Despite their promising performance, the intricate mathematical foundations and the underlying physical interpretations of SSMs in the realm of time series forecasting (TSF) \cite{liang2024foundation,jin2023large} remain ambiguous. For instance,
(i) SSM is mathematically interpreted as Generalized Orthogonal Basis Projection \cite{gu2022train} (GOBP), representing a linear combination of convolution kernels parameterized by two matrices ($\mathbf{A}$ and $\mathbf{B}$) and controlled by a matrix $\mathbf{C}$. However, this formulation falls short of interpreting complex temporal dependencies when generalizing to time series. 
(ii) SSMs originate from $\mathsf{Hippo}$ \cite{gu2020hippo}, and the physical significance of various initialization in the field of TSF remains unclear. 
(iii) SSMs are initially devised for modeling ultra-long sequences (e.g., with a target of one million in $\mathsf{Hippo}$), which prompts the adoption of an exponentially decaying basis, i.e., \textit{S4-LegS} \cite{gu2022train}. Nevertheless, the necessity of this decaying method for TSF tasks requires further justification. 
This work aims to provide a comprehensive theoretical exposition of the above questions, offering more intuitive and general guidance on applying SSMs to time series data. As immediate consequences of this paper:

\begin{itemize}[leftmargin=*]
    \item\textbf{\textit{Clarity and Unity Spark Insight}}. We firmly believe that a clear and unified theoretical framework will directly foster the advancement of SSMs in the TSF community. To this end, we extend the Generalized Orthogonal Basis Projection (GOBP) theory to the Dynamic Spectral Operator theory (see Figure~\ref{fig1}), which is more appropriately tailored for TSF tasks. Building upon this, we introduce a novel variant of SSM basis called \textbf{\textit{Hippo-LegP}} based on piecewise Legendre polynomials, along with various time-varying SSMs in complex-plane for temporal dynamic modeling.

    \item\textbf{\textit{Never Train From Scratch}}. 
    Early SSMs are initialized based on matrices derived from $\mathsf{Hippo}$ theory \cite{gu2020hippo}, later evolving into negative real diagonal initialization \cite{gu2023mamba}. This paper elucidates the physical significance of diagonal initialization from the perspective of time series dynamics and demonstrates that using specific matrices for initialization remains optimal in the context of TSF.

    \item\textbf{\textit{A Dawn for Time Series}}. Leveraging the above supports, we introduce \textbf{\TS}, a novel SSM-based foundation model for TSF. 
    Various experiments, including SSM variant ablation, autoregressive prediction, and long-term function reconstruction, empirically validate our theoretical framework and the superior performance of \TS. We aim for this to mark the dawn of SSM's application in time series data, catalyzing vibrant growth and innovation within this community.
\end{itemize} 

\begin{mdframed}[backgroundcolor=gray!20]
\begin{minipage}{\linewidth}
Apart from theoretical contribution, the important empirical findings include:

1. A unique phenomenon observed in the time series area that finite measure SSM and robust SSM demonstrate superior performance, in contrast to the results in Long Range Arena benchmarks.

2. S4D-real and LegP-complex initializations outperform others, and S4D-real is particularly efficient.

3. For complex-plane SSMs, the unitary matrix $\bm{V}$ serves as an inducing bias that enables $\bm{ABC}$ to possess coordinated dynamics, which is important for temporal modeling.

4. Preserving the specific initialization of $\bm{A}$ is crucial in SSM-based TSF models.

5. Autoregressive prediction is unnecessary in SSM-based models, LegP exhibits stronger representational capacity but is susceptible to noise in TSF tasks, the linear weights in Time-SSM play a crucial role, time-invariant $\bm{B}$ is beneficial for long-term prediction, and so on.




\end{minipage}
\end{mdframed}

\section{Theoretical Framework} \label{background}
\begin{figure}[t]
\centering
{\includegraphics[width=1\columnwidth]{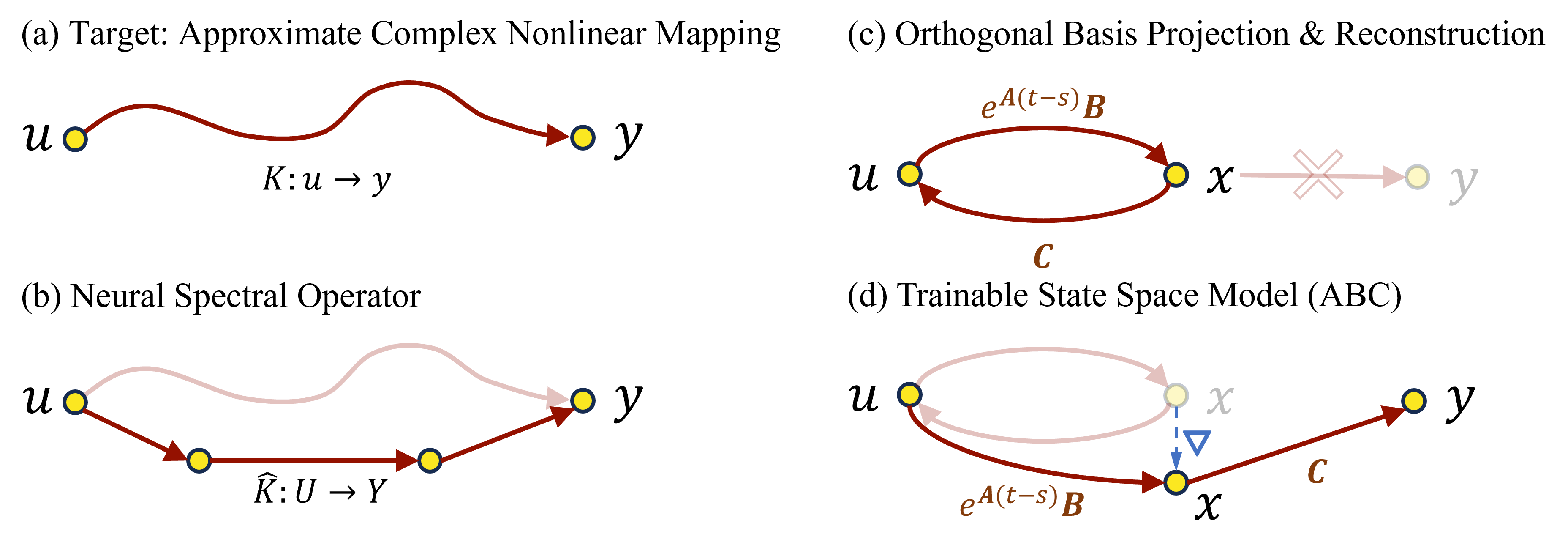}}
\caption{From generalized orthogonal basis projection theory to dynamic spectral operator theory. (a) In the TSF task, we aim to find a complex nonlinear mapping $\mathcal{K}$ from past observed data $u$ to future predicted data $y$. (b) Neural spectral operators simplify the parameterization of mapping $\mathcal{K}$ by leveraging an easily learnable spectral space. (c) The GOBP theory represents the projection and reversible reconstruction of input data $u$ to the spectral space $x$. (d) We define the learnable SSMs as a dynamic operator that gradually shifts from an initial spectral projection space toward an optimal spectral space and find a linear projection to the observation $y$.}
\label{fig1}
\end{figure}
We will start by presenting the neural spectral operator and continuous-time SSM from a time series perspective, and then define SSM as a \textbf{Dynamic Spectral Operator} that adaptively explores temporal dynamics based on our proposed \textbf{IOSSM} concept. \underline{All proofs can be found in Appendix \ref{proof}}.

\para{Neural Spectral Operator.}
Given two functions $u(s)$ and $y(t)$, the operator is a map $\mathcal{K}$ such that $\mathcal{K}u=y$ in Sobolev spaces $\mathcal{H}^{b, d}$. Typically, we choose $b>0$ and $d=2$ to facilitate the definition of projections w.r.t. measures $\mu$ in a Hilbert space structure.
We take the operator $\mathcal{K}:u\mapsto y$ as an integral operator in interval $D$ with the kernel $K(t,s)$ as Equation \eqref{operator}, referring to Figure \ref{fig1}a. 

It is apparent that learning complete kernel $K(t,s)$
would essentially solve the operator map problem. However, it is not necessarily a numerically feasible solution. An overarching approach is to obtain a compact representation $\hat{K}$ of the complete sparse kernel $K$, which necessitates one or a sequence of suitable domains along with projection tools $\mathbbm{T}$ to these domains as Equation \eqref{spectral operator} (see Figure \ref{fig1}b).

\begin{minipage}{.5\linewidth}%
\begin{equation}
\mathcal{K}u(s)=\int_D K(t, s) u(s) d s,
\label{operator}
\end{equation}
\end{minipage}
\begin{minipage}{.5\linewidth}%
\begin{equation}
K(t, s) =\mathbbm{T} \circ \hat{K}(t, s)\circ \mathbbm{T}^{-1}.
\label{spectral operator}
\end{equation}
\end{minipage}
\normalsize


This paper refers to these domains as spectral domains, such as Fourier \cite{li2020fourier}, Laplace \cite{cao2023lno}, and Wavelet \cite{gupta2021multiwavelet} domain, where operators can be directly parameterized using either linear layers or simple neural networks.
In the realm of TSF, the operator can be abstracted as a mapping from a function $u(s)$ about historical time to a function $y(t)$ about future time.
\para{Continuous-Time SSM.}
Defined by Equation \eqref{eq:ssm}, SSMs are parameterized maps that transform the input $u(t)$ into an $N$-dimensional latent space and project it onto the output $y(t)$. In this paper, we focus on single-input/output (\textbf{SISO} \cite{gu2021efficiently}) SSM for simplicity, and our theoretical framework can be easily extended to multi-input/output (\textbf{MIMO} \cite{smith2022simplified}) SSM with input $u\in D$ dimension.

\begin{minipage}{.49\linewidth}
\begin{subequations}\label{eq:ssm}
\begin{align}
  \label{eq:ssm-a}
  x'(t) &= \bm{A}x(t) + \bm{B}u(t)  \\
  \label{eq:ssm-b}
  y(t)  &= \bm{C}x(t)
\end{align}
\end{subequations}
\end{minipage}
\begin{minipage}{.49\linewidth}
\begin{subequations}\label{eq:ssm-conv}
\begin{align}
\label{eq:ssm-conv-a}
  K(t) &= \bm{C} e^{t\bm{A}} \bm{B} \\
  \label{eq:ssm-conv-b}
  y(t) &= (K \ast u)(t) %
\end{align}
\end{subequations}
\end{minipage}

The two terms in Lemma \ref{general solution} are also known as \textit{zero-input response} component and \textit{zero-state response} component \cite{williams2007linear}. When neglecting the influence of the initial state on dynamic evolution, specifically by setting $x(t_0)=0$, SSM can be represented as a convolution kernel as Equation \eqref{eq:ssm-conv}.
\begin{lemma} 
    For a differential equation of $x^{\prime}(t)=\bm{A} x(t)+\bm{B} u(t)$, its general solution is:
$$
x(t)=e^{\bm{A}\left(t-t_0\right)} x\left(t_0\right)+\int_{t_0}^t e^{\bm{A}(t-s)} \bm{B} u(s) \mathrm{d}s.
$$
\label{general solution}
\end{lemma}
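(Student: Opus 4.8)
The plan is to solve this first-order linear inhomogeneous system by the integrating-factor (variation of parameters) method, using only standard properties of the matrix exponential $e^{\bm{A}t}$. First I would rewrite the equation as $x'(t) - \bm{A}x(t) = \bm{B}u(t)$ and left-multiply by the integrating factor $e^{-\bm{A}t}$. The algebraic fact I would invoke is that $\frac{d}{dt}e^{-\bm{A}t} = -\bm{A}e^{-\bm{A}t} = -e^{-\bm{A}t}\bm{A}$; since every factor is a power series in the single matrix $\bm{A}$, all of these exponentials commute with $\bm{A}$ and with one another. By the product rule, the left-hand side then collapses to a total derivative:
$$\frac{d}{dt}\left(e^{-\bm{A}t}x(t)\right) = e^{-\bm{A}t}\bm{B}u(t).$$

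Second, I would integrate both sides from $t_0$ to $t$ and apply the fundamental theorem of calculus on the left, obtaining
$$e^{-\bm{A}t}x(t) - e^{-\bm{A}t_0}x(t_0) = \int_{t_0}^t e^{-\bm{A}s}\bm{B}u(s)\,\mathrm{d}s.$$
Left-multiplying by $e^{\bm{A}t}$ and using the semigroup identity $e^{\bm{A}t}e^{-\bm{A}s} = e^{\bm{A}(t-s)}$ (valid because only $\bm{A}$ appears, so the exponents simply add) isolates $x(t)$ in exactly the claimed form: the zero-input term $e^{\bm{A}(t-t_0)}x(t_0)$ plus the zero-state convolution term $\int_{t_0}^t e^{\bm{A}(t-s)}\bm{B}u(s)\,\mathrm{d}s$.

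I expect the only subtlety, rather than a genuine obstacle, to be justifying the matrix-exponential manipulations, namely that $e^{\bm{A}t}$ is differentiable with the stated derivative and that the semigroup law $e^{\bm{A}t}e^{\bm{A}s} = e^{\bm{A}(t+s)}$ holds. Both follow directly from the convergent power-series definition together with the commutativity of $\bm{A}$ with its own powers, and notably no diagonalizability assumption on $\bm{A}$ is required. As an independent confirmation (and a shorter alternative route), I would verify the candidate solution by direct substitution: differentiating $e^{\bm{A}(t-t_0)}x(t_0) + \int_{t_0}^t e^{\bm{A}(t-s)}\bm{B}u(s)\,\mathrm{d}s$ via the Leibniz rule yields the $\bm{A}x(t)$ term from differentiating each exponential and the $\bm{B}u(t)$ term from the moving upper limit, since the boundary contribution at $s=t$ gives $e^{\bm{A}\cdot 0}\bm{B}u(t) = \bm{B}u(t)$. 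This recovers the original ODE, and evaluating at $t=t_0$ confirms the initial condition $x(t_0)$, completing the check.
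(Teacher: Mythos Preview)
Your proposal is correct and uses essentially the same integrating-factor argument as the paper: multiply by $e^{-\bm{A}t}$, collapse the left side to a total derivative, integrate from $t_0$ to $t$, and unwind with the semigroup law. The only cosmetic difference is that the paper first carries this out in the scalar case and then separately derives the matrix exponential via its power series before asserting the generalization, whereas you work directly in the matrix setting (and add the direct-substitution check), which is if anything cleaner.
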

\vspace{-1em}
$\mathsf{Hippo}$ \cite{gu2020hippo} provides a mathematical framework for deriving the $\bm{AB}$ matrix: Given an input $u(s)$, a set of closed-recursive orthogonal basis $p_n(t,s)$ that $\int_{-\infty}^t p_m(t,s)p_n(t,s)\mathrm{d}s=\delta_{m,n}$, and an inner product probability measure $\mu(t,s)$. This enables us to project the input $u(s)$ onto the basis along the time dimension in Hilbert space $\mathcal{H}_\mu$, as shown in Equation \eqref{hippo}. When we differentiate this projection coefficient w.r.t. $t$ (Equation \eqref{eq:ssm-a}), the self-similar relation \cite{gu2020hippo} allows $\frac{\mathrm{d}}{\mathrm{d} t} x_n(t)$ to be expressed as a Linear ODE in terms of $x_n(t)$ and $u(s)$, with temporal dynamics determined by input $u(s)$.

\begin{minipage}{.5\linewidth}%
\begin{equation}
\langle u, p_n\rangle_{\mu} = \int_{-\infty}^t u(s)p_n(t,s)\omega(t,s)\mathrm{d}s,
\label{hippo}
\end{equation}
\end{minipage}
\begin{minipage}{.5\linewidth}%
\begin{equation}
x_n(t) = \int u(s)K_n(t,s)\mathbbm{I}(t,s)\mathrm{d}s.
\label{ssm kernel}
\end{equation}
\end{minipage}
\normalsize

From Equation (\ref{hippo}), it is evident that the projection coefficients depend on both the terms $\omega(t,s)$ and $p_n(t,s)$. The GOBP theory \cite{gu2022train} combines them into a single term $K_n(t,s)$, called SSM kernel, and constrains the integration interval using indicator functions $\mathbbm{I}(t,s)$ (see Equation \ref{ssm kernel}).

\begin{lemma} \label{nonlinear}
    Any nonlinear, continuous differentiable dynamic $\dot{x}(t) =f(x(t), u(t), t)$ can be represented by its linear nominal SSM ($\tilde{x}, \tilde{u}$) plus a third-order infinitesimal quantity:
$$
\dot{x}(t)=\bm{A} x(t)+\bm{B}u(t) +\mathcal{O}(x,u); ~~~~~ \bm{A}=\left.\frac{\partial f}{\partial x}(x, u)\right|_{\tilde{x}, \tilde{u}}; \bm{B}=\left.\frac{\partial f}{\partial u}(x, {u})\right|_{\tilde{x}, \tilde{u}}.
$$
\end{lemma}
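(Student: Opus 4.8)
The plan is to treat this as the classical linearization of a nonlinear control system about a nominal (reference) trajectory, realized through a multivariate Taylor expansion of the vector field $f$. First I would fix a nominal pair $(\tilde{x}(t), \tilde{u}(t))$ satisfying the nominal dynamics $\dot{\tilde{x}} = f(\tilde{x}, \tilde{u}, t)$, and introduce the deviation variables $\delta x = x - \tilde{x}$ and $\delta u = u - \tilde{u}$. The whole argument then reduces to quantifying how $f(x, u, t)$ departs from $f(\tilde{x}, \tilde{u}, t)$ as a function of these deviations, which is exactly what a Taylor expansion supplies.

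Next I would apply Taylor's theorem to $f$ at the point $(\tilde{x}, \tilde{u})$, expanding jointly in $(x, u)$:
$$
f(x, u, t) = f(\tilde{x}, \tilde{u}, t) + \left.\frac{\partial f}{\partial x}\right|_{\tilde{x}, \tilde{u}} (x - \tilde{x}) + \left.\frac{\partial f}{\partial u}\right|_{\tilde{x}, \tilde{u}} (u - \tilde{u}) + R.
$$
Reading off the first-order coefficients immediately gives the claimed Jacobians $\bm{A} = \left.\partial f / \partial x\right|_{\tilde{x},\tilde{u}}$ and $\bm{B} = \left.\partial f / \partial u\right|_{\tilde{x},\tilde{u}}$. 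Substituting $\dot{x} = f(x,u,t)$ on the left and subtracting the nominal equation $\dot{\tilde{x}} = f(\tilde{x},\tilde{u},t)$ isolates the perturbation ODE $\dot{\delta x} = \bm{A}\,\delta x + \bm{B}\,\delta u + R$. Relabeling the deviations back to $(x, u)$ (equivalently, placing the nominal operating point at the origin) yields precisely the stated form $\dot{x} = \bm{A}x + \bm{B}u + \mathcal{O}(x, u)$, where $\mathcal{O}(x,u)$ absorbs the Taylor remainder $R$ together with the constant offset $f(\tilde{x},\tilde{u}) - \bm{A}\tilde{x} - \bm{B}\tilde{u}$.

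The main obstacle is controlling the remainder $R$ and justifying that it is a genuine higher-order infinitesimal, which is where the regularity of $f$ becomes delicate. The bare hypothesis ``continuously differentiable'' only guarantees $R = o(\norm{(\delta x, \delta u)})$, whereas a clean quadratic (let alone cubic) bound requires $f \in C^2$ (resp. $C^3$), so that one may write $R$ in integral form, $R = \int_0^1 (1-\tau)\, D^2 f(\tilde{x} + \tau\delta x,\, \tilde{u} + \tau \delta u)\big[(\delta x, \delta u)\big]^{\otimes 2}\,\mathrm{d}\tau$, and bound it by $\norm{(\delta x, \delta u)}^2$. I would therefore either strengthen the smoothness assumption and expand one order further to exhibit the explicit order of $\mathcal{O}(x,u)$, or interpret the ``third-order'' claim as the qualitative statement that, near a smooth nominal trajectory with small deviations, the neglected term vanishes strictly faster than the retained linear part as $(x,u) \to (\tilde{x}, \tilde{u})$. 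The remaining steps --- checking that Jacobian evaluation is consistent with the time dependence of the nominal trajectory, and verifying that $\mathcal{O}(x,u)$ decays faster than the linear term --- are routine consequences of Taylor's theorem.
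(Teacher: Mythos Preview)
Your proposal is correct and follows essentially the same approach as the paper: a multivariate Taylor expansion of $f$ about the nominal trajectory $(\tilde{x},\tilde{u})$, identification of the Jacobians as $\bm{A}$ and $\bm{B}$, and passage to the deviation variables $x_\delta=x-\tilde{x}$, $u_\delta=u-\tilde{u}$ to obtain $\dot{x}_\delta=\bm{A}x_\delta+\bm{B}u_\delta+\mathcal{O}(x,u)$. Your discussion of the remainder order is in fact more careful than the paper's, which simply labels the tail ``higher-order terms'' without addressing the gap between $C^1$ regularity and the ``third-order'' claim.
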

Lemma \ref{nonlinear} provides a theoretical guarantee in modeling non-linear and non-stationary temporal dependencies of real-world time series by linear SSM (Equation \eqref{eq:ssm}) with limited error.
\begin{definition}
We call SSM system an \textbf{invertible orthogonal SSM (IOSSM)}, with basis $p_n(t,s)$ and measure $\omega(t,s)$ in Hippo($\bm{AB}$) and corresponding reconstruction matrix $\bm{C}$ (Figure \ref{fig1}c), at all time t,
 $$\textbf{IOSSM}(\bm{AB}): u(s)\mapsto x(t), \quad\quad\quad\quad \textbf{IOSSM}(\bm{C}): x(t)\mapsto u(s).$$
\label{def: Issm}
\end{definition}
\vspace{-2em}
\begin{remark}
    This paper also adheres to other definitions in HTTYH (GOBP) \cite{gu2022train}, such as OSSM (Orthogonal SSM) and TOSSM (Time-invariant Orthogonal SSM).
\end{remark}
Previous SSMs treat matrix $\bm{AB}$ and $\bm{C}$ in isolation, and $\bm{C}$ is typically understood as coefficients for linear combinations of kernel $e^{t\bm{A}}\bm{B}$ \cite{gu2022train} or as a feature learning module \cite{gu2021combining}. However, these interpretations do not align well with $\mathsf{Hippo}$ theory, which prioritizes the function projection and reconstruction rather than the mapping from input $u$ to output $y$. To address this discrepancy, we propose the concept of {IOSSM}, as stated in Definition \ref{def: Issm}. IOSSM defines a spectral transformation $\mathbbm{T}$ and inverse transformation $\mathbbm{T}^{-1}$ by IOSSM($\bm{ABC}$) in formula like \eqref{eq:hippo-legs} and \eqref{eq:hippo-legt}. 

\small
\begin{minipage}{.49\linewidth}%
\begin{equation}
  \label{eq:hippo-legs}
  \qquad
  \begin{aligned}%
    \bm{A}_{nk}
    &=
    -
    \begin{cases}
      (2n+1)^{\frac{1}{2}}(2k+1)^{\frac{1}{2}} & n > k \\
      n+1 & n = k \\
      0 & n < k
    \end{cases}
    \\
    \bm{B}_n &= (2n+1)^{\frac{1}{2}}
    \quad
    \bm{C}_n = (2n+1)^{\frac{1}{2}} L_{n}(2n -1)
    \\
    & (\textbf{HiPPO-LegS matrix in IOSSM})
  \end{aligned}
\end{equation}
\end{minipage}
\hspace{10pt}
\begin{minipage}{.49\linewidth}%
\begin{equation}
  \label{eq:hippo-legt}
  \begin{aligned}%
    & \bm{A}_{nk}=
  -(2n+1)^{\frac{1}{2}}(2k+1)^{\frac{1}{2}}\\
  &\cdot
  \begin{cases}
    1 & k \le n \\
    (-1)^{n-k} & k \ge n
  \end{cases}
  \\
  \bm{B}_n &= (2n+1)^{\frac{1}{2}}
  \quad
  \bm{C}_n = L_{n}(2n -1)
    \\
    & (\textbf{HiPPO-LegT matrix in IOSSM})
  \end{aligned}
\end{equation}
\end{minipage}
\begin{minipage}{.49\linewidth}%
\begin{equation}
  \label{eq:hippo-legsd}
  \begin{aligned}%
    \bm{A}^{(N)}_{nk}
  &=
    -
    \begin{cases}
      (n+\frac{1}{2})^{1/2}(k+\frac{1}{2})^{1/2} & n > k \\
      \frac{1}{2} & n = k \\
      (n+\frac{1}{2})^{1/2}(k+\frac{1}{2})^{1/2} & n < k
    \end{cases}
    \\
    \bm{A} &= \bm{A}^{(N)} - \operatorname*{rank}(1),
    \qquad
    \bm{A}^{(D)} := \operatorname*{eig}(\bm{A}^{(N)})
    \\
    & (\textbf{Normal / DPLR form of HiPPO-LegS})
  \end{aligned}
\end{equation}
\end{minipage}
\hspace{10pt}
\begin{minipage}{.49\linewidth}%
\begin{equation}
  \label{eq:hippo-legtd}
  \begin{aligned}%
    \bm{A}^{(N)}_{nk}
  &=
    -
    \begin{cases}
      (2n+1)^{\frac{1}{2}}(2k+1)^{\frac{1}{2}} & n < k, k~odd \\
      0 & else \\
      (2n+1)^{\frac{1}{2}}(2k+1)^{\frac{1}{2}} & n > k, n~odd
    \end{cases}
    \\
    \bm{A} &= \bm{A}^{(N)} - \operatorname*{rank}(2),
    \qquad
    \bm{A}^{(D)} := \operatorname*{eig}(\bm{A}^{(N)})
    \\
    & (\textbf{Normal / DPLR form of HiPPO-LegT})
  \end{aligned}
\end{equation}
\end{minipage}
\normalsize
\looseness=-1
\para{Dynamic Spectral Operator.}
Based on definitions of SSM Kernel and IOSSM, we can establish a connection to the commonly used spectral transformation techniques in time series analysis.
\begin{figure}[t]
\begin{adjustbox}{width=1\columnwidth, center}
\centering
\includegraphics[width=1\columnwidth]{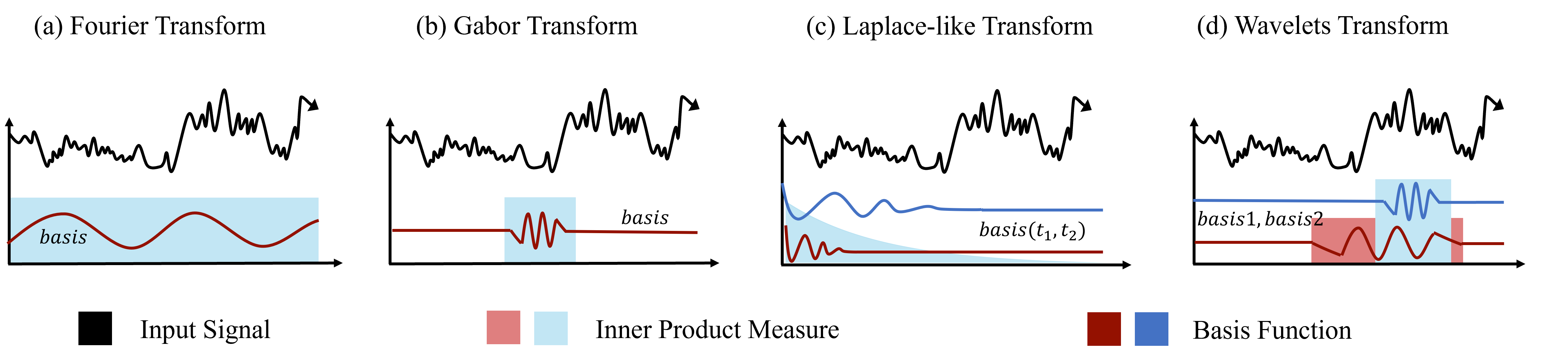}
\end{adjustbox}
\caption{Classical spectral transform with basis function and inner product measure.}
\label{SSM as TS transform}
\end{figure}
\begin{proposition} \label{fourier}The IOSSM($\bm{A}$,$\bm{B}$) is a Fourier Transform (Figure \ref{SSM as TS transform}a) with
    $$\omega(t,s) = \mathbbm{1}_{[0,1]}, \qquad \qquad \qquad p_n(t, s) =  e^{-its}.$$
\end{proposition}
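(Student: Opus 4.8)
The plan is to identify the forward IOSSM map $u(s)\mapsto x(t)$ directly with the Fourier transform by substituting the prescribed basis and measure into the projection of Equation~\eqref{hippo}, and then to confirm that the same pair $(p_n,\omega)$ regenerates a legitimate linear SSM of the form~\eqref{eq:ssm-a} through the self-similar relation. First I would insert $\omega(t,s)=\mathbbm{1}_{[0,1]}$ and $p_n(t,s)=e^{-its}$ into Equation~\eqref{hippo}, giving
\begin{equation*}
x_n(t)=\angles{u,p_n}_{\mu}=\int_{-\infty}^{t} u(s)\,e^{-its}\,\mathbbm{1}_{[0,1]}(s)\,\mathrm{d}s .
\end{equation*}
Reading the output coordinate as the spectral (frequency) variable, the right-hand side is exactly the windowed Fourier coefficient of $u$ over the unit interval, so the projection $\mathbbm{T}=\textbf{IOSSM}(\bm{A},\bm{B})$ coincides with the Fourier transform; this settles the identity at the level of the transform map.

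Second, I would show this projection is realized by an SSM, justifying the labels $\bm{A},\bm{B}$. The crux is that complex exponentials are eigenfunctions of the time-translation generator, so $\partial_t p_n=\lambda_n p_n$ with $\lambda_n$ purely imaginary. Differentiating $x_n(t)$ in $t$ then yields $\dot x_n(t)=\lambda_n x_n(t)+B_n u(t)$, where the driving term is the boundary flux produced at the moving upper limit $s=t$ of Equation~\eqref{hippo}. Matching this ODE against Lemma~\ref{general solution} with $x(t_0)=0$ gives $x_n(t)=\int_{t_0}^{t}e^{\lambda_n(t-s)}B_n u(s)\,\mathrm{d}s$, i.e. a diagonal $\bm{A}=\diag(\lambda_n)$ with imaginary spectrum and a constant vector $\bm{B}$; inserting these into the kernel~\eqref{eq:ssm-conv-a} shows $K(t)=\bm{C}e^{t\bm{A}}\bm{B}$ is a superposition of complex sinusoids, the hallmark of a Fourier kernel. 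The invertibility demanded by the IOSSM definition then follows from the classical orthonormality $\int p_m(t,s)\overline{p_n(t,s)}\,\omega(t,s)\,\mathrm{d}s=\delta_{mn}$ of the exponentials on the unit window, which supplies the reconstruction $\textbf{IOSSM}(\bm{C}):x(t)\mapsto u(s)$.

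The step I expect to be the main obstacle is the careful treatment of the moving boundary when differentiating: because the upper integration limit in Equation~\eqref{hippo} is $t$, the Leibniz rule contributes a flux term $u(t)\,p_n(t,t)\,\omega(t,t)$ at $s=t$, and I must verify that this collapses precisely into the single driving term $B_n u(t)$ rather than leaking into other coefficients. This is exactly where the Fourier case is more favorable than the Legendre constructions of Equations~\eqref{eq:hippo-legs}--\eqref{eq:hippo-legt}: since the exponentials are simultaneously the basis and the eigenfunctions of $\partial_t$, the induced $\bm{A}$ has no off-diagonal coupling, so the only nontrivial bookkeeping is this boundary flux together with the normalization that pins down the discrete frequencies $\lambda_n$. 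Once these are handled, the transform identification, the diagonal imaginary $\bm{A}$, the constant $\bm{B}$, and the orthonormal reconstruction $\bm{C}$ together establish the proposition.
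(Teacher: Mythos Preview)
The paper does not supply a standalone proof of this proposition; it is offered as an immediate identification obtained by inserting the stated $\omega$ and $p_n$ into the HiPPO projection of Equation~\eqref{hippo} and recognising the resulting integral as the Fourier transform, with only a one-sentence remark on its interpretive consequences. Your first step is exactly this substitution, so it coincides with the paper's (implicit) argument.

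Your second step---differentiating to extract a diagonal $\bm{A}$ with imaginary spectrum, a constant $\bm{B}$, and then checking invertibility through the orthonormality of the exponentials---goes well beyond what the paper provides and actually supplies the SSM/IOSSM structure that the proposition tacitly assumes. This added rigor is useful, but be aware that the paper's notation $p_n(t,s)=e^{-its}$ is loose about the role of $n$ versus $t$: the frequency label is conflated with the running time variable. Your eigenfunction computation $\partial_t p_n=\lambda_n p_n$ with $\lambda_n$ independent of $s$ only holds after you reinterpret the basis as $e^{-i\omega_n s}$ (or $e^{i\lambda_n(t-s)}$) for a fixed discrete frequency $\omega_n$, rather than the literal $e^{-its}$; with the literal formula one gets $\partial_t e^{-its}=-is\,e^{-its}$, whose ``eigenvalue'' depends on $s$ and does not yield $\lambda_n x_n(t)$ under the integral. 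So the boundary-flux/Leibniz bookkeeping you flag as the main obstacle must be bolted to a specific indexing convention that the statement itself leaves ambiguous. Once that convention is fixed, your outline is sound and strictly more complete than the paper's treatment.
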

This particular system effectively utilizes the properties of trigonometric functions to represent the periodic characteristics of a time series in the spectral domain. Nonetheless, $\omega(t)$ and $p_n(t, s)$ are defined across the entire domain, making it challenging to capture the local features.
\begin{proposition} The IOSSM($\bm{A}$,$\bm{B}$) is a Gabor Transform (Figure \ref{SSM as TS transform}b) with
$$\omega(t,s) = \mathbbm{1}_{[0,1]}, \qquad \qquad \qquad p_n(t, s) =  g(t-\tau)\phi_n(t,s).$$
\end{proposition}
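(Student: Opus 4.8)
The plan is to proceed exactly as in the proof of Proposition \ref{fourier}: substitute the prescribed measure and basis into the IOSSM projection and show that the resulting coefficient coincides with the Gabor (windowed-Fourier) transform of Figure \ref{SSM as TS transform}b. Concretely, I would start from the projection defining IOSSM($\bm{A},\bm{B}$), namely $x_n(t)=\angles{u,p_n}_\mu=\int u(s)\,p_n(t,s)\,\omega(t,s)\,\mathrm{d}s$ from Equation \eqref{hippo}, and insert $\omega(t,s)=\mathbbm{1}_{[0,1]}$ together with $p_n(t,s)=g(t-\tau)\phi_n(t,s)$.

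First, the indicator $\mathbbm{1}_{[0,1]}$ plays the same role as in Proposition \ref{fourier}, fixing a finite-measure analysis domain; the genuine time-localization that distinguishes Gabor from plain Fourier is supplied by the window factor $g(t-\tau)$ inside the basis, which concentrates the analysis around the shift $\tau$. Second, I would identify $\phi_n(t,s)$ with the complex-exponential kernel of Proposition \ref{fourier}, so that $\phi_n$ carries the frequency content while $g$ carries the time content. Third, collecting the two factors gives $x_n(t)=\int_0^1 u(s)\,g(t-\tau)\,\phi_n(t,s)\,\mathrm{d}s$, which I would rearrange into the canonical analysis form $\int u(s)\,g(s-\tau)\,e^{-i\xi s}\,\mathrm{d}s$ after aligning the windowing variable and reading off the frequency index from $\phi_n$; this matches the Gabor transform and completes the identification.

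The hard part will be confirming that the windowed system is still a genuine IOSSM in the sense of Definition \ref{def: Issm}, i.e.\ that the coefficients $\{x_n\}$ actually arise from a linear state equation $x'=\bm{A}x+\bm{B}u$. Differentiating $x_n(t)$ in $t$ must close into a linear combination of $\{x_m\}$ and $u$, and the moving window $g(t-\tau)$ contributes a window-derivative term $g'(t-\tau)$ (together with boundary terms from the $\mathbbm{1}_{[0,1]}$ support) that is absent in the pure-Fourier case; showing that $g'$ re-expands in the same windowed basis is the crux, and it is exactly here that one must restrict $g$ to admissible windows (e.g.\ the Gaussian, where $g'(x)=-x\,g(x)$) so that the self-similar recursion terminates. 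Orthogonality of $\phi_n$ and the admissibility/resolution-of-identity condition on $g$, e.g.\ $\norm{g}_2=1$, then secure the reconstruction side, but I expect the window-derivative bookkeeping in the self-similar step to be the principal technical obstacle.
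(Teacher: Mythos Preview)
Your opening plan—substitute $\omega=\mathbbm{1}_{[0,1]}$ and $p_n=g(t-\tau)\phi_n$ into the projection $\angles{u,p_n}_\mu$ of Equation~\eqref{hippo} and recognise the Gabor analysis formula—is precisely the paper's argument, and in fact is \emph{all} of it: the paper supplies no separate proof for this proposition in Appendix~\ref{proof}, treating it (like the companion Fourier, Laplace and Wavelet propositions) as a definitional identification rather than a theorem requiring further verification.

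Two points where you diverge. First, you specialise $\phi_n$ to the complex exponential of Proposition~\ref{fourier}; the paper explicitly leaves $\phi_n$ as ``any orthogonal basis'' and, in the sentence immediately following the proposition, names two distinct instantiations—translated Legendre $L_n$ giving \textit{HiPPO-LegT} and a Fourier basis giving \textit{S4-FouT} (the short-time Fourier transform). The proposition is a template, not a single transform. Second, your ``hard part''—checking that $x_n'(t)$ closes into a linear ODE in $\{x_m\}$ and $u$ for a general window $g$, and worrying about the $g'(t-\tau)$ contribution—is work the paper neither performs nor claims to. The statement asserts only that the IOSSM projection with these choices coincides with the Gabor coefficient; the existence of concrete $(\bm{A},\bm{B})$ matrices is handled by pointing to the specific instances (LegT, FouT) where the $\mathsf{Hippo}$ self-similar recursion is already known to close. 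Your concern is mathematically legitimate, but it goes well beyond what the paper asserts or proves here.
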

The Gabor function $g(t-\tau)$ defines a window to truncate any orthogonal basis $\phi_n(t,s)$ and indirectly influence the $\omega(t)=\mathbbm{I}[t,t+\tau]$, enabling the system to capture the local dynamic characteristics. \textbf{\textit{HiPPO-LegT}} \eqref{eq:hippo-legt} use translated Legendre basis ${L_n}$, \textbf{\textit{S4-FouT}} \cite{gu2022train} utilize Fourier basis, known as the short-time Fourier transform, and Convolution Neural Network can be regarded as learnable basis.
\begin{proposition} The IOSSM($\bm{A}$,$\bm{B}$) is a Laplace Transform with
    $$\omega(t,s) = e^{-\sigma t}, \qquad \qquad \qquad p_n(t, s) =  e^{-its}.$$
\end{proposition}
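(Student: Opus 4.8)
The plan is to follow the same template as the Fourier and Gabor propositions: substitute the prescribed measure $\omega$ and basis $p_n$ into the IOSSM projection, show that the resulting integral is term-by-term the classical Laplace transform (the forward map IOSSM($\bm{A}$,$\bm{B}$)), and then exhibit the reconstruction map IOSSM($\bm{C}$) as the inverse Laplace transform, so that the pair genuinely forms an invertible system in the sense of Definition \ref{def: Issm}.

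First I would write out the forward map using the SSM-kernel form \eqref{ssm kernel}, namely $x_n(t) = \int u(s)\, K_n(t,s)\, \mathbbm{I}(t,s)\, \mathrm{d}s$ with $K_n(t,s) = p_n(t,s)\,\omega(t,s)$. Inserting $p_n(t,s)=e^{-its}$ together with the exponentially decaying measure, I would fuse the oscillatory factor and the decay factor into a single complex exponential $e^{-(\sigma + it)s}$ and introduce the complex Laplace variable $p = \sigma + it$. The content of this step is the identification $K_n(t,s) = e^{-ps}$: the real part $\sigma$ contributed by the measure controls the decay (equivalently, the negative real part of the corresponding eigenvalue of $\bm{A}$), while the imaginary part $t$ carried by $p_n$ encodes the oscillation. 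With $\mathbbm{I}$ fixing the one-sided integration domain, $x(t)$ is then exactly the one-sided Laplace transform $\mathcal{L}\{u\}(p)$ evaluated along the vertical line $\mathrm{Re}(p)=\sigma$.

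Next I would establish the inverse direction. Since $\mathcal{L}\{u\}(\sigma+it)$ equals the Fourier transform of the damped function $s\mapsto u(s)e^{-\sigma s}$, Fourier inversion recovers $u(s)e^{-\sigma s}$ and hence $u$ itself; re-expressed in the Laplace variable this is the Bromwich contour integral $u(s)=\frac{1}{2\pi i}\int_{\sigma - i\infty}^{\sigma+i\infty} x(p)\, e^{ps}\,\mathrm{d}p$, which I would identify with IOSSM($\bm{C}$). Checking that this reconstruction is well defined, i.e. that $\sigma$ lies to the right of the abscissa of convergence of $u$ so that the forward integral converges absolutely and the inversion contour is admissible, is where the real care is needed, and it is precisely the step that separates the Laplace case from the purely oscillatory Fourier case of Proposition \ref{fourier}.

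The main obstacle I anticipate is exactly this convergence and placement bookkeeping for the damping factor: I must ensure the decay acts on the integration variable so that the combined kernel is the genuine Laplace kernel $e^{-(\sigma+it)s}$ rather than a $t$-dependent prefactor multiplying a plain Fourier transform, and I must pin down the admissible strip of $\sigma$ so that both the forward projection and the Bromwich inversion converge and the map qualifies as an IOSSM. Once the complex frequency $p=\sigma+it$ is correctly assembled and its region of convergence fixed, the identification with the Laplace transform and its inverse follows immediately from classical transform theory.
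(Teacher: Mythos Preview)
The paper does not actually supply a proof of this proposition (nor of the parallel Fourier, Gabor, and Wavelet propositions). These four statements are presented in the main text as definitional identifications: one substitutes the prescribed $\omega$ and $p_n$ into the $\mathsf{Hippo}$ projection \eqref{hippo} and recognises the resulting integral as the named classical transform. No further argument is given, and the surrounding prose even hedges by calling S4-LegS a ``Laplace-like transform'' rather than a literal one.

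Your proposal is therefore considerably more careful than anything the paper does. The overall plan --- identify the forward projection with the one-sided Laplace integral and IOSSM($\bm{C}$) with Bromwich inversion --- is the right shape for an honest proof. You also put your finger on the only real issue: as stated, $\omega(t,s)=e^{-\sigma t}$ depends on $t$, not on the integration variable $s$, so literal substitution into \eqref{hippo} yields $e^{-\sigma t}e^{-its}$, which is a $t$-dependent scalar multiplying a pure Fourier kernel, not the genuine Laplace kernel $e^{-(\sigma+it)s}$. That is a looseness in the paper's statement, not a flaw in your reasoning; the proposition is meant heuristically, and your plan goes through once you adopt the intended reading that the exponential decay acts on the integration variable (i.e.\ effectively $\omega=e^{-\sigma s}$), after which the convergence/abscissa bookkeeping you outline is exactly what is needed.
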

Exponential decay measures benefit long-term dynamics modeling. In the case of $\textbf{\textit{S4-LegS}}(\bm{A}$,$\bm{B})$, a variant of $\textbf{\textit{Hippo-LegS}}(\frac{1}{t}\bm{A},\frac{1}{t}\bm{B})$ \eqref{eq:hippo-legs} without the $\frac{1}{t}$ term, it resembles a Laplace-like transform (Figure \ref{SSM as TS transform}c) with exponential decay in both the measure and the Legendre basis function \cite{gu2022train}.
\begin{proposition} The IOSSM($\bm{A}$,$\bm{B}$) is a Wavelet Transform (Figure \ref{SSM as TS transform}d) with
    $$\omega(t,s) = \mathbbm{1}_{[0,1]}, \qquad \qquad \qquad p_n(t, s) =  {|m|}^{-\frac{1}{2}}\phi_{n}(\frac{t-\tau}{m}).$$
\end{proposition}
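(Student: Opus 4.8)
The plan is to instantiate the general IOSSM projection of Equations~\eqref{hippo}--\eqref{ssm kernel} with the wavelet-specific measure and basis, and then verify the three defining ingredients of an IOSSM in turn: orthogonality of the basis, the linear-ODE (self-similar) structure that produces $\bm{A}$ and $\bm{B}$, and the reconstruction map $\bm{C}$. First I would substitute $\omega(t,s)=\mathbbm{1}_{[0,1]}$ and $p_n(t,s)=|m|^{-\frac{1}{2}}\phi_n\!\big(\tfrac{t-\tau}{m}\big)$ into the inner product $\langle u,p_n\rangle_\mu=\int u(s)\,p_n(t,s)\,\omega(t,s)\,\mathrm{d}s$, so that each coefficient $x_n(t)$ becomes exactly the continuous (multi)wavelet coefficient of $u$ at scale $m$, translation $\tau$, and order $n$. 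Identifying this projection with the classical wavelet transform is then immediate from matching definitions, exactly paralleling the Fourier, Gabor, and Laplace cases already treated.

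Next I would check the orthogonality hypothesis $\int p_m\,p_n\,\mathrm{d}s=\delta_{mn}$ demanded by the $\mathsf{Hippo}$ framework. The normalization factor $|m|^{-\frac{1}{2}}$ is chosen precisely so that dilation preserves the $L^2$ norm, and the multiwavelet family $\{\phi_n\}$ underlying \cite{gupta2021multiwavelet} is orthonormal by construction; hence the scaled–translated system remains orthonormal, so the projection is a genuine OSSM. Invertibility (the ``I'' in IOSSM) then follows from completeness of the wavelet basis: the reconstruction formula expressing $u$ as the superposition $\sum_n x_n(t)\,p_n(t,s)$ furnishes IOSSM$(\bm{C}):x(t)\mapsto u(s)$, realizing the transform pair $\mathbbm{T},\mathbbm{T}^{-1}$ of Equation~\eqref{spectral operator}.

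The core of the argument is recovering the dynamic structure, i.e.\ producing $\bm{A},\bm{B}$ with $x'(t)=\bm{A}x(t)+\bm{B}u(t)$. For this I would differentiate $x_n(t)$ with respect to $t$ and exploit the \emph{two-scale refinement relation} of the wavelet basis, which expresses $\phi_n$ at one resolution as a finite linear combination of dilated copies $\phi_k$; this is the wavelet analogue of the self-similar recurrence that the Legendre polynomials satisfy in HiPPO-LegS/LegT. The boundary term produced by the window $\omega=\mathbbm{1}_{[0,1]}$ contributes the $\bm{B}u(t)$ term, while the refinement coefficients assemble into $\bm{A}$.

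The main obstacle I expect is precisely this last step. Unlike the fixed-support Legendre case, the wavelet basis carries an explicit dilation parameter $m$, so differentiating in $t$ couples the scaling and translation variables, and the self-similar identity is a multiresolution (two-scale) relation rather than a single three-term recurrence. Controlling the resulting combination so that it \emph{closes} into a finite matrix $\bm{A}$, instead of coupling across infinitely many scales, is the delicate part; I would handle it by restricting to a fixed finite-order multiwavelet system and using the compact support of $\phi_n$ to truncate the boundary contributions, mirroring the normal/DPLR reductions of Equations~\eqref{eq:hippo-legsd}--\eqref{eq:hippo-legtd}.
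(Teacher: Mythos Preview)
Your first two paragraphs already contain everything the paper actually uses. The propositions on Fourier, Gabor, Laplace, and Wavelet transforms are treated in the paper as definitional observations: once one plugs the stated $\omega$ and $p_n$ into the projection formula \eqref{hippo}, the coefficient $x_n(t)$ is literally the wavelet coefficient of $u$ at scale $m$, shift $\tau$, order $n$; orthonormality and invertibility come from the standing properties of the multiwavelet family. No separate derivation appears in the appendix for this proposition, and the paper's intent is precisely the ``matching definitions'' you describe.

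Where your plan diverges is in paragraphs three and four: you attempt to \emph{derive} a closed finite-dimensional pair $(\bm{A},\bm{B})$ for the wavelet basis by differentiating in $t$ and invoking the two-scale refinement relation. The obstacle you yourself flag---that the refinement relation couples scales rather than indices and does not close into a single finite matrix---is genuine, and the paper does not attempt to overcome it. Instead, the paper's concrete realization (HiPPO-LegP, see the Remark following Corollary~\ref{legp good} and Appendix~\ref{compute legp}) takes the opposite route: the dynamic matrices $\bm{A},\bm{B}$ are kept \emph{identical} to those of HiPPO-LegT, and the multiscale/wavelet structure is imposed externally via the linear up/down projection operators $H,H^{\dagger}$ (the $\mathbbm{T},\mathbbm{T}^{-1}$ of Equation~\eqref{spectral operator}) acting on the state between scales. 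In other words, the wavelet ingredient enters as a post-hoc change of basis on $x(t)$, not as a new ODE. Your refinement-relation program is therefore an interesting but unnecessary detour for this statement, and the closure difficulty you anticipate is exactly why the paper does not pursue it.
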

In this scenario, the basis functions are scaled using parameter $m$ and shifted by parameter $\tau$, enabling flexible modeling of the dynamic characteristics of time series at multiple scales. This approach is particularly well-suited for capturing the non-stationary distribution properties of real-world time series. Appendix \ref{visual} offers an intuitive visual explanation of these concepts.

\looseness=-1
\para{HiPPO-LegP.} We set $p_n^{m\tau}(t, s) =  2^{m/2}L_n(2^{m}t-\tau)$ to obtain an SSM system based on piecewise Legendre polynomial basis, with segmentation performed at powers of 2. Lemma \ref{mean error} provides the approximation error of it with a finite basis. Appendix \ref{compute legp} presents a detailed implementation.

\begin{lemma}\cite{hu2024attractor} 
Suppose that the function $u:[0,1] \in \mathbbm{R}$ is $k$ times continuously differentiable, the piecewise polynomial $g\in\mathcal{G}_{r}^k$ approximates $u$ with mean error bounded as follows:
\small
$$
\left\|u-g\right\| \leq 2^{-r k} \frac{2}{4^k k !} \sup _{x \in[0,1]}\left|u^{(k)}(x)\right|.
$$
\label{mean error}
\end{lemma}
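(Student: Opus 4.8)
The plan is to prove the bound locally on each dyadic subinterval and then assemble it. By the definition of $\mathcal{G}_r^k$ (piecewise polynomials of degree $<k$ on a dyadic partition), a function $g$ in this class is, on each of the $2^r$ cells $I_j = [\,j 2^{-r}, (j+1)2^{-r}\,]$ of length $h = 2^{-r}$, an arbitrary polynomial of degree at most $k-1$. Hence it suffices to exhibit, on a single cell $I_j$, a degree-$(k-1)$ polynomial $g_j$ approximating $u|_{I_j}$ with the claimed accuracy; gluing the $g_j$ together produces an admissible $g \in \mathcal{G}_r^k$, and because the local bound will be uniform in $j$ it controls the global error on $[0,1]$.

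For the local step, I would take $g_j$ to be the Lagrange interpolant of $u$ at the $k$ Chebyshev nodes $x_1,\dots,x_k$ of $I_j$. The classical interpolation remainder formula gives, for each $x \in I_j$,
$$u(x) - g_j(x) = \frac{u^{(k)}(\xi_x)}{k!}\prod_{i=1}^{k}(x - x_i),$$
for some $\xi_x \in I_j$. This reduces the problem to estimating the nodal polynomial $\prod_i (x-x_i)$, which is precisely where the Chebyshev choice of nodes pays off.

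The crux is the extremal property of Chebyshev polynomials: among all monic polynomials of degree $k$ on an interval of length $h$, the rescaled monic Chebyshev polynomial has minimal sup-norm, equal to $2(h/4)^k = 2\,h^k/4^k$ (obtained by transporting $\widehat{T}_k$ with $\|\widehat{T}_k\|_\infty = 2^{1-k}$ from $[-1,1]$ through the affine map of scaling factor $h/2$). Since the nodal polynomial for Chebyshev interpolation is exactly this monic Chebyshev polynomial, $\sup_{x\in I_j}\left|\prod_i(x-x_i)\right| = 2\,h^k/4^k$. Substituting $h = 2^{-r}$ and combining with the remainder formula yields, uniformly on every cell,
$$\left|u(x) - g(x)\right| \le \frac{2\,h^k}{4^k\,k!}\sup_{[0,1]}\left|u^{(k)}\right| = 2^{-rk}\frac{2}{4^k k!}\sup_{[0,1]}\left|u^{(k)}\right|.$$
Since this holds pointwise on all of $[0,1]$, it bounds the sup-norm and, because $[0,1]$ carries unit measure, any mean ($L^1$ or $L^2$) norm as well, giving the stated inequality.

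I expect the main obstacle to be pinning down the exact constant rather than the overall structure. A naive Taylor-remainder expansion around a point of $I_j$ would give the weaker constant $1/k!$ in place of $2/(4^k k!)$; obtaining the sharp factor $2/4^k$ requires the Chebyshev minimax computation and a careful verification that the nodal polynomial for the scaled Chebyshev nodes really is the length-$h$ monic Chebyshev polynomial with sup-norm $2(h/4)^k$. A secondary point to confirm is the precise reading of $\mathcal{G}_r^k$ (degree $<k$ on each of the $2^r$ dyadic cells), so that an interpolant through $k$ nodes is genuinely admissible as the restriction of a member of $\mathcal{G}_r^k$.
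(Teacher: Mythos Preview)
Your proposal is correct and follows essentially the same route as the paper: partition $[0,1]$ into the $2^r$ dyadic cells, invoke the Chebyshev interpolation error bound on each cell (yielding the pointwise bound $2^{1-rk}/(4^k k!)\sup|u^{(k)}|$), and conclude by passing from the uniform bound to the mean-error bound. The only cosmetic difference is that the paper takes $g$ to be the $L^2$ projection $Q_r^k u$ and bounds its error by comparison with the Chebyshev interpolant, whereas you take the piecewise Chebyshev interpolant itself as $g$; both choices lie in $\mathcal{G}_r^k$ and give the same constant.
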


\begin{corollary} The SSM based on \textbf{\textit{HiPPO-LegP}} have stronger dynamic representation ability.
\label{legp good}
\end{corollary}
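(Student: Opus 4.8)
The plan is to make the informal phrase \emph{stronger dynamic representation ability} precise as a statement about reconstruction error: for a fixed state dimension $N$ (equivalently, a fixed number of projection basis functions in the IOSSM of Definition \ref{def: Issm}), the \textbf{\textit{HiPPO-LegP}} system reconstructs a target $u$ with a worst-case error no larger than — and, on the function classes relevant to TSF, strictly smaller than — that of the global Legendre systems \eqref{eq:hippo-legs} and \eqref{eq:hippo-legt}. First I would fix a common budget of degrees of freedom: the piecewise scheme spends $N$ on $2^{r}$ dyadic subintervals, each carrying a Legendre basis of degree $d$, so that $N \approx 2^{r}(d+1)$, while \textbf{\textit{HiPPO-LegS}}/\textbf{\textit{LegT}} spend the same $N$ on a single global Legendre expansion on $[0,1]$. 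Only with a matched budget is the comparison meaningful.

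Next I would invoke Lemma \ref{mean error} on each cell of the partition: because the dyadic mesh has width $2^{-r}$, the cited bound yields $\norm{u-g} \le 2^{-rk}\frac{2}{4^{k}k!}\sup_{x\in[0,1]}\abs{u^{(k)}(x)}$ for $u\in C^{k}$, i.e. an error decaying like $2^{-rk}$ in the refinement level. I would then contrast this with the truncation error of the global expansion, whose accuracy for a signal whose smoothness is limited by localized high-order variation is governed by the \emph{global} supremum $\sup_{[0,1]}\abs{u^{(k)}}$ and improves only polynomially in $N$. The decisive step is to exhibit the gap on piecewise-smooth, non-stationary inputs with sharp local features: there the global expansion stalls because its error is dominated by the most irregular region, whereas the piecewise bound localizes the derivative to each cell and retains the fast $2^{-rk}$ rate where $u$ is smooth while refining adaptively near the irregularities. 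Combined with Lemma \ref{nonlinear}, which already reduces a nonlinear dynamic to a linear SSM up to higher-order terms, this shows that refining the \textbf{\textit{HiPPO-LegP}} basis drives the reconstruction error down faster, which is precisely the claimed gain.

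I expect the main obstacle to lie in the fairness and rigor of the comparison rather than in either bound taken alone. Concretely: (i) pinning down a function class on which the piecewise and global schemes are genuinely comparable, so the advantage is not an artifact of counting basis functions differently; (ii) supplying, or at least citing, a matching \emph{lower} bound for the global Legendre system on that class, since Lemma \ref{mean error} only upper-bounds the LegP error and an upper bound alone cannot establish superiority; and (iii) transferring the pure approximation statement back through the IOSSM reconstruction map so it speaks about the SSM's modeling capacity rather than mere polynomial interpolation. A clean way to finesse (ii) is to argue at the level of achievable rates — exponential-in-$r$ decay for \textbf{\textit{HiPPO-LegP}} versus the stalled polynomial-in-$N$ decay of the global bases on non-stationary inputs — and to defer the sharpest constants to the empirical ablations the paper reports.
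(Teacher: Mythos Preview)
Your proposal is considerably more careful than what the paper actually does. In the paper, Corollary~\ref{legp good} is stated \emph{without any separate proof}: it is placed immediately after Lemma~\ref{mean error} and treated as a direct consequence of the exponential decay $2^{-rk}$ in the piecewise approximation bound. No matched-budget comparison to \textbf{\textit{HiPPO-LegS}}/\textbf{\textit{LegT}} is offered, no lower bound for the global schemes is invoked, and no function class is singled out; the phrase ``stronger dynamic representation ability'' is left informal and is later supported only empirically by the white-noise reconstruction experiment in Section~\ref{further exp} (Figure~\ref{complexity}, right).

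So your route differs from the paper's in a genuine way: you are attempting to turn an informal slogan into a theorem by (i) fixing a common degree-of-freedom budget, (ii) contrasting the $2^{-rk}$ rate from Lemma~\ref{mean error} with the behavior of a global Legendre truncation, and (iii) worrying about the missing lower bound. These are exactly the ingredients a rigorous version would need, and the obstacles you list---especially (ii), that an upper bound on LegP alone cannot establish superiority---are real. The paper simply does not address them; it relies on Lemma~\ref{mean error} plus the experiments. If your goal is to match the paper, a one-line appeal to Lemma~\ref{mean error} suffices. If your goal is to make the corollary a bona fide mathematical statement, your plan is the right scaffolding, but you should be aware that the paper does not supply the lower-bound or function-class pieces you would need to complete it.
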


\begin{remark}
    \textbf{\textit{HiPPO-LegP}} is a framework that can be extended to an arbitrary orthogonal basis. Since the piecewise-scale projection is a linear projection $\mathbbm{T}^{2}=\mathbbm{T}$ onto a subspace, preserving the system dynamic characteristics determined by $\bm{AB}$. Therefore, the $\bm{AB}$ matrix in \textbf{\textit{HiPPO-LegP}} is consistent with that of \textbf{\textit{HiPPO-LegT}} but with additional fixed or trainable $\mathbbm{T}$ and $\mathbbm{T}^{-1}$.
\end{remark}

Based on spectral neural operator and continuous-time SSM, we propose Dynamic Spectral Operator (\textbf{Definition} \ref{def: dynamic operator}). We call it "dynamic" because conventional integral operators focus on global mappings, whereas SSM can focus on dynamic integral transformations by altering the basis and measure w.r.t. time, facilitating an adaptive shift to the spectral space best suited for capturing temporal dynamics.

\begin{definition} SSM is a \textbf{Dynamic Spectral Operator} $\mathcal{K}:u\mapsto y$ by transform $\mathbbm{T}$ with integral kernel $e^{\bm{A}(t-s)}$. $\bm{C}$ can be regarded as $\hat{K}\circ\mathbbm{T}^{-1}$ or directly linear projection: $:x\mapsto y$ (Figure \ref{fig1}d).
\label{def: dynamic operator}
\end{definition}

\begin{proposition}\label{space skewing}The process of gradient updates in the IOSSM($\bm{ABC}$) is a spectral space skewing:
$$span\{\phi_i\} \rightarrow span\{\psi_i\}, \quad \psi_i=\chi(t,s)\circ\phi_i.$$
\end{proposition}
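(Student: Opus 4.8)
The plan is to treat the IOSSM not as a fixed spectral transform but as a point on a manifold of transforms parametrized by $(\bm{A},\bm{B},\bm{C})$, and then to show that one gradient step moves along this manifold in a direction that reweights (``skews'') the underlying basis by a common factor $\chi(t,s)$. I would start from the SSM-kernel representation in Equation~\eqref{ssm kernel}, where the projection coefficient is $x_n(t)=\int u(s)K_n(t,s)\mathbbm{I}(t,s)\dd s$ with $K_n$ assembling the basis $\phi_n(t,s)$ and the measure $\omega(t,s)$. Recalling $K_n(t,s)=\bm{C}_n e^{\bm{A}(t-s)}\bm{B}$ and writing $\bm{A}$ in the diagonalizable Normal/DPLR form of Equations~\eqref{eq:hippo-legsd}--\eqref{eq:hippo-legtd}, $\bm{A}=\bm{V}\bm{\Lambda}\bm{V}^{-1}$, the effective basis is built from the modes $\{e^{\lambda_i(t-s)}\}$ recombined through $\bm{V},\bm{B},\bm{C}$; at initialization these modes reconstruct exactly the orthonormal family $\{\phi_i\}$ of Hippo($\bm{AB}$).

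Next I would linearize the effect of an update $\theta\mapsto\theta-\eta\nabla_\theta\mathcal{L}$ on the basis. A perturbation $(\delta\bm{A},\delta\bm{B},\delta\bm{C})$ produces, to first order, a perturbation of each mode $e^{\lambda_i(t-s)}$ via the eigenvalue shifts $\delta\lambda_i$ and the eigenvector shift $\delta\bm{V}$. The key step is to show that, restricted to the directions that keep the system an IOSSM (Definition~\ref{def: Issm}) — i.e. those preserving invertibility and the orthogonality structure of Hippo($\bm{AB}$) — the admissible perturbations are exactly those induced by a reparametrization of the inner-product measure $\omega$. Using the tilting relation $\omega\mapsto\chi(t,s)^2\,\omega$, under which an orthonormal family must transform as $\phi_i\mapsto\chi(t,s)\circ\phi_i$ to stay orthonormal against the new measure (exactly as the exponential factor tilts the Legendre basis in \textit{S4-LegS}), I would match the first-order change of $K_n$ against the change induced by an infinitesimal tilt $\chi=1+\eta\,\dot\chi$, identifying $\dot\chi$ with the measure-direction component of the gradient. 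This yields $\psi_i=\chi(t,s)\circ\phi_i$ and hence $\mathrm{span}\{\phi_i\}\to\mathrm{span}\{\psi_i\}$.

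The main obstacle is establishing \emph{coherence}: that a single warping $\chi(t,s)$ transports every $\phi_i$ simultaneously, rather than the gradient perturbing each basis function independently. This is where the self-similar recursion underlying Hippo($\bm{AB}$) is essential — the same recursion that collapses the whole basis into one matrix $\bm{A}$ forces any structure-preserving deformation to act through the shared measure/tilting instead of through unrelated per-mode changes. Concretely, I would argue that the tangent space of the IOSSM manifold at initialization is spanned precisely by tilting deformations, so the measure-direction projection of the gradient is well defined and the induced map on the basis is multiplication by a common $\chi$, while the orthogonal complement (a pure change of $\bm{C}$) merely re-selects coefficients within the span and cannot enlarge it. A secondary technical point is controlling higher-order terms so that iterating updates keeps the transported family in the stated form, for which the third-order remainder in Lemma~\ref{nonlinear} guarantees the linearization is faithful.
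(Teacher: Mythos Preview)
Your proposal is considerably more elaborate than the paper's own argument. The paper does not analyze gradient flow on a manifold of IOSSMs at all: it simply recalls the HiPPO tilting construction, showing that for any scaling function $\chi^{(t)}(x)$ the family $p_n\chi$ is orthonormal with respect to the tilted density $\omega/\chi^2$ (after a normalization constant $\zeta(t)=\int\omega/\chi^2$), and then asserts without further argument that a parameter update ``can be regarded as'' such a skewing of basis or measure. There is no linearization of the loss, no tangent-space computation, and no coherence argument; the proposition is offered as an interpretation rather than a theorem with hypotheses to discharge.

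Your plan is more ambitious but has two real gaps. First, the step you yourself flag as the main obstacle --- that the tangent space of the IOSSM manifold at initialization is spanned precisely by tilting deformations --- is exactly the content of the claim, and you do not establish it. A generic perturbation $(\delta\bm{A},\delta\bm{B},\delta\bm{C})$ moves the eigenvalues $\lambda_i$ independently; the ``self-similar recursion'' of HiPPO constrains the \emph{initialization}, not the gradient direction of an arbitrary loss $\mathcal{L}$, so nothing in your outline forces all modes to be multiplied by a common $\chi(t,s)$. Second, invoking Lemma~\ref{nonlinear} to bound higher-order terms is a misapplication: that lemma linearizes a nonlinear state dynamics $\dot x=f(x,u,t)$ around a nominal trajectory, not a gradient step on parameters, and says nothing about iterating updates on $(\bm{A},\bm{B},\bm{C})$. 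If your goal is to match the paper, the manifold apparatus is unnecessary: present the tilting identity $\phi_i\mapsto\chi\,\phi_i$, $\omega\mapsto\omega/\chi^2$ and verify orthonormality is preserved --- that is all the paper actually proves.
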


\looseness=-1
\para{Diagonal SSM.}
In practical scenarios, the naive recursive calculation of SSM Kernel \eqref{eq:ssm-conv} can be quite computationally intensive. Ideally, when matrix $\bm{A}$ is diagonal, the computation simplifies to exponentiating the diagonal elements only. In line with the Diagonal Plus Low-Rank (\textbf{DPLR}) representation in S4 \cite{gu2021efficiently}, we introduce a low-rank correction term to the Hippo matrix in formula \eqref{eq:hippo-legs}\eqref{eq:hippo-legt} to obtain a skew-symmetric matrix $\bm{A}^{(N)}$ in formula \eqref{eq:hippo-legsd}\eqref{eq:hippo-legtd} that can be diagonalized.

\begin{lemma}\label{ssm_translate} 
For the $n$-dimensional SSM($\bm{A}$,$\bm{B}$,$\bm{C}$), any transformation defined by a nonsingular matrix $\bm{T}$ will generate a transformed SSM($\hat{\bm{A}}$,$\hat{\bm{B}}$,$\hat{\bm{C}}$):
\small
\[
\hat{\bm{A}}=\bm{T}^{-1} \bm{A} \bm{T}, \quad \hat{\bm{B}}=\bm{T}^{-1} \bm{B}, \quad \hat{\bm{C}}=\bm{C} \bm{T},
\]
\normalsize
where the dynamic matrix $\hat{\bm{A}}$ has the same characteristic polynomial and eigenvalues (same system dynamics) as $\bm{A}$, but its eigenvectors (dynamic coordinate space) are different. 
\end{lemma}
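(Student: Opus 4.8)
The plan is to treat the transformation as a change of state coordinates $x(t) = \bm{T}\hat{x}(t)$ and to verify three things in turn: (i) that this substitution reproduces exactly the claimed triple $(\hat{\bm{A}},\hat{\bm{B}},\hat{\bm{C}})$, (ii) that the induced input-output map (the convolution kernel) is unchanged, so the new system is a genuine realization of the same operator, and (iii) that the spectral data behave as stated. First I would substitute $x = \bm{T}\hat{x}$ into the state equation \eqref{eq:ssm-a}. Since $\bm{T}$ is constant, $x'(t) = \bm{T}\hat{x}'(t)$, so $\bm{T}\hat{x}' = \bm{A}\bm{T}\hat{x} + \bm{B}u$; left-multiplying by the inverse $\bm{T}^{-1}$ (which exists by nonsingularity) gives $\hat{x}' = \bm{T}^{-1}\bm{A}\bm{T}\,\hat{x} + \bm{T}^{-1}\bm{B}\,u$, from which one reads off $\hat{\bm{A}} = \bm{T}^{-1}\bm{A}\bm{T}$ and $\hat{\bm{B}} = \bm{T}^{-1}\bm{B}$. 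Substituting the same change of variables into the output equation \eqref{eq:ssm-b} yields $y = \bm{C}\bm{T}\hat{x}$, whence $\hat{\bm{C}} = \bm{C}\bm{T}$.

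The substantive step is showing the two systems are input-output equivalent, i.e. they realize the same kernel \eqref{eq:ssm-conv-a}. The linchpin is the similarity identity for the matrix exponential, $e^{t(\bm{T}^{-1}\bm{A}\bm{T})} = \bm{T}^{-1} e^{t\bm{A}}\bm{T}$, which I would justify either termwise from the power series (using $(\bm{T}^{-1}\bm{A}\bm{T})^k = \bm{T}^{-1}\bm{A}^k\bm{T}$ and factoring $\bm{T},\bm{T}^{-1}$ out of the sum) or by observing that both sides solve the same matrix ODE with identical initial condition at $t=0$. Granted this, the transformed kernel telescopes: $\hat{\bm{C}} e^{t\hat{\bm{A}}} \hat{\bm{B}} = \bm{C}\bm{T}\,(\bm{T}^{-1} e^{t\bm{A}}\bm{T})\,\bm{T}^{-1}\bm{B} = \bm{C}e^{t\bm{A}}\bm{B}$, so the kernels coincide and, by \eqref{eq:ssm-conv-b}, the zero-state outputs agree for every input $u$.

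For the spectral claims I would compute the characteristic polynomial of $\hat{\bm{A}}$ directly, $\det(\lambda\bm{I} - \hat{\bm{A}}) = \det\!\big(\bm{T}^{-1}(\lambda\bm{I} - \bm{A})\bm{T}\big) = \det(\bm{T}^{-1})\det(\lambda\bm{I}-\bm{A})\det(\bm{T}) = \det(\lambda\bm{I}-\bm{A})$, using multiplicativity of the determinant together with $\det(\bm{T}^{-1})\det(\bm{T}) = 1$. Thus $\bm{A}$ and $\hat{\bm{A}}$ share every eigenvalue with identical algebraic multiplicity, i.e. the same system dynamics. Finally, if $\bm{A}v = \lambda v$ then $\hat{\bm{A}}(\bm{T}^{-1}v) = \bm{T}^{-1}\bm{A}\bm{T}\bm{T}^{-1}v = \lambda(\bm{T}^{-1}v)$, so eigenvectors transform as $v \mapsto \bm{T}^{-1}v$ and differ from the originals whenever $\bm{T}$ does not act as a scalar on the relevant eigenspaces, which establishes the change of dynamic coordinate space.

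I do not expect a genuine obstacle: this is the classical similarity-equivalence of linear realizations, and the only real care is honest bookkeeping of the initial state. To confirm equivalence of the \emph{full} solution of Lemma \ref{general solution} rather than merely the zero-state convolution, I would note that the zero-input response transforms covariantly under the matched initialization $\hat{x}(t_0) = \bm{T}^{-1}x(t_0)$, namely $e^{\hat{\bm{A}}(t-t_0)}\hat{x}(t_0) = \bm{T}^{-1}e^{\bm{A}(t-t_0)}x(t_0)$, so that $x(t) = \bm{T}\hat{x}(t)$ holds for all $t$ and the observed output $y(t) = \bm{C}x(t) = \hat{\bm{C}}\hat{x}(t)$ is identical term by term.
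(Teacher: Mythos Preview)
Your approach is essentially the same as the paper's: both introduce the change of state coordinates $x(t)=\bm{T}z(t)$ and read off $\hat{\bm{A}},\hat{\bm{B}},\hat{\bm{C}}$ by direct substitution into the state and output equations. Your version is in fact more complete than the paper's, which stops after deriving $\hat{\bm{A}}$ and $\hat{\bm{B}}$ and leaves the characteristic-polynomial and eigenvector claims (and the kernel invariance) unargued; your determinant computation and eigenvector transformation $v\mapsto\bm{T}^{-1}v$ fill those gaps cleanly.
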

Lemma \ref{ssm_translate} defines a transformation rule for the SSM, which allows us to diagonalize matrix $\bm{A}$ into the complex domain using unitary matric $\bm{V}$, where $\bm{A} = \bm{V}^{-1}\bm{\Lambda}\bm{V}$ and $\bm{\Lambda}=\bm{A}^{(D)}=\operatorname*{eig}(\bm{A}^{(N)})$. $\bm{A}$ is unitarily equivalent to $\bm{A}^{(D)}$, but the complex domain typically has a stronger expressive power.



\begin{lemma} The TOSSM($\bm{ABC}$) is K-Lipschitz with $\operatorname*{eig}(\bm{A})\leq K^2$. 
\label{lipchiz}
\end{lemma}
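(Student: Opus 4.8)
The plan is to exploit that a \textbf{TOSSM} is a \emph{linear, time-invariant} operator $\mathcal{K}: u \mapsto y$, so its Lipschitz constant coincides with its induced operator norm; I then only need to bound $\norm{\mathcal{K}}$ and exhibit a constant $K$ with $\operatorname*{eig}(\bm{A}) \le K^2$. Writing the difference system for two inputs $u_1, u_2$ with $\delta u := u_1 - u_2$, $\delta x := x_1 - x_2$, $\delta y := y_1 - y_2$, linearity gives $\delta x'(t) = \bm{A}\,\delta x(t) + \bm{B}\,\delta u(t)$ and $\delta y = \bm{C}\,\delta x$, with $\delta x(t_0) = 0$ by Lemma \ref{general solution}. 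The square in $K^2$ points to an energy estimate, so I would take the quadratic Lyapunov functional $V(t) = \norm{\delta x(t)}^2$ as the main vehicle.

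Differentiating, $\dot V = 2\,\angles{\delta x, \bm{A}\,\delta x} + 2\,\angles{\delta x, \bm{B}\,\delta u}$. The first term is controlled by the symmetric part of $\bm{A}$, namely $\angles{\delta x, \bm{A}\,\delta x} \le \lambda_{\max}\!\big(\tfrac12(\bm{A}+\bm{A}^{\top})\big)\,\norm{\delta x}^2$, and for the normal/diagonalizable form of the orthogonal $\bm{A}$ (Lemma \ref{ssm_translate}) this largest eigenvalue is exactly $\max_j \mathrm{Re}\,\lambda_j = \max \operatorname*{eig}(\bm{A})$. Bounding the forcing term by Cauchy--Schwarz and integrating via Grönwall then yields $\norm{\delta x}$ in terms of $\norm{\delta u}$ with a rate set by $\operatorname*{eig}(\bm{A})$. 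Finally, because the system is an \textbf{IOSSM} (Definition \ref{def: Issm}), $\bm{C}$ is the orthonormal reconstruction associated to the basis, hence essentially isometric, so $\norm{\delta y} \le \norm{\delta x}$ and the eigenvalue rate passes unchanged to the output; collecting constants produces a $K$ with $\operatorname*{eig}(\bm{A}) \le K^2$.

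As an independent check I would run the frequency-domain argument. By Lemma \ref{general solution} with zero initial state the map is the convolution $\delta y = K \ast \delta u$ with $K(t) = \bm{C}e^{t\bm{A}}\bm{B}$, so Plancherel gives $\norm{\mathcal{K}} = \sup_{\omega}\norm{\bm{C}(i\omega I - \bm{A})^{-1}\bm{B}}$. Diagonalizing $\bm{A} = \bm{V}^{-1}\bm{\Lambda}\bm{V}$ with $\bm{V}$ unitary (Lemma \ref{ssm_translate}) turns the resolvent norm into $\max_j |i\omega - \lambda_j|^{-1}$, which is governed purely by $\operatorname*{eig}(\bm{A})$ and is consistent with the energy bound.

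The main obstacle is non-normality: for a generic $\bm{A}$ both estimates pick up the condition number $\kappa(\bm{V})$ of the eigenvector matrix (equivalently, a gap between $\lambda_{\max}(\mathrm{sym}(\bm{A}))$ and $\max_j\mathrm{Re}\,\lambda_j$), which would prevent a clean bound stated solely through $\operatorname*{eig}(\bm{A})$. The whole point of restricting to a \textbf{TOSSM} is that its orthogonal structure makes $\bm{V}$ unitary and $\bm{B},\bm{C}$ isometric, so $\kappa(\bm{V}) = 1$ and the symmetric-part eigenvalue coincides with $\max\operatorname*{eig}(\bm{A})$; the remaining delicate point is handling the $\bm{B}\,\delta u$ forcing term in the Grönwall step so that it does not inflate the constant beyond the claimed $K$.
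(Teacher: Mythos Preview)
Your approach and the paper's are aimed at different objects. You read the lemma as a bound on the full input--output operator $u\mapsto y$ and reach for continuous-time tools (Lyapunov energy, Gr\"onwall, the $H^\infty$ transfer-function norm). The paper does something far more elementary: it never bounds the I/O map at all. It argues that the SSM's core computation is the Krylov iteration $x\mapsto \bm{A}x$ (hence powers $\bm{A}^k$), invokes sub-multiplicativity $\|f\circ g\|_{\mathrm{Lip}}\le\|f\|_{\mathrm{Lip}}\|g\|_{\mathrm{Lip}}$, and then computes the Lipschitz constant of the \emph{single linear map} $x\mapsto\bm{A}x$. That reduces to the standard identity $\|\bm{A}x\|\le K\|x\|\ \Leftrightarrow\ \langle(\bm{A}^{\!\top}\bm{A}-K^2 I)x,x\rangle\le 0\ \Leftrightarrow\ \lambda_i(\bm{A}^{\!\top}\bm{A})\le K^2$, proved by expanding $x$ in the orthonormal eigenbasis of $\bm{A}^{\!\top}\bm{A}$. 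In particular the ``$\operatorname*{eig}(\bm{A})$'' in the statement is actually $\operatorname*{eig}(\bm{A}^{\!\top}\bm{A})$, i.e.\ squared singular values; the paper never uses normality, diagonalization by a unitary $\bm{V}$, or any property special to orthogonal SSMs.

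So the gap in your plan is not a technical slip but a mismatch of target. Your energy/Gr\"onwall route inevitably produces a constant that also depends on $\|\bm{B}\|$, $\|\bm{C}\|$, and the time horizon (you yourself flag the forcing term as the ``remaining delicate point''), and the transfer-function bound $\sup_\omega\|\bm{C}(i\omega I-\bm{A})^{-1}\bm{B}\|$ likewise cannot be expressed through $\operatorname*{eig}(\bm{A})$ alone without extra structural assumptions---which is precisely why the paper sidesteps the I/O operator entirely and bounds only the one-step state map $\bm{A}$. Your machinery is more powerful and would give an honest I/O Lipschitz bound, but it is overkill here and will not land on the clean eigenvalue condition the lemma states; the intended proof is three lines of linear algebra on $\bm{A}^{\!\top}\bm{A}$.
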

S4D introduces a method known as the left half-plane control to ensure that the diagonal elements of matrix $\bm{A}^{(D)}$ remain negative. Lemma \ref{lipchiz} provides an alternative, more intuitive explanation, which protects the model from suffering recursive gradient explosion.
\begin{lemma}\cite{gu2022train}
  \label{prop:diag-not-ossm}
  There is no TOSSM with the diagonal state matrix $\bm{A} = \mathsf{diag}\{-1, -2, \dots\}$.
\end{lemma}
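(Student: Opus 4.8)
The plan is to argue by contradiction using the positivity of the candidate basis functions. Suppose $(\bm A,\bm B)$ with $\bm A=\operatorname{diag}(-1,-2,\dots)$ were a TOSSM. By the definition of a time-invariant orthogonal SSM inherited from HTTYH \cite{gu2022train}, there would exist a nonnegative (probability) measure $\omega$ on the elapsed-time axis $[0,\infty)$ for which the kernel functions $K_n$ making up $K(\tau)=e^{\tau\bm A}\bm B$ form an orthonormal family in $L^2(\omega)$. Because $\bm A$ is diagonal, $e^{\tau\bm A}$ is diagonal with entries $e^{-n\tau}$, so the $n$-th kernel is exactly the scaled real exponential $K_n(\tau)=b_n e^{-n\tau}$, where $b_n$ is the $n$-th entry of $\bm B$. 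The whole proof then reduces to examining the pairwise inner products of these exponentials.

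First I would record, for arbitrary indices $m,n$,
$$
\langle K_m,K_n\rangle_\omega = b_m b_n \int_0^\infty e^{-(m+n)\tau}\,\omega(\mathrm d\tau).
$$
Normalization $\langle K_n,K_n\rangle_\omega=1$ forces every $b_n\neq 0$ and forces $\omega$ to be a nontrivial nonnegative measure. The decisive observation is then purely one of sign: for $m\neq n$ the integrand $e^{-(m+n)\tau}$ is strictly positive on $[0,\infty)$, so against a nonzero nonnegative $\omega$ the integral is strictly positive. Hence $\langle K_m,K_n\rangle_\omega>0$ for all $m\neq n$ (already $\langle K_1,K_2\rangle_\omega\neq 0$ suffices), which contradicts orthogonality. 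This rules out any admissible measure, and therefore any such TOSSM, for every choice of $\bm B$.

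I would also present the obstruction in a second, more conceptual guise to make clear why diagonality with a real spectrum is the culprit. Under the substitution $x=e^{-\tau}$ (mapping $\tau\in[0,\infty)$ to $x\in(0,1]$) each $K_n$ becomes a single monomial $x^n$ against a transported nonnegative measure $\tilde\omega$, and asking distinct monomials $x^m,x^n$ to be orthogonal means $\int_0^1 x^{m+n}\,\tilde\omega(\mathrm d x)=0$, again impossible for a positive measure. This phrasing exposes the deeper reason: an orthonormal family must be assembled from genuine polynomials of increasing degree (equivalently, from a three-term/tridiagonal recurrence), whereas a diagonal $\bm A$ contributes exactly one pure decaying exponential per coordinate; it is precisely the oscillation supplied by complex eigenvalues that restores the possibility of orthogonality, which is consistent with the left-half-plane diagonalization of $\bm A^{(N)}$ discussed above.

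The main obstacle is not analytic — the contradiction is immediate once positivity is invoked — but definitional. The care lies in pinning down, from the OSSM/TOSSM definition, exactly which family is required to be orthonormal (the elapsed-time kernels $e^{\tau\bm A}\bm B$) and against which class of measures (nonnegative probability measures), and then in checking that an arbitrary $\bm B$ cannot rescue the construction: a vanishing $b_n$ breaks normalization, while nonvanishing entries leave the sign argument fully intact.
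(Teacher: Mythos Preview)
The paper does not supply its own proof of this lemma; it is simply quoted from \cite{gu2022train} (HTTYH), and no argument appears in Appendix~\ref{proof}. Your positivity argument is correct and is exactly the one given in that reference: with a real diagonal $\bm A$ each kernel is a single decaying exponential $b_n e^{-n\tau}$, and since products of such exponentials are strictly positive on $[0,\infty)$, no nonnegative weight can make two distinct ones orthogonal.

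Two small points of precision. First, in the OSSM/TOSSM framework it is the basis functions $p_n=K_n/\omega$ that are required to be orthonormal with respect to the density $\omega$, not the kernels $K_n$ themselves; the relevant integral is therefore $\int K_mK_n/\omega$ rather than $\int K_mK_n\,\omega(\mathrm d\tau)$. This does not affect your argument, since the integrand still has constant sign. Second, $b_mb_n$ may be negative, so the off-diagonal inner product is merely nonzero rather than strictly positive; you already note parenthetically that $\neq 0$ is all that is needed, so this is only a matter of wording.
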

\begin{lemma}\cite{zhou2022film} Let $A$ be an unitary matrix and $\epsilon_t$ be $\sigma^2$-subgaussian random noise. We have: 
$$x_t= A^\theta x_{t-\theta}+\sum_{i=1}^{\theta-1} A^i b+\mathcal{O}(\sigma \sqrt{\theta}).$$
\label{noise}
\end{lemma}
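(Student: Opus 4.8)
The plan is to recover the one-step linear recurrence that the telescoped identity comes from, namely the update $x_t = A x_{t-1} + b + \epsilon_t$ with independent $\sigma^2$-subgaussian innovations $\epsilon_t$, and then unroll it $\theta$ times. First I would iterate the recurrence to obtain the exact expression
$$x_t = A^\theta x_{t-\theta} + \sum_{i=0}^{\theta-1} A^i b + \sum_{i=0}^{\theta-1} A^i \epsilon_{t-i},$$
separating the deterministic drift $A^\theta x_{t-\theta} + \sum_i A^i b$ (which matches the stated closed form up to the indexing convention for $b$, since $A^0 b = b$ can be absorbed or reindexed depending on whether the bias enters before or after the rotation) from the accumulated stochastic term $S_\theta := \sum_{i=0}^{\theta-1} A^i \epsilon_{t-i}$. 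The entire content of the lemma then reduces to the claim $S_\theta = \mathcal{O}(\sigma\sqrt{\theta})$.

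The central step is to exploit the unitarity of $A$. Since $A$ is unitary, each power $A^i$ is an isometry, so $\norm{A^i \epsilon_{t-i}} = \norm{\epsilon_{t-i}}$ and, more importantly, the rotated innovation $A^i \epsilon_{t-i}$ inherits the same $\sigma^2$-subgaussian parameter as $\epsilon_{t-i}$, because a linear isometry preserves the subgaussian norm of a random vector. Assuming the innovations are independent across time, the variance proxies add, so $S_\theta$ is a $\theta\sigma^2$-subgaussian random vector; equivalently the second moment satisfies $\mathbb{E}\norm{S_\theta}^2 = \sum_{i=0}^{\theta-1}\mathbb{E}\norm{\epsilon_{t-i}}^2 \le \theta \cdot C\sigma^2$, where the cross terms vanish under independence and $\norm{A^i}=1$ leaves each summand's second moment unchanged.

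From here I would apply a standard subgaussian (Hoeffding-type) concentration bound: a $\theta\sigma^2$-subgaussian vector has magnitude $\mathcal{O}(\sigma\sqrt{\theta})$ with high probability, which is exactly the order of the residual claimed in the lemma. This delivers $\norm{S_\theta} = \mathcal{O}(\sigma\sqrt{\theta})$ and closes the argument. Since the result is quoted from \cite{zhou2022film}, I would lean on its stated independence and subgaussianity assumptions rather than re-deriving the tail machinery from scratch.

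The main obstacle I anticipate is not the algebra of unrolling but the probabilistic bookkeeping. One must be careful that (i) the innovations $\epsilon_t$ are genuinely independent, or at least have uncorrelated and individually subgaussian contributions, so that the variance proxies add rather than compound; and (ii) that unitarity — which only guarantees norm preservation, i.e.\ $\norm{A^i}=1$ with neither decay nor growth — is precisely what pins the accumulated noise at the $\sqrt{\theta}$ rate characteristic of a random walk, preventing it from either blowing up or artificially shrinking. This is also the conceptual payoff: it is exactly the \emph{unitary} structure of $\bm{V}$ emphasized in Lemma \ref{ssm_translate} that keeps the noise accumulation controlled, in contrast to a general diagonalization where $\norm{A^i}$ could grow.
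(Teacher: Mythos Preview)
The paper does not supply its own proof of this lemma: it is quoted verbatim from \cite{zhou2022film} and appears in the body text with no corresponding entry in Appendix~\ref{proof}. So there is nothing in the paper to compare against.

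That said, your proposal is the natural argument and is essentially how the cited reference obtains the bound. You correctly identify the underlying recurrence $x_t = A x_{t-1} + b + \epsilon_t$, unroll it, and isolate the stochastic term $S_\theta = \sum_{i} A^i \epsilon_{t-i}$. The key step---that unitarity of $A$ makes each $A^i$ an isometry, so the rotated innovations remain $\sigma^2$-subgaussian and their variance proxies add under independence to give a $\theta\sigma^2$-subgaussian sum---is exactly right, and the $\mathcal{O}(\sigma\sqrt{\theta})$ magnitude then follows from standard subgaussian concentration. Your remark about the index range ($i=0$ versus $i=1$) being a convention for where $b$ enters is also the correct reading of the discrepancy. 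There is no gap.
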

\begin{corollary} The Diagonal SSM($\bm{A}$,$\bm{B}$) is a rough approximation of complete dynamics.
\end{corollary}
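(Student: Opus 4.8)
The plan is to reduce the diagonal SSM to the unitary recurrence of Lemma~\ref{noise} and then read off the accumulated deviation as the measure of approximation quality. The starting point is the diagonalization supplied by Lemma~\ref{ssm_translate}: writing $\bm{A} = \bm{V}^{-1}\bm{\Lambda}\bm{V}$ with $\bm{\Lambda} = \bm{A}^{(D)} = \operatorname*{eig}(\bm{A}^{(N)})$, the diagonal variant retains only the normal part $\bm{A}^{(N)}$ from the DPLR decompositions \eqref{eq:hippo-legsd}\eqref{eq:hippo-legtd} and discards the low-rank correction. Since $\bm{A}^{(N)}$ is skew-symmetric, its eigenvalues are purely imaginary, so the discretized transition $\bar{\bm{A}} = e^{\dt\,\bm{\Lambda}}$ is unitary; this is precisely the hypothesis required by Lemma~\ref{noise}.

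First I would write the one-step discrete recurrence $x_t = \bar{\bm{A}} x_{t-1} + \bar{\bm{B}} u_t$ and model the real-world, non-stationary input through a constant drift $b$ plus $\sigma^2$-subgaussian noise $\epsilon_t$, as motivated by Lemma~\ref{nonlinear}. Unrolling over a horizon of $\theta$ steps gives
\begin{equation}
x_t = \bar{\bm{A}}^{\theta} x_{t-\theta} + \sum_{i=0}^{\theta-1}\bar{\bm{A}}^{i}\bar{\bm{B}}\,u_{t-i},
\end{equation}
and I would split the driving sum into a deterministic bias term $\sum_{i}\bar{\bm{A}}^{i} b$ and a stochastic term $\sum_{i}\bar{\bm{A}}^{i}\bar{\bm{B}}\,\epsilon_{t-i}$. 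Because $\bar{\bm{A}}$ is unitary we have $\norm{\bar{\bm{A}}^{i}} = 1$ for every $i$, so the variance of the stochastic term accumulates only linearly in $\theta$ rather than compounding. Invoking Lemma~\ref{noise} with $A = \bar{\bm{A}}$ then immediately yields $x_t = \bar{\bm{A}}^{\theta} x_{t-\theta} + \sum_{i=1}^{\theta-1}\bar{\bm{A}}^{i} b + \mathcal{O}(\sigma\sqrt{\theta})$, so the diagonal SSM reproduces the complete dynamics only up to an error that grows like $\sqrt{\theta}$ --- a rough approximation.

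The main obstacle I anticipate is twofold. Quantitatively, I must justify that the norm preservation $\norm{\bar{\bm{A}}^{i}}=1$ converts the noise accumulation into the $\mathcal{O}(\sigma\sqrt{\theta})$ rate rather than a linear $\mathcal{O}(\sigma\theta)$ blow-up; this is exactly where Lemma~\ref{noise} does the heavy lifting, since a non-unitary transition would not admit such a controlled bound. Qualitatively, I must pin down what ``complete dynamics'' means so that the word ``rough'' is meaningful: the full matrix $\bm{A} = \bm{A}^{(N)} - \operatorname*{rank}(1)$ carries the low-rank damping that the diagonal form drops, so the diagonal SSM captures only the oscillatory (unitary) skeleton of the true trajectory. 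Making this gap precise --- identifying the discarded low-rank term together with the $\mathcal{O}(\sigma\sqrt{\theta})$ noise as the total deviation from the complete dynamics --- is the crux, after which the corollary follows directly from Lemma~\ref{noise}.
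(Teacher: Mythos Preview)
The paper does not actually prove this corollary; its entire justification is the sentence that follows it: ``By Lemma~\ref{nonlinear}, it can be considered as a rough approximation of the complete dynamics.'' The paper's reasoning is simply that the full HiPPO matrix is one linear surrogate for the (possibly nonlinear) dynamics of Lemma~\ref{nonlinear}, and that dropping the low-rank correction in the DPLR decomposition yields a coarser linear surrogate. No quantitative bound is offered, and the Limitation section explicitly concedes that this step ``lacks rigorous mathematical proof.''

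Your route through Lemma~\ref{noise} is a genuinely different argument, but it has a conceptual gap. Lemma~\ref{noise} bounds how subgaussian \emph{input noise} propagates through a unitary recurrence; the $\mathcal{O}(\sigma\sqrt{\theta})$ term quantifies stochastic fluctuation of $u$, not the structural discrepancy between the diagonal $\bm{A}^{(D)}$ and the full $\bm{A}=\bm{A}^{(N)}-\text{(low rank)}$. With noise-free input ($\sigma=0$) your bound collapses to zero, yet the diagonal SSM still differs from the complete system by exactly the discarded low-rank piece --- so Lemma~\ref{noise} alone cannot certify the corollary. You acknowledge this in your last paragraph, but the proposed fix (``identifying the discarded low-rank term together with the $\mathcal{O}(\sigma\sqrt{\theta})$ noise as the total deviation'') never actually bounds the low-rank contribution, and that bound is the real content of the statement. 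A smaller technical issue: the normal part for LegS in \eqref{eq:hippo-legsd} carries $-\tfrac{1}{2}$ on its diagonal, so its eigenvalues have real part $-\tfrac{1}{2}$ and $\bar{\bm{A}}=e^{\dt\bm{\Lambda}}$ is a strict contraction, not unitary; your ``purely imaginary eigenvalues'' claim holds only for the LegT normal form, and Lemma~\ref{noise} as stated does not directly apply to the LegS case.
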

DSS \cite{gupta2022diagonal} proposed a pure complex diagonal SSM that discards low-rank correction terms. By Lemma \ref{nonlinear}, it can be considered as a rough approximation of the complete dynamics. In S4D, the special initialization method called S4D-real retains only the real diagonal part of \textbf{\textit{Hippo-LegS}} \eqref{eq:hippo-legs}. However, the explanation for this method is currently lacking, and it can be seen as a rougher approximation of robust dynamic representation. Notably, time series data is distinct from other sequential data due to the presence of prominent noise, resulting in cumulative errors in SSM systems (Lemma \ref{noise}). Surprisingly, Table \ref{ssms ablation} reveal that rough dynamic approximations, such as S4D-real, actually yield better performance, in contrast to findings in Long Range Arena (LRA) benchmarks \cite{tay2020long}.

\noindent \textbf{Discretization}.
For real-world time series data, it is necessary to convert continuous parameters ($\bm{\dt}\bm{A}\bm{B}$) to discrete parameters ($\bm{\overline{A}},\bm{\overline{B}}$). In this paper, a combination of the zero-order hold (Lemma \ref{general solution}) and the forward Euler \cite{gu2020hippo} method is employed to achieve a more concise discrete representation.
\begin{equation}
    (\textbf{ZOH}):\bm{\overline{A}} = \exp(\bm{\dt} \bm{A}) ,\quad \quad \quad (\textbf{\color{blue}Forward Euler}):\bm{\overline{B}} = \bm{\dt} \bm{B}.
\end{equation}

\section{\TS: State Space Model for Time Series}
We adhere to TSF standards by applying normalization and de-normalization techniques to the input and output, addressing the distribution shift issues \cite{liu2022non}. Furthermore, we employ linear embedding operations $\bm{P}$ on time series patches \cite{nie2022time} to obtain a vectorized representation of the input function $u(s)$ and use linear mapping $\bm{Q}$ back to the numerical observation domain $y(s)$. 

\begin{proposition} Patch Operation is a simplified piecewise polynomial approximation approach.
\end{proposition}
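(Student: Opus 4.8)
The plan is to exhibit an explicit identification between the patch-then-embed pipeline and the discretized piecewise-polynomial projection underlying \textbf{\textit{HiPPO-LegP}}, and then to pin down exactly which ingredients are dropped so as to justify the word \emph{simplified}. First I would fix notation for the two objects. On one side, recall from the \textbf{\textit{HiPPO-LegP}} construction that the input $u(s)$ on $[0,1]$ is split into segments at powers of $2$ and, on each segment, projected onto the scaled–shifted Legendre basis $p_n^{m\tau}(t,s)=2^{m/2}L_n(2^m t-\tau)$ through the inner product $\langle u, p_n\rangle_\mu=\int u(s)\,p_n^{m\tau}(t,s)\,\omega(t,s)\,\mathrm ds$ of Equation~\eqref{hippo}. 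On the other side, the patch operation partitions the sampled series $(u(s_1),\dots,u(s_T))$ into contiguous patches $U_j=(u(s_{(j-1)L+1}),\dots,u(s_{jL}))^\top$ and applies one linear embedding $\bm{P}$ to each, producing $v_j=\bm{P}\,U_j$.

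The key step is to observe that discretizing the projection integral turns it into exactly this matrix–vector product. Restricting the integral of Equation~\eqref{hippo} to the $j$-th segment and replacing it by a Riemann sum over the samples $s_i$ falling in that patch yields coefficients $c_{j,n}=\sum_{i\in\,\text{patch }j} u(s_i)\,p_n^{m\tau}(s_i)\,\omega(s_i)\,\Delta s$. Stacking over $n=0,\dots,d-1$ this reads $c_j=\bm{\Phi}\,U_j$, where $\bm{\Phi}$ is the $d\times L$ matrix whose $(n,i)$ entry is the measure-weighted basis evaluation $p_n^{m\tau}(s_i)\,\omega(s_i)\,\Delta s$. This is precisely the form $v_j=\bm{P}\,U_j$ with $\bm{P}=\bm{\Phi}$; hence a patch embedding whose $\bm{P}$ equals the weighted Legendre-evaluation matrix computes, segment by segment, the piecewise-polynomial projection coefficients, and, as noted for \textbf{\textit{HiPPO-LegP}}, this coincides with the idempotent linear projection $\mathbbm{T}^2=\mathbbm{T}$ onto the polynomial subspace. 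The output map $\bm{Q}$ back to the observation domain then plays the role of $\mathbbm{T}^{-1}$, so the patch/un-patch pair realizes the forward transform and reconstruction of a piecewise-polynomial approximation.

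The qualifier \emph{simplified} is then accounted for by the three relaxations built into the generic patch operation: the segmentation is uniform of fixed width $L$ rather than the dyadic, power-of-$2$ refinement of \textbf{\textit{HiPPO-LegP}}; the measure weights $\omega(s_i)\,\Delta s$ are not imposed explicitly but absorbed into the learnable entries of $\bm{P}$; and $\bm{P}$ is learned freely instead of being frozen to the orthonormal Legendre basis, so it fits a data-adapted (not necessarily orthogonal) basis for the same polynomial subspace. Because the choice $\bm{P}=\bm{\Phi}$ recovers the genuine projection, the learnable patch embedding contains piecewise-polynomial approximation as a special case and therefore inherits, on the realizable basis, the mean-error guarantee of Lemma~\ref{mean error}.

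I expect the main obstacle to be making the correspondence rigorous at the level of the measure and the reconstruction rather than merely at the level of the coefficients. Concretely, I must argue that the data-independent weighting $\omega(s_i)\,\Delta s$ can be absorbed into a single linear map without altering the spanned subspace (a routine column-scaling argument), and that a plain linear embedding — which produces only coefficients and no continuous-time reconstruction — should be matched against the forward transform $\mathbbm{T}$ alone, with $\bm{Q}$ supplying $\mathbbm{T}^{-1}$; otherwise the analogy with the full approximation operator is incomplete. Handling the windowing at segment edges, where the indicator $\mathbbm{I}(t,s)$ of Equation~\eqref{ssm kernel} truncates the integral, is the one place where the Riemann-sum identification needs care, since the continuous support and the discrete patch boundaries must be aligned.
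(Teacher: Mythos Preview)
Your argument is sound, and the identification you set up --- discretize the inner product of Equation~\eqref{hippo} on each segment to obtain $c_j=\bm{\Phi}\,U_j$, then match $\bm{\Phi}$ with the learnable $\bm{P}$ --- is exactly the right way to make the proposition precise. The three relaxations you list (uniform rather than dyadic segmentation, absorption of the measure weights into the learned entries, and a free rather than Legendre-orthonormal basis) cleanly justify the adjective \emph{simplified}.

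That said, you should know that the paper does not supply a proof of this proposition at all. Although the text promises that all proofs appear in Appendix~\ref{proof}, this one is absent; the statement is left as an unproven observation, with only the informal remark in Appendix~\ref{visual} that ``both patch operations and the Hippo-LegP framework aim to model temporal dynamics in a piecewise approximation manner.'' So there is no paper-side argument to compare against: your proposal is considerably more rigorous than anything the authors provide, and in that sense it is not a reconstruction but an original justification. The obstacles you anticipate (column-scaling to absorb the measure, matching $\bm{P}$ to $\mathbbm{T}$ and $\bm{Q}$ to $\mathbbm{T}^{-1}$, aligning the indicator $\mathbbm{I}(t,s)$ with discrete patch boundaries) are real but minor, and your sketch already indicates how to handle each of them.
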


\subsection{Operator-like Architecture}
\begin{figure}[t]
        \centering
		\includegraphics[width=0.9\columnwidth]{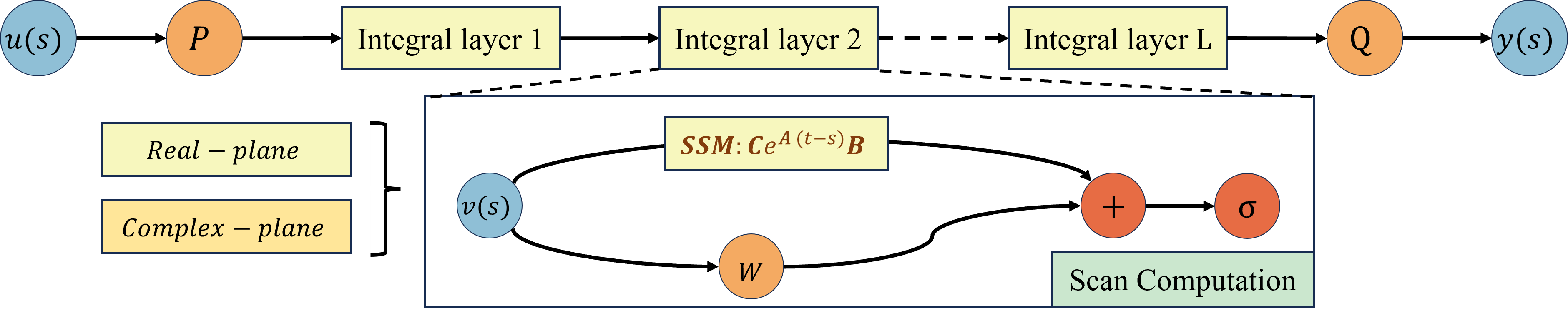}
		\caption{Architecture of \TS, $\bm{PQ}$ represent vectorized representation of a function.}
\end{figure}
In accordance with Definition \ref{def: dynamic operator}, \TS{} utilizes a standard neural operator structure, where each layer combines linear weight and SSM integral kernel. Additionally, the incorporation of an activation function introduces non-linearity in the model. Empirical evidence in Table \ref{tab:overviewall96} showcases the substantial superiority of this simple architecture over Mamba \cite{gu2023mamba} in the domain of TSF.
\begin{equation}
    y^{l} = \sigma(\bm{W}^{l}(u) + \bm{SSM}^{l}(u)).
\end{equation}
Table \ref{tab:various ssm} provides a comprehensive list of all SSM kernel variants introduced in this paper, and detailed explanations will be presented in Sections \ref{real-ssm} and Sections \ref{complex-ssm}.

\subsection{Real-plane SSM}\label{real-ssm}
The essence of SSM lies in computing the Krylov kernel of matrix $\bm{A}$, which determines the system's dynamic characteristics. Following Mamba's methodology, we parameterize $\bm{A}$ with a specific matrix and use linear layers to transform input $u$ into a time-varying ($\bm{BC}\bm{\dt}$) representation. By discretizing the system, we indirectly introduce time-varying properties to $\bm{A}$ by $\bm{\dt}\in\mathbbm{R}^{B\times L\times D}$. This allows $\bm{A}$ to still satisfy some properties of TOSSM in section \ref{background} within a time-varying SSM system. 
\begin{equation}
    \bm{A}\leftarrow \text{parameter},\quad\quad\bm{B},\bm{C}=\mathsf{Linear}_{\bm{BC}}(u),\quad\quad\bm{\dt}=\mathsf{softplus}(\mathsf{Linear}_{\bm{\dt}}(u)).
\end{equation}
\noindent \textbf{S4D-real.}~
we set matrix $\bm{A} = \mathsf{diag}\{-1, -2, \dots\}$ and use $\bm{A} =-\mathsf{exp}(\mathsf{log}(\mathsf{diag}\{1, 2, \dots\}))$ to stabilize the gradients \cite{wang2023stablessm, lezcano2019cheap}, which is the default setting in Mamba and our proposed Time-SSM.

\noindent \textbf{LegS/LegT/LegP.}~
$\bm{\dt}$ is fixed at input data frequency $1/L$, and the model parameters are initialized using a complete set of dynamics IOSSM ($\bm{ABC}$). The gradients from $\bm{ABC}$ are subsequently employed to adaptively update the state spectral space (Lemma \ref{space skewing}).

\begin{remark}
    we observe that retaining the complete dynamics of $\bm{A}\in\mathbbm{R}^{N\times N}$ in a time-varying ($\bm{BC}\bm{\dt}$) environment leads to gradient explosion. we adopt S4 to eliminate the time-varying factors.
\end{remark}

\noindent \textbf{Robust SSM.}~
Notably, the discretization process $\bm{\overline{B}} = \bm{\dt} \bm{B}$ also indirectly introduces time-varying factors from $\bm{\dt}$. To this end, we propose a robust version of the SSM where both ($\bm{AB}$) are specifically parameterized. Section \ref{ablation:b} presents a comprehensive performance analysis of this variant.

\subsection{Complex-plane SSM}\label{complex-ssm}
Following Lemma \ref{ssm_translate}, we conjugate diagonalize matrix $\bm{ABC}$ from real-plane to complex-plane. We initialize $\bm{V}\in\mathbbm{C}^{N\times N}$ with the diagonal unitary matrix of $\bm{A}$ and combine it with the time-varying $\bm{BC}$ generated by the linear layer, which allows SSM($\bm{ABC}$) to possess compatible dynamics. 
\begin{equation}
    \bm{B}=\bm{V}^{T}\mathsf{Linear}_{\bm{B}}(u)\quad(\text{unitary matrix}: \bm{V}^{T}=\bm{V}^{-1}),\quad\quad\quad\bm{C}=\bm{V}\mathsf{Linear}_{\bm{C}}(u).
\end{equation}
In the Experiment Section, we observe that this operation is necessary. Directly initializing the imaginary part \( \Im(\mathsf{Linear}_{\bm{BC}}(u)) =0\) leads to significant performance degradation. Time-SSM provides three complex-plane SSM kernels in this paper: \textbf{LegS/LegT/LegP-complex}.
\begin{table}[t]
\small
        \centering
	\caption{Parameterization choices for structured SSM kernels}
        \setlength{\tabcolsep}{7pt}
        \begin{adjustbox}{width=1\columnwidth, center}
        \resizebox{1\columnwidth}{!}{
        \begin{tabular}{c|cccc}
\toprule
Method&{{Structure of $\bm {A}$}}&{Structure of $\bm {B}$}&{Structure of $\bm {C}$} &{Time-varing factor}\\
\midrule
\textbf{S4D-real}& $\mathbbm{R}(D, N) \leftarrow$ diagonal & $\mathbbm{R}(B, L, N): \mathsf{Linear}_{B}(x)$ & $\mathbbm{R}(B, L, N): \mathsf{Linear}_{C}(x)$ &$\Delta, \bm {B}, \bm {C}$\\
\textbf{LegS/LegT/LegP-complex} & $\mathbbm{C}(D, N) \leftarrow$ diagonal &$\mathbbm{C}(B, L, N): \bm {V}^{-1}\mathsf{Linear}_{B}(x)$& $\mathbbm{C}(B, L, N): \bm {V}\mathsf{Linear}_{C}(x)$ &$\Delta, \bm {B}, \bm {C}$\\
\textbf{LegS/LegT}& $\mathbbm{R}(D, N, N) \leftarrow$ NPLR &$\mathbbm{R}(D, N)\leftarrow $parameter& $\mathbbm{R}(D, N)\leftarrow $parameter& None\\
\textbf{Robust version}& any above & $\mathbbm{R}(N)\leftarrow $parameter & $\mathbbm{R}(B, L, N): \mathsf{Linear}_{C}(x)$ &$\Delta, \bm {C}$\\
\bottomrule
\end{tabular}
}
\end{adjustbox}
\label{tab:various ssm}
\end{table}

\section{Experiment}
In this section, we commence by conducting a comprehensive performance comparison of Time-SSM against other state-of-the-art models, followed by ablation experiments pertaining to initialization methods and model architectures. Furthermore, considering the characteristics of TSF tasks, we perform autoregressive prediction experiments, complexity analysis, and representation ability analysis. Except for {LegS and LegT}, all other variants of SSM kernel can be implemented using the diagonal scanning algorithm like S5~\cite{smith2022simplified}. For detailed information regarding baseline models, dataset descriptions, experimental settings, and hyper-parameter analysis, please refer to Appendix \ref{add exp}.

\subsection{Overall Performance}
\begin{table*}[t]
\centering
\caption{Average results of long-term forecasting with an input length of 96 and prediction horizons of \{96, 192, 336, 720\}. The best performance is in {\color{red}Red}, and the second best is in {\color{blue}Blue}. Different colors represent SSM-based and other neural-network-based TSF models. Full results are in Appendix \ref{full result}.}
\setlength{\tabcolsep}{6pt}
\begin{adjustbox}{width=1\columnwidth, center}
\resizebox{1\columnwidth}{!}{
\begin{tabular}{c|cc|cc|cc|cc|cc|cc|cc|cc}
\toprule[1.5pt]
\multicolumn{1}{c}{\multirow{2}{*}{\large{Model}}}&\multicolumn{2}{c}{\cellcolor{mycolor4}{\TS}}&\multicolumn{2}{c}{\cellcolor{mycolor4}Mamba4TS}&\multicolumn{2}{c}{\cellcolor{mycolor4}S-Mamba}&\multicolumn{2}{c}{\cellcolor{mycolor4}RWKV-TS} &\multicolumn{2}{c}{\cellcolor{mycolor4}Koopa}&\multicolumn{2}{c}{\cellcolor{mycolor3}InvTrm}&\multicolumn{2}{c}{\cellcolor{mycolor3}PatchTST}&\multicolumn{2}{c}{\cellcolor{mycolor3}DLinear}\\
\multicolumn{1}{c}{} & \multicolumn{2}{c}{\cellcolor{mycolor4}{(Ours)}} & \multicolumn{2}{c}{\cellcolor{mycolor4}(Temporal Emb.)}&\multicolumn{2}{c}{\cellcolor{mycolor4}(Channel Emb.\cite{wang2024mamba})} & \multicolumn{2}{c}{\cellcolor{mycolor4}\cite{hou2024rwkv}} & \multicolumn{2}{c}{\cellcolor{mycolor4}\cite{liu2024koopa}} & \multicolumn{2}{c}{\cellcolor{mycolor3}\cite{liu2023itransformer}} & \multicolumn{2}{c}{\cellcolor{mycolor3}\cite{nie2022time}} & \multicolumn{2}{c}{\cellcolor{mycolor3}\cite{zeng2023transformers}} \\
\cmidrule(l){1-1}\cmidrule(l){2-3}\cmidrule(l){4-5}\cmidrule(l){6-7}\cmidrule(l){8-9}\cmidrule(l){10-11}\cmidrule(l){12-13}\cmidrule(l){14-15}\cmidrule(l){16-17}
\multicolumn{1}{c}{Metric}&MSE & MAE & MSE &MAE & MSE &MAE & MSE & MAE & MSE & MAE & MSE & MAE & MSE & MAE & MSE & MAE \\
\midrule
\midrule
ETTh1
& \textbf{\color{red}0.425} & \textbf{\color{red}0.426} & 0.444 & 0.438 & 0.459 & 0.453 & 0.454 & 0.446 & 0.450 & 0.443 & 0.463 & 0.454 & \textbf{\color{blue}0.434} & \textbf{\color{blue}0.435} & 0.462 & 0.458 \\
\midrule
ETTh2
& \textbf{\color{red}0.374} & \textbf{\color{red}0.399} & 0.386 & 0.410 & 0.381 & 0.407 &\textbf{\color{blue}0.375} & \textbf{\color{blue}0.402} & 0.397 & 0.417 & 0.383 & 0.407 & 0.380 & 0.406 & 0.564 & 0.520 \\
\midrule
ETTm1
& \textbf{\color{red}0.386} & \textbf{\color{red}0.396} & 0.396 & 0.406 & 0.399 & 0.407 & \textbf{\color{blue}0.391} & 0.403  & 0.395 & 0.403 & 0.407 & 0.412 & 0.403 & \textbf{\color{blue}0.398} & 0.403 & 0.406 \\
\midrule
ETTm2
& \textbf{\color{blue}0.283} & \textbf{\color{blue}0.328} & 0.299 & 0.343 & 0.289 & 0.333 & 0.285 & 0.330 & \textbf{\color{red}0.281} & \textbf{\color{red}0.326} & 0.291 & 0.335 & \textbf{\color{blue}0.283} & 0.329 & 0.345 & 0.396 \\
\midrule
Exchange
& \textbf{\color{blue}0.352} & \textbf{\color{red}0.398} & 0.364 & \textbf{\color{blue}0.405} & 0.364 & 0.407 & 0.406 & 0.439 &  0.390 & 0.424 & 0.366 & 0.416 & 0.383 & 0.416 & \textbf{\color{red}0.346} & 0.416 \\
\midrule
Crypto
& \textbf{\color{blue}0.192} & \textbf{\color{blue}0.160} & 0.193 & 0.162 & 0.198 & 0.163 & \textbf{\color{red}0.190} & \textbf{\color{red}0.159} & 0.199 & 0.165 & 0.196 & 0.164 & \textbf{\color{blue}0.192} & 0.161 & 0.201 & 0.176\\
\midrule
Air-convection
& \textbf{\color{red}0.459} &\textbf{\color{red}0.332} & 0.471 & 0.343 & 0.484 & 0.352 &0.464 & 0.336 & \textbf{\color{blue}0.463} & 0.337 & 0.493 & 0.363 & 0.483 & 0.354 & \textbf{\color{red}0.459} & \textbf{\color{blue}0.341} \\
\midrule
Weather
& \textbf{\color{blue}0.252} &\textbf{\color{blue}0.276} & 0.258 & 0.280 & \textbf{\color{blue}0.252} & 0.277 &0.256 & 0.280 & \textbf{\color{red}0.247} & \textbf{\color{red}0.273} & 0.260 & 0.280 & 0.258 & 0.280 & 0.267 & 0.319 \\
\bottomrule[1.5pt]
\end{tabular}}
\label{tab:overviewall96}
\end{adjustbox}
\end{table*}

As depicted in Table \ref{tab:overviewall96}, we can observe that (a) Time-SSM (S4D-real) achieved the best overall performance, followed by Koopa and RWKV-TS. This highlights the necessity of modeling temporal dynamics in the field of TSF. (b) Linear model DLinear surprisingly performs well on the Exchange and Air-convection datasets, where the former is believed to be influenced by external factors such as the economy, and the latter is considered a chaotic time series \cite{hu2024attractor}. This observation further highlights the importance of robust modeling in TSF tasks. (c) Time-SSM eliminates redundant modules from Mamba while exhibiting significant performance enhancements compared to Mamba4TS. We envision Time-SSM as a novel foundational model that can drive further advancements in future research. (d) In datasets with a relatively small number of variables, both SSM-based (Mamba4TS \textit{vs.} S-Mamba) and Transformer-based models (PatchTST \textit{vs.} InvTrm) still consider temporal embedding as the optimal choice. Variable embedding may lead to improved results in datasets with a larger number of variables, 
which is evident in datasets like Traffic \cite{wu2021autoformer}, where the number of variables (862) greatly exceeds the time size (96). In Section \ref{variable kernel}, we present a lightweight solution to balance the temporal dependency and variable dependency, which is not the focus (\textbf{\textit{temporal dynamic}}) of this paper.
\subsection{Ablation Study}\label{exp:ablation}
\begin{table*}[t!]
\centering
\caption{Ablation study of $\mathbf{ABC}$ parameterization. {\color{red}Red}/{\color{blue}Blue} denotes the best/second performance.}
\label{ssms ablation}
\setlength{\tabcolsep}{10pt}
\begin{adjustbox}{width=1\columnwidth, center}
\resizebox{1\columnwidth}{!}{
\begin{tabular}{p{0.3cm}|c|cc|cc|cc|cc|cc|cc|cc}
\toprule[1.5pt]
\multicolumn{2}{c}{\multirow{1}{*}{{SSMs}}}&\multicolumn{2}{c}{\cellcolor{mycolor4}{S4D-real}}&\multicolumn{2}{c}{\cellcolor{mycolor4}LegS-complex}&\multicolumn{2}{c}{\cellcolor{mycolor4}LegT-complex} &\multicolumn{2}{c}{\cellcolor{mycolor4}LegP-complex} &\multicolumn{2}{c}{\cellcolor{mycolor4}LegS(RNN)}&\multicolumn{2}{c}{\cellcolor{mycolor4}LegT(RNN)}&\multicolumn{2}{c}{\cellcolor{mycolor4}Full-Select}\\
\cmidrule(l){1-2}\cmidrule(l){3-4}\cmidrule(l){5-6}\cmidrule(l){7-8}\cmidrule(l){9-10}\cmidrule(l){11-12}\cmidrule(l){13-14}\cmidrule(l){15-16}
\multicolumn{2}{c}{Metric}&MSE & MAE & MSE &MAE & MSE &MAE & MSE & MAE & MSE & MAE & MSE & MAE & MSE & MAE \\
\midrule
\midrule
\multirow{5}{*}{\begin{sideways}ETTh1\end{sideways}} 
& 96  & \textbf{\color{red}0.377} & \textbf{\color{red}0.394} & 0.382 & 0.398 & \textbf{\color{blue}0.379} & \textbf{\color{blue}0.395} & 0.381 & 0.397 & 0.391 & 0.405 & 0.388 &0.403& 0.382 & 0.399 \\
& 192 & \textbf{\color{red}0.423} & \textbf{\color{red}0.424} & 0.429 & 0.427 & 0.426 & \textbf{\color{blue}0.426} & \textbf{\color{blue}0.424} & 0.427 & 0.438 & 0.436 & 0.434 &0.433& 0.435 & 0.429 \\
& 336 & \textbf{\color{blue}0.466} & \textbf{\color{blue}0.437} & 0.471 & 0.450 & 0.470 & 0.451 & \textbf{\color{red}0.464} & \textbf{\color{red}0.435} & 0.475 & 0.451 & 0.470 &0.445& 0.482 & 0.452 \\
& 720 & \textbf{\color{red}0.452} & \textbf{\color{red}0.448} & 0.474 & 0.471 & 0.469 & 0.467 & \textbf{\color{blue}0.460} & \textbf{\color{blue}0.455} & 0.488 & 0.483 & 0.484 &0.477& 0.483 & 0.476 \\
& AVG & \textbf{\color{red}0.430} & \textbf{\color{red}0.426} & 0.439 & 0.437 & 0.436 & 0.435 & \textbf{\color{blue}0.432} & \textbf{\color{blue}0.429} & 0.448 & 0.444 & 0.444 & 0.440 & 0.446 & 0.439 \\
\midrule
\multirow{5}{*}{\begin{sideways}ETTm2\end{sideways}} 
& 96  & \textbf{\color{blue}0.176} & \textbf{\color{blue}0.260} & 0.179 & 0.265 & \textbf{\color{blue}0.176} & 0.263 & \textbf{\color{red}0.174} & \textbf{\color{red}0.259} & 0.188 & 0.274 & 0.184 & 0.273 & 0.180 & 0.265 \\
& 192 & 0.246 & \textbf{\color{red}0.305} & 0.246 & 0.308 & \textbf{\color{red}0.242} & \textbf{\color{blue}0.307} & \textbf{\color{blue}0.244} & 0.308 & 0.264 & 0.321 & 0.262 & 0.318 & 0.247 & 0.309 \\
& 336 & 0.305 & 0.344 & 0.310 & 0.348 & \textbf{\color{blue}0.304} & \textbf{\color{blue}0.343} & \textbf{\color{red}0.301} & \textbf{\color{red}0.339} & 0.321 & 0.358 & 0.317 & 0.355 & 0.312 & 0.351 \\
& 720 & \textbf{\color{red}0.406} & \textbf{\color{red}0.405} & 0.411 & 0.410 & \textbf{\color{blue}0.409} & \textbf{\color{blue}0.407} & 0.410 &0.408 & 0.431 & 0.436 & 0.430 & 0.428 & 0.412 & 0.410 \\
& AVG & \textbf{\color{blue}0.283} & \textbf{\color{red}0.329} & 0.287 & 0.333 & \textbf{\color{blue}0.283} & \textbf{\color{blue}0.330} & \textbf{\color{red}0.282} & \textbf{\color{red}0.329} & 0.301 & 0.347 & 0.298 & 0.343 & 0.288 & 0.334 \\
\midrule
\multirow{5}{*}{\begin{sideways}Weather\end{sideways}} 
& 96  & \textbf{\color{red}0.167} & \textbf{\color{red}0.212} & 0.175 & 0.218 & 0.178 & 0.221 & \textbf{\color{blue}0.170} & \textbf{\color{blue}0.215} & 0.189 &0.228& 0.184 &0.222& 0.185 & 0.225 \\
& 192 & \textbf{\color{blue}0.217} & \textbf{\color{blue}0.255} & 0.222 & \textbf{\color{red}0.253} & 0.226 & 0.259 & \textbf{\color{red}0.216} & 0.257 & 0.238 &0.269& 0.236 &0.263& 0.230 & 0.262 \\
& 336 & \textbf{\color{blue}0.274} & \textbf{\color{blue}0.294} & 0.285 & 0.303 & 0.282 & 0.297 & \textbf{\color{red}0.270} & \textbf{\color{red}0.291} & 0.291 &0.310& 0.294 &0.311& 0.285 & 0.302 \\
& 720 & \textbf{\color{red}0.351} & \textbf{\color{red}0.345} & 0.354 & \textbf{\color{blue}0.346} & 0.355 & \textbf{\color{red}0.345} & \textbf{\color{blue}0.353} & 0.348 & 0.366 &0.358& 0.362 &0.355& 0.360 & 0.351 \\
& AVG & \textbf{\color{red}0.252} & \textbf{\color{red}0.277} & \textbf{\color{blue}0.259} & 0.280 & 0.260 & 0.281 & \textbf{\color{red}0.252} & \textbf{\color{blue}0.278} & 0.271 & 0.291 & 0.269 & 0.288 & 0.265 & 0.285 \\
\bottomrule[1.5pt]
\end{tabular}}
\end{adjustbox}
\end{table*}

Next, we will explore the practical performance of different SSM kernels and model architectures.
\begin{table*}[t!]
\centering
\caption{Architecture ablation with various prediction horizons. \textit{w/o} denotes without the corresponding module. B: Taime-varying $\bm{B}$ matrix; V: diagonal unitary matrix; W: Linear weights in each layer; VK: Variable kernel. {\color{red}Red} denotes improved performance, while {\color{blue}{Blue}} denotes the declined performance.}
\label{architecture ablation}
\begin{adjustbox}{width=1\columnwidth, center}
\begin{minipage}{.707\linewidth}%
\resizebox{1\columnwidth}{!}{
\setlength{\tabcolsep}{6pt}
\begin{tabular}{p{0.3cm}|c||cc|cc|cc||cc|cc|cc}
\toprule[1.5pt]
\multicolumn{2}{c||}{\multirow{1}{*}{{Model}}}&\multicolumn{2}{c}{\cellcolor{mycolor4}{S4D-real}}&\multicolumn{2}{c}{\cellcolor{mycolor4}\textit{w/o} B}&\multicolumn{2}{c||}{\cellcolor{mycolor4}\textit{w/o} W} &\multicolumn{2}{c}{\cellcolor{mycolor4}LegS-complex} &\multicolumn{2}{c}{\cellcolor{mycolor4}\textit{w/o} B}&\multicolumn{2}{c}{\cellcolor{mycolor4}\textit{w/o} V}\\
\cmidrule(l){1-2}\cmidrule(l){3-4}\cmidrule(l){5-6}\cmidrule(l){7-8}\cmidrule(l){9-10}\cmidrule(l){11-12}\cmidrule(l){13-14}
\multicolumn{2}{c||}{Metric}&MSE & MAE & MSE &MAE & MSE &MAE & MSE & MAE & MSE & MAE & MSE & MAE\\
\midrule
\midrule
\multirow{5}{*}{\begin{sideways}ETTh1\end{sideways}} 
& 96  & 0.377 &0.394 & \textbf{\color{blue}0.382} &\textbf{\color{blue}0.401} & \textbf{\color{blue}0.380} &\textbf{\color{blue}0.400} & 0.382 & 0.398 & \textbf{\color{blue}0.385} & \textbf{\color{blue}0.403} & \textbf{\color{blue}0.385} & \textbf{\color{red}0.396} \\
& 192 & 0.423 & 0.424 & \textbf{\color{blue}0.438} & \textbf{\color{blue}0.436} & \textbf{\color{blue}0.432} & \textbf{\color{blue}0.429} & 0.429 & 0.427 & \textbf{\color{blue}0.445} & \textbf{\color{blue}0.435} & \textbf{\color{blue}0.435} & \textbf{\color{blue}0.432} \\
& 336 & 0.466 & 0.437 & \textbf{\color{blue}0.483} & \textbf{\color{blue}0.452} & \textbf{\color{blue}0.476} & \textbf{\color{blue}0.452} & 0.471 & 0.450 & \textbf{\color{blue}0.489} & \textbf{\color{blue}0.456} & \textbf{\color{blue}0.478} & \textbf{\color{blue}0.453} \\
& 720 & 0.452 &0.448 & \textbf{\color{blue}0.510} & \textbf{\color{blue}0.489} & \textbf{\color{blue}0.497} & \textbf{\color{blue}0.471} & 0.474 & 0.471 & \textbf{\color{blue}0.524} & \textbf{\color{blue}0.495} & \textbf{\color{blue}0.478} & \textbf{\color{red}0.465} \\
& AVG & 0.425 & 0.426 & \textbf{\color{blue}0.453} & \textbf{\color{blue}0.445} & \textbf{\color{blue}0.446} & \textbf{\color{blue}0.438} & 0.439 & 0.437 & \textbf{\color{blue}0.461} & \textbf{\color{blue}0.447} & \textbf{\color{blue}0.444} & 0.437 \\
\midrule
\multirow{5}{*}{\begin{sideways}ETTh2\end{sideways}} 
& 96  & 0.290 &0.341 & \textbf{\color{blue}0.298} & \textbf{\color{blue}0.349} & \textbf{\color{red}0.288} & \textbf{\color{red}0.339} & 0.298 & 0.347 & \textbf{\color{blue}0.308} & \textbf{\color{blue}0.358} & \textbf{\color{blue}0.305} & \textbf{\color{blue}0.357} \\
& 192 & 0.368 &0.387 & \textbf{\color{blue}0.382} & \textbf{\color{blue}0.403} & \textbf{\color{blue}0.371} & \textbf{\color{blue}0.390} & 0.377 & 0.395 & \textbf{\color{blue}0.384} & \textbf{\color{blue}0.405} & 0.377 & \textbf{\color{blue}0.400} \\
& 336 & 0.416 &0.430& \textbf{\color{blue}0.420} & \textbf{\color{red}0.427} & \textbf{\color{red}0.412} & \textbf{\color{red}0.425} & 0.421 & 0.434 & \textbf{\color{blue}0.433} & \textbf{\color{blue}0.442} & \textbf{\color{blue}0.423} & 0.434 \\
& 720 & 0.424 &0.439 & \textbf{\color{red}0.422} & \textbf{\color{blue}0.441} & \textbf{\color{red}0.418} & \textbf{\color{red}0.438} & 0.427 & 0.441 & \textbf{\color{blue}0.434} & \textbf{\color{blue}0.452} & \textbf{\color{red}0.426} & \textbf{\color{blue}0.447} \\
& AVG & 0.374 &0.399 & \textbf{\color{blue}0.381 }& \textbf{\color{blue}0.405} & \textbf{\color{red}0.372} & \textbf{\color{red}0.398} & 0.381 & 0.404 & \textbf{\color{blue}0.390} & \textbf{\color{blue}0.414} & \textbf{\color{blue}0.383} & \textbf{\color{blue}0.410} \\
\midrule
\multirow{5}{*}{\begin{sideways}Air\end{sideways}} 
& 96  & 0.290 &0.341 & \textbf{\color{blue}0.292} & \textbf{\color{blue}0.343} & \textbf{\color{red}0.286} & \textbf{\color{red}0.338} & 0.296 & 0.345 & \textbf{\color{blue}0.299} & \textbf{\color{blue}0.351} & \textbf{\color{blue}0.299} & \textbf{\color{blue}0.352} \\
& 192 & 0.368 &0.387 & \textbf{\color{red}0.366} & \textbf{\color{red}0.386} & \textbf{\color{blue}0.372} & \textbf{\color{blue}0.391} &0.371 & 0.388 & \textbf{\color{blue}0.377} & \textbf{\color{blue}0.394} & \textbf{\color{blue}0.376} & \textbf{\color{blue}0.392} \\
& 336 & 0.416 &0.430& \textbf{\color{red}0.411} & \textbf{\color{red}0.423} & \textbf{\color{blue}0.419} & \textbf{\color{blue}0.434} & 0.421 & 0.436 & \textbf{\color{blue}0.424} & 0.436 & \textbf{\color{blue}0.430} & \textbf{\color{blue}0.441} \\
& 720 & 0.424 &0.439 & \textbf{\color{red}0.419} & \textbf{\color{red}0.434} & \textbf{\color{blue}0.432} & \textbf{\color{blue}0.447} &0.430 & 0.444 & \textbf{\color{red}0.428} & \textbf{\color{red}0.442} & \textbf{\color{red}0.428} & \textbf{\color{blue}0.449} \\
& AVG & 0.374 &0.399 & \textbf{\color{red}0.372} & \textbf{\color{red}0.396} & \textbf{\color{blue}0.377} & \textbf{\color{blue}0.403} & 0.380 & 0.403 & \textbf{\color{blue}0.382} & \textbf{\color{blue}0.406} & \textbf{\color{blue}0.383} & \textbf{\color{blue}0.409} \\
\bottomrule[1.5pt]
\end{tabular}}
\end{minipage}
\begin{minipage}{.293\linewidth}%
\resizebox{1\columnwidth}{!}{
\setlength{\tabcolsep}{6pt}
\begin{tabular}{p{0.3cm}|c|cc|cc}
\toprule[1.5pt]
\multicolumn{2}{c}{\multirow{1}{*}{{Model}}}&\multicolumn{2}{c}{\cellcolor{mycolor4}{S4D-real}}&\multicolumn{2}{c}{\cellcolor{mycolor4}\textit{with} VK}\\
\cmidrule(l){1-2}\cmidrule(l){3-4}\cmidrule(l){5-6}
\multicolumn{2}{c}{Metric}&MSE & MAE & MSE &MAE \\
\midrule
\midrule
\multirow{5}{*}{\begin{sideways}ETTh2\end{sideways}} 
& 96  & {0.290} &{0.341} & \textbf{\color{blue}0.305} & \textbf{\color{blue}0.351} \\
& 192 & {0.368} &0.387 & \textbf{\color{blue}0.372} & \textbf{\color{red}0.386} \\
& 336 & 0.416 &0.430& \textbf{\color{red}0.399} & \textbf{\color{red}0.416} \\
& 720 & 0.424 &0.439 & \textbf{\color{red}0.418} & \textbf{\color{red}0.429} \\
& AVG & 0.374 &0.399 & \textbf{\color{red}0.373} & \textbf{\color{red}0.395 }\\
\midrule
\multirow{5}{*}{\begin{sideways}ETTm2\end{sideways}} 
& 96  & {0.176} & 0.260 & 0.176 & \textbf{\color{red}0.258}\\
& 192 & {0.246} & {0.305} & \textbf{\color{blue}0.247} & \textbf{\color{blue}0.309} \\
& 336 & 0.305 &0.344 & \textbf{\color{red}0.301} & \textbf{\color{red}0.334} \\
& 720 & 0.406 & 0.405 & \textbf{\color{red}0.392} & \textbf{\color{red}0.397} \\
& AVG & 0.283 & 0.328 & \textbf{\color{red}0.279} & \textbf{\color{red}0.325}\\
\midrule
\multirow{5}{*}{\begin{sideways}Weather\end{sideways}} 
& 96  & 0.167 & 0.212 & \textbf{\color{red}0.154} & \textbf{\color{red}0.202} \\
& 192 & 0.217 & 0.255 & \textbf{\color{red}0.204} & \textbf{\color{red}0.242} \\
& 336 & 0.274 & 0.294 & \textbf{\color{red}0.262} & \textbf{\color{red}0.288} \\
& 720 & 0.351 & 0.345 & \textbf{\color{red}0.339} & \textbf{\color{red}0.341} \\
& AVG & 0.252 & 0.276 & \textbf{\color{red}0.239} & \textbf{\color{red}0.268} \\
\bottomrule[1.5pt]
\end{tabular}}
\end{minipage}
\end{adjustbox}
\end{table*}

\noindent \textbf{Various SSM Kernels.}~
Table \ref{ssms ablation} demonstrates the results of Time-SSM using all SSM kernels from Table \ref{tab:various ssm}. Overall, S4D-real and LegP-complex parameterizations achieve the best performance, and S4D-real initialization remains the simplest and most effective choice for TSF tasks. Removing time-varying factors in LegS and LegT leads to poor performance, emphasizing their importance as attention or gating mechanisms. Surprisingly, generating $\bm{A}$ by linear layer (full select) results in decreased performance, highlighting the necessity of selecting specific dynamic matrices $\bm{A}$.

Interestingly, LegS-complex generally outperforms other variants in LRA benchmark tasks (text, images, logical reasoning), with the support of exponential decay memory. However, in TSF tasks, the opposite phenomenon occurs, which suggests that \textbf{\textit{in TSF settings, where the lookback window is limited, robust approximation and finite window approximation yield better performance.}}

\noindent \textbf{Time-varying $\bm {B}$.}~
\label{ablation:b}
As shown in Table \ref{architecture ablation} (left), removing time-varying factors from matrix B (initialized by the Hippo matrix) generally leads to decreased performance. However, in datasets with prominent nonlinear features like Air-convection, especially in long-term prediction windows, it produces more robust predictions. This effect is somewhat weakened in the complex-plane version of Time-SSM.

\noindent \textbf{Unitary matrix $\bm {V}$.}~
Removing the unitary matrix $\bm {V}$ from the $\bm {BC}$ matrix consistently leads to performance degradation. We view \textbf{\textit{$\bm {V}$ as an inductive bias that imparts $\bm {BC}$ with complex-plane dynamics that are compatible with $\bm {A}$ matrix, which is essential for temporal dynamical modeling.}}

\noindent \textbf{Linear Weights in Operator.}~
Preserving the linear weights in each layer generally yields better results. We believe that the performance decline in the ETTh2 dataset may be attributed to overfitting by non-stationary distributions, which remains consistent in both real-plane and complex-plane.

\label{variable kernel}
\noindent \textbf{Multivariate SSM Kernel.}~
In the era of LLM, we believe that excessive focus on modeling variable relationships within individual datasets becomes less meaningful. Models like UniTime \cite{liu2023unitime}, Timer~\cite{liu2024timer}, and MOIRAI \cite{woo2024unified} have already begun exploring unified time series models. Therefore, in this paper, \textbf{\textit{we prioritize modeling temporal dependencies and aim to enhance model performance on small-scale datasets by leveraging inter-variable relationships at minimal cost.}} Two principles are proposed: (i) As for a sequence model, it is desirable to achieve a complexity of $\mathcal{O}(L)$. (ii) Given that the number of variables in different datasets may vary, it is preferable to impose an upper limit on the parameter count of variables rather than allowing unrestricted growth. To achieve this, we employ Fourier transformation to capture dominant $k$ modes along variable dimensions and utilize complex-domain linear weights to represent the variable integral kernel: \(\hat{y}=\mathsf{IFFT}(\bm{W}\cdot \mathsf{topk}(\mathsf{FFT}(y)))\). As shown in Table \ref{architecture ablation} (right), this simple and efficient module leads to a significant performance improvement, particularly on datasets with more variables.

\subsection{Further Analysis}\label{further exp}
\noindent \textbf{Auto Regression.}~
SpaceTime \cite{zhang2023effectively} proposed a closed-loop control prediction method using input time series $u_{[1:L]}$ to construct a dynamical system and generate predictions $y_{[L+1:L+H]}$ through autoregressive modeling. However, this approach is not suitable for time-varying systems with changing matrices ($\bm{ABC}$). To address this, we introduced an implicit autoregressive method with zero padding: $y\in\mathbbm{R/C}^{L+H}$=~{\TS}~$(u\in\mathbbm{R/C}^{L+O})$. The final prediction is obtained by truncating the last $H$ steps of $y$. Experimental results (Table \ref{tab:ar}) show decreased performance for all initialization methods under this configuration, indicating redundancy of the additional autoregressive generation. Surprisingly, in longer sequences, the LegS-based variant outperforms LegT, contrary to Table \ref{ssms ablation} findings, supporting the original benefit of LegS for long-term sequence modeling. Moreover, selecting the $\bm{BC}$ matrix from the last time step for autoregressive generation leads to gradient explosions in time-varying environments due to the lack of dynamic constraints.
\begin{table*}[t!]
\centering
\caption{Forecasting results in {\color{blue}AR} process with various prediction horizons. $\bm{\infty}$ means gradient explosions.}
\setlength{\tabcolsep}{10pt}
\begin{adjustbox}{width=1\columnwidth, center}
\resizebox{0.8\columnwidth}{!}{
\begin{tabular}{p{0.3cm}|c|cc|cc|cc|cc|cc|cc}
\toprule[1.5pt]
\multicolumn{2}{c}{\multirow{1}{*}{{SSMs}}}&\multicolumn{2}{c}{\cellcolor{mycolor4}{S4D-real}}&\multicolumn{2}{c}{\cellcolor{mycolor4}LegS-complex}&\multicolumn{2}{c}{\cellcolor{mycolor4}LegT-complex} &\multicolumn{2}{c}{\cellcolor{mycolor4}LegP-complex} &\multicolumn{2}{c}{\cellcolor{mycolor4}Full-Select}&\multicolumn{2}{c}{\cellcolor{mycolor4}(S4D-real) Recurrent}\\
\cmidrule(l){1-2}\cmidrule(l){3-4}\cmidrule(l){5-6}\cmidrule(l){7-8}\cmidrule(l){9-10}\cmidrule(l){11-12}\cmidrule(l){13-14}
\multicolumn{2}{c}{Metric}&MSE & MAE & MSE &MAE & MSE &MAE & MSE & MAE & MSE & MAE & MSE & MAE\\
\midrule
\midrule
\multirow{5}{*}{\begin{sideways}ETTm2\end{sideways}} 
& 96  &\textbf{ \color{blue}0.184} & \textbf{\color{blue}0.272} & 0.189 & 0.278 & 0.188 & 0.275 & \textbf{\color{red}0.181}  & \textbf{\color{red}0.270} & 0.192 & 0.276 & 0.198 & 0.296\\
& 192 & \textbf{\color{red}0.251} & \textbf{\color{blue}0.313} & 0.255 & 0.319 & \textbf{\color{blue}0.254} & 0.316 & \textbf{\color{red}0.251}  & \textbf{\color{red}0.312} & 0.256 & 0.316 & 0.264 & 0.327\\
& 336 & \textbf{\color{blue}0.320} & 0.360 & \textbf{\color{blue}0.320} & \textbf{\color{blue}0.359} & 0.324 & 0.362 & 0.324  & 0.360 & \textbf{\color{red}0.317} & \textbf{\color{red}0.354} & 0.337 & 0.368 \\
& 720 & 0.420 & 0.417 & \textbf{\color{red}0.417} & \textbf{\color{red}0.413} & \textbf{\color{blue}0.418} & \textbf{\color{blue}0.415} & 0.420  & 0.417 & 0.423 & \textbf{\color{red}0.413} & $\bm{\infty}$  &$\bm{\infty}$\\
& AVG & \textbf{\color{red}0.294} & \textbf{\color{blue}0.341} & \textbf{\color{blue}0.295} & 0.342 & 0.296 & 0.342 & \textbf{\color{red}0.294} & \textbf{\color{red}0.340} & 0.297 & \textbf{\color{red}0.340} & $\bm{\infty}$  &$\bm{\infty}$\\
\midrule
\multirow{5}{*}{\begin{sideways}Weather\end{sideways}} 
& 96  & \textbf{\color{red}0.182} & \textbf{\color{red}0.223} & 0.190 & 0.229 & 0.194 & 0.234 & \textbf{\color{blue}0.186}  &0.228& 0.190 & 0.229 & \textbf{\color{blue}0.186}  & \textbf{\color{blue}0.227} \\
& 192 & \textbf{\color{red}0.227} & \textbf{\color{red}0.260} & \textbf{\color{blue}0.234} & \textbf{\color{blue}0.266} & 0.242 & 0.273 & 0.236  & 0.269& 0.239 & 0.269 & 0.236  & 0.270 \\
& 336 & \textbf{\color{red}0.282} & \textbf{\color{blue}0.300} & 0.289 & 0.304 & 0.298 & 0.312 & \textbf{\color{blue}0.284}  & \textbf{\color{red}0.299} & 0.295 & 0.308 & 0.315& 0.330\\
& 720 & \textbf{\color{red}0.361} & \textbf{\color{blue}0.351} & \textbf{\color{blue}0.362} & \textbf{\color{blue}0.351} & 0.375 & 0.362 & \textbf{\color{red}0.361}  & \textbf{\color{red}0.350} & 0.366 & 0.354 & $\bm{\infty}$  &$\bm{\infty}$\\
& AVG & \textbf{\color{red}0.263} & \textbf{\color{red}0.284} & 0.269 & 0.288 & 0.277 & 0.295 & \textbf{\color{blue}0.267} & \textbf{\color{blue}0.287} & 0.273 & 0.290 & $\bm{\infty}$  &$\bm{\infty}$\\
\bottomrule[1.5pt]
\end{tabular}}
\label{tab:ar}
\end{adjustbox}
\end{table*}
\begin{figure}[t]
        \centering
\begin{adjustbox}{width=1\columnwidth, center}
		\includegraphics[width=0.95\columnwidth]{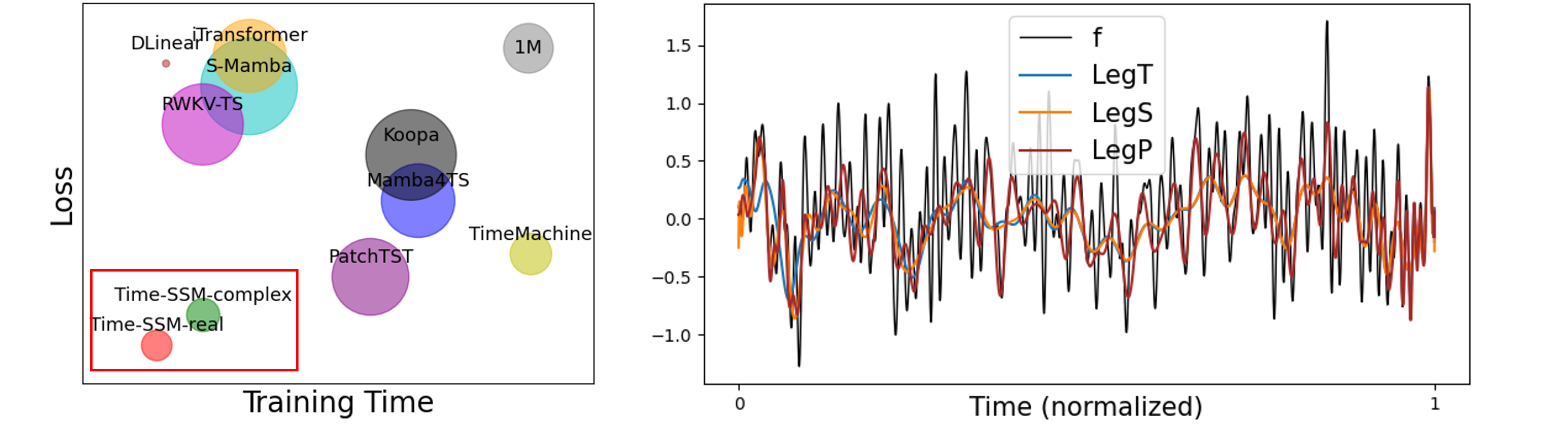}
\end{adjustbox}
\caption{Left: Complexity analysis. Right: Long-range function approximation with different SSM basis}
\label{complexity}
\end{figure}

\noindent \textbf{Efficiency.}~
As shown in Figure \ref{complexity} (left), we present a comparison of the complexity between Time-SSM, and other advanced TSF models. The radius of the circle represents the number of model parameters. It can be observed that overall, the Time-SSM exhibits superior efficiency. Although the DLinear model has fewer model parameters, its performance significantly lags behind the Time-SSM. In practical applications, the Time-SSM in the real plane is undoubtedly the optimal choice.

\noindent \textbf{Representation Ability.}~
We utilize a 64-dimensional (same as in Time-SSM) fixed Hippo matrix to reconstruct the input function, in order to explore the representational capacity of different SSM bases. The input function is randomly sampled from a continuous-time band-limited white noise process with a length of $10^6$, a sampling step size of $10^{-4}$, and a signal band limit of 1Hz. From Figure \ref{complexity} (right), we observe that: (a) Consistent with Corollary \ref{legp good}, LegP (with a piecewise scale of 3) exhibits stronger representational capacity, outperforming LegT and LegS in limited dimensions. (b) According to previous research \cite{zhou2022film, hu2024attractor, gu2020hippo}, a spectral space of 256 dimensions is typically required to effectively represent temporal dynamics in single-channel SSMs. However, in our experiments, we found that Time-SSM with a 64-dimensional spectral space performs better. We believe this is because the Time-SSM first obtains a high-dimensional vectorized representation of the time series through patch embedding. In the context of MIMO SSMs, the representational capacity of the Time-SSM is multiplicatively accumulated (N plus D), which is sufficient for dynamic modeling.
\section{Conclusion}
This paper aims to address the theoretical and physical limitations of SSM in the TSF context, providing comprehensive theoretical guidance for its application in time series data. To achieve this, we propose the theory of dynamic spectral operators based on the concept of IOSSM, and introduce a simple and efficient foundational model called Time-SSM. Additionally, we explore various variants of SSM kernels in both the real and complex planes. Through an extensive range of experiments, including ablation studies, autoregressive testing, complexity analysis, and white noise reconstruction, we rigorously validate and explain the effectiveness of our theory and proposed Time-SSM model. Ultimately, our goal is to contribute to the flourishing development of SSM not only within the TSF community but also in other machine-learning domains. Limitations can be found in Appendix \ref{limitation}.

\bibliography{reference}

\clearpage
\addtocontents{toc}{\protect\setcounter{tocdepth}{2}}

\appendix
\onecolumn

\section{Model Details}
\subsection{Computation for SSM-LegP}
\label{compute legp}
It is first proposed by Attraos \cite{hu2024attractor}. We can perform computations either explicitly or implicitly.

\begin{figure}[h]
	\centering
    {\includegraphics[width=.6\columnwidth]{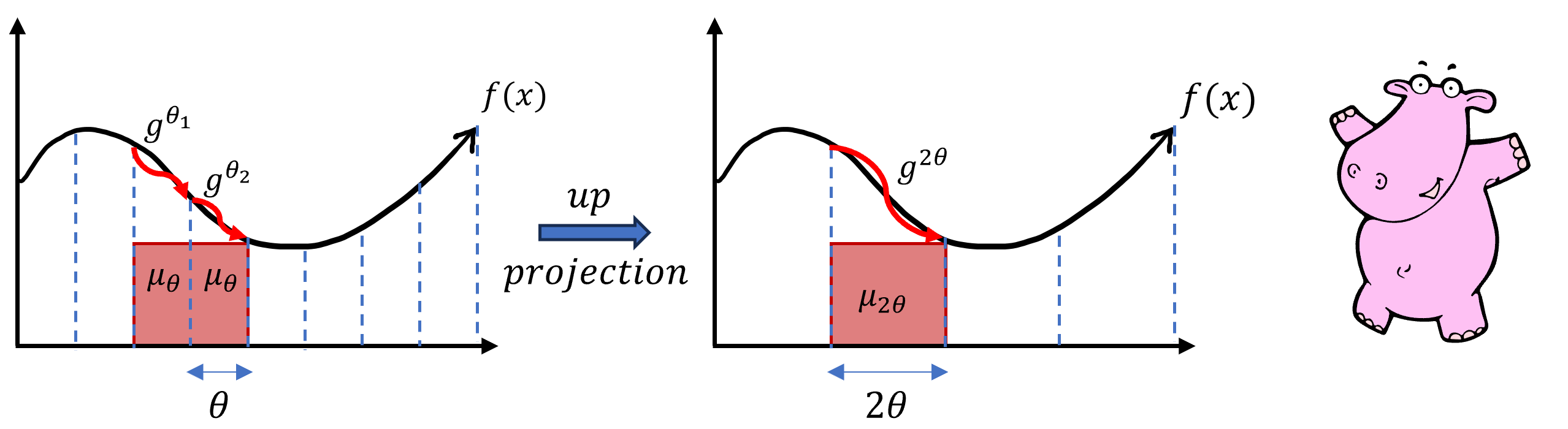}}
    \vspace{-2mm}
	\caption{\cite{hu2024attractor} Elaboration of Hippo-LegP}
    \label{hippo-legp}
    \vspace{-2mm}
\end{figure}
As shown in Figure \ref{hippo-legp} (top), In Hippo-LegT, we aim to approximate the input function in every measure window $\theta$. In Hippo-LegP, we want this approximation to be multi-scale. Specifically, when $p_n^{m\tau}(t, s) =  2^{m/2}L_n(2^{m}t-\tau)$, the piecewise window size will expand by power of 2 ($\theta \rightarrow 2\theta$).

When the window length is set to 2$\theta$, the region previously approximated by $g^{\theta_1}\in \mathcal{G}_\theta^k$ (left half) and $g^{\theta_2}\in \mathcal{G}_{\theta}^k$ (right half) will now be approximated by $g^{2\theta}\in \mathcal{G}_{2\theta}^k$. 
Since the piecewise polynomial function space can be defined as the following form:
\begin{equation}
\mathcal{G}_{(r)}^k= 
\begin{cases}
g \mid \operatorname{deg}(g)<k, &x \in\left(2^{-r} l, 2^{-r}(l+1)\right)\\ 
0, &\text{otherwise}
\end{cases},
\label{Allocation}
\end{equation}
with polynomial order $k \in \mathbb{N}$, piecewise scale $r \in \mathbb{Z}^{+} \cup\{0\}$, and piecewise internal index $l\in\{0,1,...,2^r-1\}$, it is evident that $dim(\mathcal{G}_{(r)}^k)=2^rk$, implying that $\mathcal{G}_\theta^k$ possesses a superior function capacity compared to $\mathcal{G}_{2\theta}^k$. All functions in $\mathcal{G}_{2\theta}^k$ are encompassed within the domain of $\mathcal{G}_\theta^k$. Moreover, since $\mathcal{G}_{\theta}^k$ and $\mathcal{G}_{2\theta}^k$ can be represented as space spanned by basis functions $\{\phi_i^{\theta}(x)\}$ and $\{\phi_i^{2\theta}(x)\}$, any function including the basis function within the $\mathcal{G}_{2\theta}^k$ space can be precisely expressed as a linear combination of basis functions from the $\mathcal{G}_\theta^k$ space with a proper tilted measure ${\mu_{2\theta}}$:
\begin{equation}
    \phi_i^{2\theta}(x) = \sum_{j=0}^{k-1} H_{i j}^{\theta_1}\phi_i^{\theta}(x)_{x\in{[\theta_1]}}+ \sum_{j=0}^{k-1} H_{i j}^{\theta_2}\phi_i^{\theta}(x)_{x\in{[\theta_2]}},
    \label{phi_proj}
\end{equation}
and back-projection can be achieved through the Moore-Penrose inverse $H^\dag$. By taking the inner product with $f$ on both sides in Eq. \ref{phi_proj}, we can project the state representation $x(t)$ between $\mathcal{G}_\theta^k$ space and the $\mathcal{G}_{2\theta}^k$ space by considering the odd and even positions along the $L$ dimension in $x$:
\begin{equation}
\centering
\begin{aligned}
    x_t^{2\theta}&=H^{\theta_1} x_t^{\theta_1} +H^{\theta_2}x_t^{\theta_2}. \quad x^{2\theta}\in\mathbb{R}^{B\times L/2 \times D \times N}\\
    x_t^{\theta_1}&=H^{\dag\theta_1} x_t^{2\theta}, \quad x_t^{\theta_2}=H^{\dag\theta_2}x_t^{2\theta}.
\end{aligned}
\end{equation}
$H$ and $H^\dag$ can be achieved by Gaussian Quadrature \cite{gupta2021multiwavelet}. Iteratively repeating this process enables us to model the temporal dynamic from a more macroscopic perspective. 

\begin{remark}
    Although $\bm{\dt}$ as a kind of attention mechanism leads to different measure windows, ie., $\theta_1\neq \theta_2$, it still maintains the linear projection property for up and down projection:$\mathcal{G}_{\theta_1},\mathcal{G}_{\theta_2}\leftrightarrow \mathcal{G}_{\theta_1+\theta_2}$. In our illustration, we have used a unified measure window for simplicity.
\end{remark}

We can firstly obtain the state representation $x\in\mathbbm{R/C}^{L}$ by Equation \ref{eq:ssm-a}, and iteratively perform up projection based on the sequence positions. In each scale, we use a unified $\bm{C}$ matrix to get output. Then, the back projection is employed to recover the sequence in length L. $H$, and $H^\dag$ are gradients opened to be adaptively optimized. It can be seen as an additional block after the SSM kernel to capture multi-scale information.




\subsection{Visual Explanation }
\label{visual}
\begin{figure}[h]
	\centering
    {\includegraphics[width=.49\columnwidth]{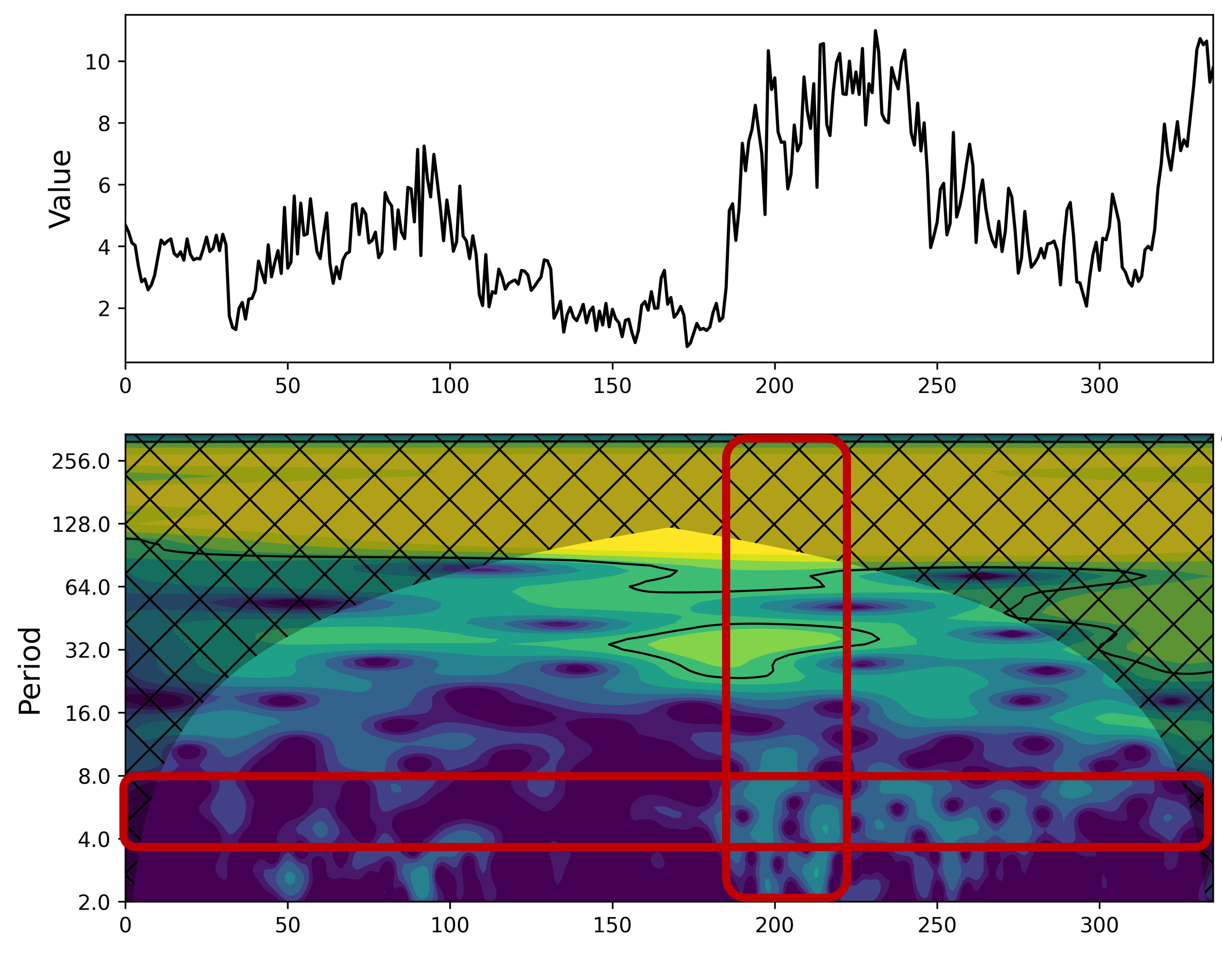}}
    {\includegraphics[width=.49\columnwidth]{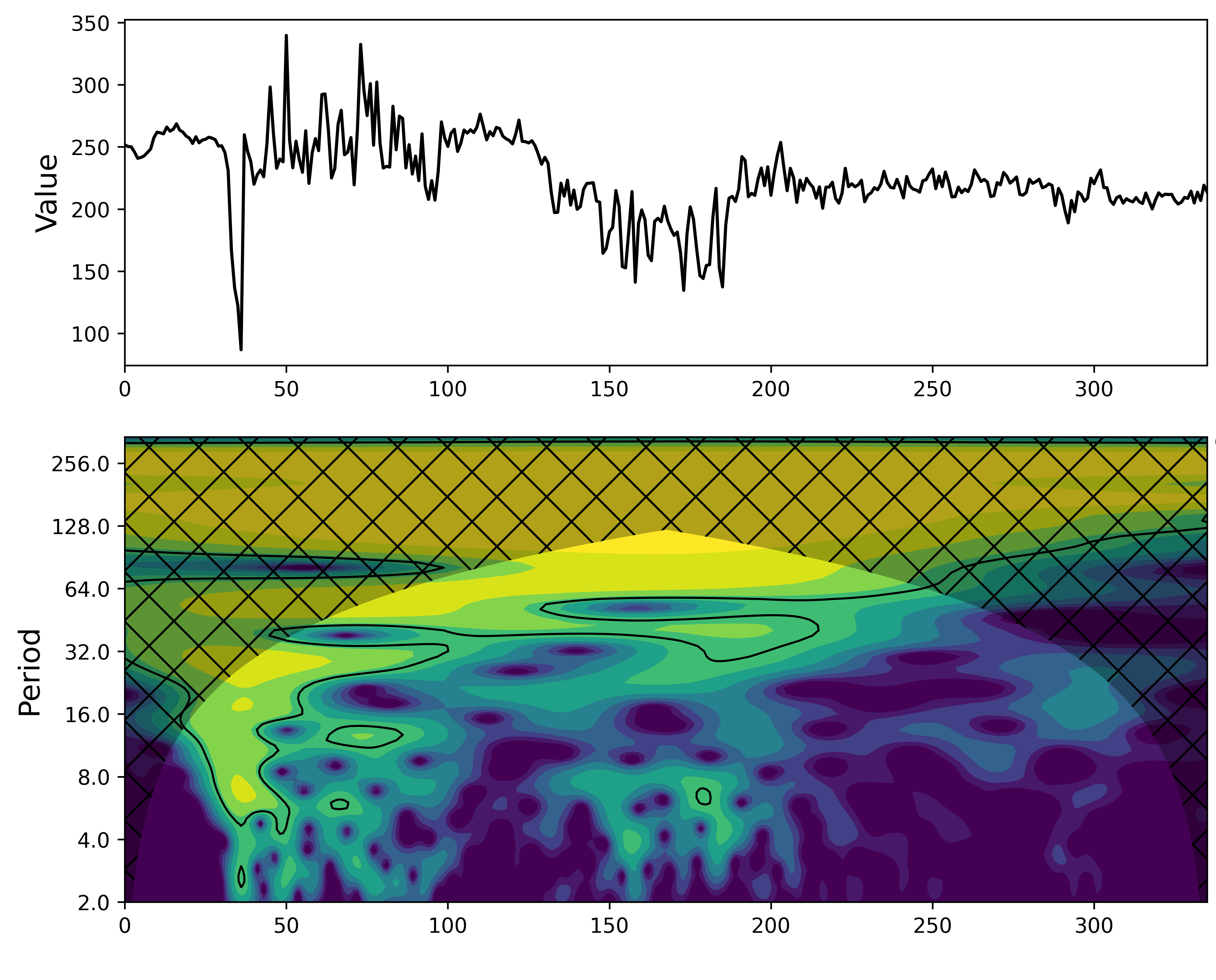}}
    \vspace{-2mm}
	\caption{\cite{hu2024twins} Examples of the wv and wd variables in the Weather dataset. The x-axis is the shared time steps. The graph above shows the variable values over time. The one below is the wavelet level plot, representing the strength of the signal's energy at different time-frequency scales.}
    \label{Multi wavelets analys}
    \vspace{-2mm}
\end{figure}
Due to the non-stationarity of real-world time series, their dynamics are often not uniformly distributed along the time axis. As shown in Figure \ref{Multi wavelets analys}, we visualize the energy level distribution of real-world time series using wavelet analysis, which effectively demonstrates this phenomenon. Therefore, both patch operations and the Hippo-LegP framework aim to model temporal dynamics in a piecewise approximation manner. Furthermore, we observe significant differences in the dynamics distribution among different variables or the existence of temporal delay phenomena. This is why modeling the relationships between variables can lead to significant performance improvements, particularly in small-scale datasets.
\section{Proofs}
\label{proof}
\begin{lemma} 
    For a differential equation of $x^{\prime}(t)=\bm{A} x(t)+\bm{B} u(t)$, its general solution is: 
$$
x(t)=e^{\bm{A}\left(t-t_0\right)} x\left(t_0\right)+\int_{t_0}^t e^{\bm{A}(t-s)} \bm{B} u(s) \mathrm{d}s.
$$
\end{lemma}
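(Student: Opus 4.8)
The plan is to use the classical integrating-factor (variation of parameters) technique, treating the matrix exponential $e^{-\bm{A}t}$ as an integrating factor. First I would recall that the matrix exponential, defined through its power series, satisfies $\frac{\mathrm{d}}{\mathrm{d}t}e^{-\bm{A}t} = -\bm{A}\,e^{-\bm{A}t} = -e^{-\bm{A}t}\bm{A}$, so in particular $\bm{A}$ commutes with $e^{-\bm{A}t}$. This commutation is the only structural fact I need, and it is immediate from the series representation since every term is a power of $\bm{A}$, which trivially commutes with $\bm{A}$ itself.

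Next I would left-multiply the ODE $x'(t) = \bm{A}x(t) + \bm{B}u(t)$ by $e^{-\bm{A}t}$ and rearrange to obtain $e^{-\bm{A}t}x'(t) - e^{-\bm{A}t}\bm{A}\,x(t) = e^{-\bm{A}t}\bm{B}u(t)$. The key observation is that, by the product rule for matrix-valued functions together with the derivative formula above, the left-hand side is exactly $\frac{\mathrm{d}}{\mathrm{d}t}\bigl(e^{-\bm{A}t}x(t)\bigr)$.

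Then I would integrate both sides from $t_0$ to $t$. By the fundamental theorem of calculus the left side telescopes to $e^{-\bm{A}t}x(t) - e^{-\bm{A}t_0}x(t_0)$, while the right side becomes $\int_{t_0}^t e^{-\bm{A}s}\bm{B}u(s)\,\mathrm{d}s$. Finally I would left-multiply through by $e^{\bm{A}t}$ and use $e^{\bm{A}t}e^{-\bm{A}s} = e^{\bm{A}(t-s)}$ (valid because $\bm{A}t$ and $-\bm{A}s$ commute) to pull the factor inside the integral, which yields the claimed formula.

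The only genuinely delicate point is the commutativity and the derivative of the matrix exponential; everything else is a mechanical application of the product rule and the fundamental theorem of calculus. I would close by noting that the two resulting terms are precisely the zero-input response $e^{\bm{A}(t-t_0)}x(t_0)$ and the zero-state response $\int_{t_0}^t e^{\bm{A}(t-s)}\bm{B}u(s)\,\mathrm{d}s$ referenced in the main text.
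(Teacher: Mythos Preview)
Your proof is correct and uses essentially the same integrating-factor technique as the paper. The paper's version is more pedagogical: it first works out the scalar case with integrating factor $e^{-a(t-t_0)}$, then constructs the matrix exponential from scratch via its power series (by solving $\dot X = AX$, $X(t_0)=I$ with a power-series ansatz), and only then asserts that the scalar argument generalizes; you instead go directly to the matrix case and make explicit the one fact the paper glosses over---that $\bm{A}$ commutes with $e^{-\bm{A}t}$ so the product rule applies cleanly.
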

\begin{proof}
    \label{proof: general solution}
Let's start with the one-dimensional system represented by the scalar differential equation
\begin{equation}
    \dot{x}(t)=a x(t)+b u(t) \quad x\left(t_0\right)=x_0
\end{equation}
in which $a$ and $b$ are scalar constants, and $u(t)$ is a given scalar input signal. A traditional approach for deriving a solution formula for the scalar state $x(t)$ is to multiply both sides of the differential equation by the integrating factor $e^{-a\left(t-t_0\right)}$ to yield
$$
\begin{aligned}
\frac{d}{d t}\left(e^{-a\left(t-t_0\right)} x(t)\right) & =e^{-a\left(t-t_0\right)} \dot{x}(t)-e^{-a\left(t-t_0\right)} a x(t) \\
& =e^{-a\left(t-t_0\right)} b u(t)
\end{aligned}
$$
We next integrate from $t_0$ to $t$ and invoke the fundamental theorem of calculus to obtain
$$
\begin{aligned}
e^{-a\left(t-t_0\right)} x(t)-e^{-a\left(t-t_0\right)} x\left(t_0\right) & =\int_{t_0}^t \frac{d}{d t}\left(e^{-a\left(s-t_0\right)} x(s)\right) d s \\
& =\int_{t_0}^t e^{-a\left(s-t_0\right)} b u(s) d s .
\end{aligned}
$$

After multiplying through by $e^{a\left(t-t_0\right)}$ and some manipulation, we get
\begin{equation}
    x(t)=e^{a\left(t-t_0\right)} x_0+\int_{t_0}^t e^{a(t-s)} b u(s) d s
    \label{scaler solution}
\end{equation}
which expresses the state response $x(t)$ as a sum of terms, the first owing to the given initial state $x\left(t_0\right)=x_0$ and the second owing to the specified input signal $u(t)$. Notice that the first component characterizes the state response when the input signal is identically zero. We therefore refer to the first term as the zero-input response component. Similarly, the second component characterizes the state response for zero initial state, referred to as the zero-state response component.

\textbf{Now, let us derive a closed-form solution to the $n$-dimensional linear time-invariant state equation \eqref{eq:ssm}} given a specified initial state $x\left(t_0\right)=x_0$ and input vector $u(t)$. We begin with a related homogeneous matrix differential equation
\begin{equation}
\dot{X}(t)=A X(t) \quad X\left(t_0\right)=I
\label{2.6}
\end{equation}
where $I$ is the $n \times n$ identity matrix. We assume an infinite power series form for the solution
\begin{equation}
X(t)=\sum_{k=0}^{\infty} X_k\left(t-t_0\right)^k
\label{2.7}
\end{equation}

Each term in the sum involves an $n \times n$ matrix $X_k$ to be determined and depends only on the elapsed time $t-t_0$, reflecting the time-invariance of the state equation. The initial condition for Equation \eqref{2.6} yields $X\left(t_0\right)=$ $X_0=I$. Substituting Equation \eqref{2.7} into Equation \eqref{2.6}, formally differentiating term by term with respect to time, and shifting the summation index gives
$$
\begin{aligned}
\sum_{k=0}^{\infty}(k+1) X_{k+1}\left(t-t_0\right)^k & =A\left(\sum_{k=0}^{\infty} X_k\left(t-t_0\right)^k\right) \\
& =\sum_{k=0}^{\infty} A X_k\left(t-t_0\right)^k
\end{aligned}
$$

By equating like powers of $t-t_0$, we obtain the recursive relationship
$$
X_{k+1}=\frac{1}{k+1} A X_k \quad k \geq 0
$$
which, when initialized with $X_0=I$, leads to
$$
X_k=\frac{1}{k!} A^k \quad k \geq 0
$$

Substituting this result into the power series \eqref{2.7} yields
$$
X(t)=\sum_{k=0}^{\infty} \frac{1}{k!} A^k\left(t-t_0\right)^k
$$

We note here that the infinite power series \eqref{2.7} has the requisite convergence properties so that the infinite power series resulting from term by-term differentiation converges to $\dot{X}(t)$, and Equation \eqref{2.6} is satisfied.

$$
\begin{aligned}
\sum_{k=0}^{\infty}(k+1) X_{k+1}\left(t-t_0\right)^k & =A\left(\sum_{k=0}^{\infty} X_k\left(t-t_0\right)^k\right) \\
& =\sum_{k=0}^{\infty} A X_k\left(t-t_0\right)^k
\end{aligned}
$$

By equating like powers of $t-t_0$, we obtain the recursive relationship
$$
X_{k+1}=\frac{1}{k+1} A X_k \quad k \geq 0
$$
which, when initialized with $X_0=I$, leads to
$$
X_k=\frac{1}{k!} A^k \quad k \geq 0
$$

Substituting this result into the power series \eqref{2.7} yields
$$
X(t)=\sum_{k=0}^{\infty} \frac{1}{k!} A^k\left(t-t_0\right)^k
$$

Note that the infinite power series \eqref{2.7} has the requisite convergence properties so that the infinite power series resulting from term-by-term differentiation converges to $\dot{X}(t)$, and Equation \eqref{2.6} is satisfied.

Recall that the scalar exponential function is defined by the following infinite power series
$$
\begin{aligned}
e^{a t} & =1+a t+\frac{1}{2} a^2 t^2+\frac{1}{6} a^3 t^3+\cdots \\
& =\sum_{k=0}^{\infty} \frac{1}{k!} a^k t^k
\end{aligned}
$$

Motivated by this, we define the so-called matrix exponential via
\begin{equation}
\begin{aligned}
    e^{A t} & =I+A t+\frac{1}{2} A^2 t^2+\frac{1}{6} A^3 t^3+\cdots \\
& =\sum_{k=0}^{\infty} \frac{1}{k!} A^k t^k
\end{aligned}
\label{2.8}
\end{equation}
Now, we can finally generalize \eqref{scaler solution} to Lemma \ref{general solution}. 

It is important to point out that $e^{A t}$ is merely a notation used to represent the power series in Equation \eqref{2.8}. Beyond the scalar case, the matrix exponential never equals the matrix of scalar exponentials corresponding to the individual elements in the matrix $A$. That is,
$$
e^{A t} \neq\left[e^{a_{i j} t}\right]
$$
\end{proof}

\begin{lemma}
    Any nonlinear, continuous differentiable dynamic
    \begin{equation}
        \dot{x}(t) =f(x(t), u(t), t)
        \label{nonlinear dynamic}
    \end{equation}
    can be represented by its linear nominal SSM ($\tilde{x}, \tilde{u}$) plus a third-order infinitesimal quantity:
    \small
$$
\dot{x}(t)=\bm{A} x(t)+\bm{B}u(t) +\mathcal{O}(x,u); ~~~~~ \bm{A}=\left.\frac{\partial f}{\partial x}(x, u)\right|_{\tilde{x}, \tilde{u}}; \bm{B}=\left.\frac{\partial f}{\partial u}(x, {u})\right|_{\tilde{x}, \tilde{u}}.
$$
\end{lemma}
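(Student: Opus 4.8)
The plan is to recognize this statement as the classical \emph{linearization} of a nonlinear dynamical system about a nominal operating point, and to prove it by a first-order multivariable Taylor expansion of $f$. First I would fix a nominal pair $(\tilde{x}, \tilde{u})$ — either an equilibrium or a reference trajectory satisfying $\dot{\tilde{x}}(t) = f(\tilde{x}(t), \tilde{u}(t), t)$ — and introduce the deviation variables $\delta x = x - \tilde{x}$ and $\delta u = u - \tilde{u}$. The entire lemma then reduces to expanding $f$ around $(\tilde{x}, \tilde{u})$ and collecting terms; no property beyond Taylor's theorem and the chain rule is needed.

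Next I would apply Taylor's theorem to $f$ in the joint variable $(x,u)$ at the base point $(\tilde{x}, \tilde{u})$:
$$
f(x,u,t) = f(\tilde{x}, \tilde{u}, t) + \frac{\partial f}{\partial x}\Big|_{\tilde{x},\tilde{u}}\,\delta x + \frac{\partial f}{\partial u}\Big|_{\tilde{x},\tilde{u}}\,\delta u + R(\delta x, \delta u),
$$
where $R$ is the remainder. Identifying $\bm{A} = \frac{\partial f}{\partial x}|_{\tilde{x},\tilde{u}}$ and $\bm{B} = \frac{\partial f}{\partial u}|_{\tilde{x},\tilde{u}}$ as the Jacobian matrices, and subtracting the nominal equation $\dot{\tilde{x}} = f(\tilde{x}, \tilde{u}, t)$ from $\dot{x} = f(x,u,t)$, yields $\dot{\delta x} = \bm{A}\,\delta x + \bm{B}\,\delta u + R$. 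Writing this in deviation coordinates and then renaming (taking the nominal offset at the origin, so that $x,u$ denote the perturbations) produces exactly the claimed form with $\mathcal{O}(x,u) := R$.

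Finally I would control the remainder, and this is where the main obstacle lies. By the integral (or Lagrange) form of Taylor's theorem, if $f$ is twice continuously differentiable in a neighborhood of $(\tilde{x}, \tilde{u})$, then $\norm{R(\delta x, \delta u)} \le C\,(\norm{\delta x}^2 + \norm{\delta u}^2)$ locally, with $C$ bounded by the supremum of the second derivatives over a compact neighborhood; this quadratic bound is what legitimizes calling $R$ a higher-order infinitesimal. The subtlety is that the lemma hypothesizes only that $f$ is \emph{continuously differentiable} ($C^1$), under which Taylor's theorem delivers a remainder that is merely $o(\norm{\delta x} + \norm{\delta u})$ rather than a clean quadratic — let alone the literally ``third-order'' — estimate. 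To make the quoted form rigorous one must either strengthen the hypothesis (to $C^2$, or $C^3$ if the third-order phrasing is taken at face value) or read $\mathcal{O}(x,u)$ in the soft Landau sense; I would state the required smoothness explicitly, after which the bound follows from standard estimates on the Hessian over the compact neighborhood.
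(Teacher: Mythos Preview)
Your approach is essentially the same as the paper's: both fix a nominal trajectory $(\tilde{x},\tilde{u})$, Taylor-expand $f$ about it, identify $\bm{A},\bm{B}$ as the Jacobians, and absorb everything beyond first order into the remainder (the paper also writes out the deviation form $\dot{x}_\delta = \bm{A}(t)x_\delta + \bm{B}(t)u_\delta + \mathcal{O}$). Your discussion of the smoothness needed to justify the order of the remainder is in fact more careful than the paper, which simply writes ``higher-order terms'' without addressing the $C^1$ vs.\ $C^2$/$C^3$ issue.
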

\begin{proof}
    \label{proof: nolinear}



Expanding the nonlinear maps in Equation \eqref{nonlinear dynamic} in a multivariate Taylor series about $[\tilde{x}(t), \tilde{u}(t), t]$ we obtain
$$
\begin{aligned}
\dot{x}(t)= & f[x(t), u(t), t] \\
= & f[\tilde{x}(t), \tilde{u}(t), t]+\frac{\partial f}{\partial x}[\tilde{x}(t), \tilde{u}(t), t][x(t)-\tilde{x}(t)] \\
& +\frac{\partial f}{\partial u}[\tilde{x}(t), \tilde{u}(t), t][u(t)-\tilde{u}(t)]+\text { higher-order terms } 
\end{aligned}
$$

On defining the Jacobian matrix as the coefficient matrix
$$
\begin{aligned}
& A(t)=\frac{\partial f}{\partial x}(\tilde{x}(t), \tilde{u}(t), t) \\
& B(t)=\frac{\partial f}{\partial u}(\tilde{x}(t), \tilde{u}(t), t)
\end{aligned}
$$
rearranging slightly we have
$$
\dot{x}(t)=\bm{A} x(t)+\bm{B}u(t) +\mathcal{O}(x,u).
$$
Deviations of the state, input, and output from their nominal trajectories are denoted by $\delta$ subscripts
$$
\begin{aligned}
& x_\delta(t)=x(t)-\tilde{x}(t) \\
& u_\delta(t)=u(t)-\tilde{u}(t)
\end{aligned}
$$
We also have
$$
\dot{x}_\delta(t)=\bm{A}(t) x_\delta(t)+\bm{B}(t) u_\delta(t)+\mathcal{O}(x,u)
$$
\end{proof}

\begin{lemma} \textbf{(Approximation Error Bound)}
    Suppose that the function $f:[0,1] \in \mathbb{R}$ is $k$ times continuously differentiable, the piecewise polynomial $g$ approximates $f$ with mean error bounded as follows:
$$
\left\|f-g\right\| \leq 2^{-r k} \frac{2}{4^k k !} \sup _{x \in[0,1]}\left|f^{(k)}(x)\right|.
$$
\end{lemma}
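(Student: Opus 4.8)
The plan is to localize the estimate to a single dyadic subinterval, invoke the extremal (minimax) property of Chebyshev polynomials there, and then reassemble the pieces. Recall from the construction of $\mathcal{G}_{(r)}^k$ that the approximant $g$ is a polynomial of degree less than $k$ on each of the $2^r$ dyadic intervals of width $h = 2^{-r}$, and that $g$ is obtained as the $L^2$-orthogonal (Legendre) projection of $f$ onto this space. Because the pieces are disjoint and the polynomial space decouples across them, the restriction of $g$ to any one subinterval is a best mean-square approximant there, so on each piece its error is no larger than that of \emph{any} competing degree-$(k-1)$ polynomial. I would therefore first fix one subinterval $I$ of width $h$ and bound the error of a conveniently chosen comparison polynomial on it.

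On $I$, take $p$ to be the degree-$(k-1)$ polynomial interpolating $f$ at the $k$ Chebyshev nodes of $I$. The classical interpolation remainder gives, for each $x \in I$,
$$f(x) - p(x) = \frac{f^{(k)}(\xi_x)}{k!}\prod_{i=1}^{k}(x - x_i), \qquad \xi_x \in I.$$
The node product $\prod_{i=1}^{k}(x-x_i)$ is a monic polynomial of degree $k$, and the Chebyshev choice of nodes is precisely the one minimizing its sup norm on $I$: the minimal-sup-norm monic degree-$k$ polynomial on an interval of width $h$ is the rescaled monic Chebyshev polynomial, whose sup norm equals $2\left(\tfrac{h}{4}\right)^k$. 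Hence $\sup_{x\in I}\left|\prod_i(x-x_i)\right| = 2\left(\tfrac{h}{4}\right)^k$, and therefore
$$\sup_{x \in I}\left|f(x) - p(x)\right| \le \frac{1}{k!}\cdot 2\Big(\frac{h}{4}\Big)^k \sup_{x\in I}\left|f^{(k)}(x)\right| = \frac{2}{4^k k!}\,h^k \sup_{x\in I}\left|f^{(k)}(x)\right|.$$

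With $h = 2^{-r}$ and bounding the local $\sup_I|f^{(k)}|$ by the global $\sup_{[0,1]}|f^{(k)}|$, the right-hand side is exactly $2^{-rk}\frac{2}{4^k k!}\sup_{[0,1]}|f^{(k)}|$ on every piece. Since $g|_I$ is a best mean-square approximant, its mean error on $I$ is dominated by that of $p$, which is in turn dominated by the uniform bound just obtained (the domain having unit total measure, so mean error $\le$ sup error). Taking the maximum over the $2^r$ subintervals preserves the same constant, which yields the stated estimate for $\|f-g\|$.

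The main obstacle is the minimax step: establishing that the Chebyshev nodes minimize the sup norm of the monic node polynomial and that this minimum equals $2\left(\tfrac{h}{4}\right)^k$. This is the classical extremal property of Chebyshev polynomials (the monic Chebyshev polynomial of degree $k$ has the least sup norm, namely $2^{1-k}$, on $[-1,1]$ among all monic degree-$k$ polynomials), combined with the affine rescaling from $[-1,1]$ to an interval of width $h$, which contributes a factor $\left(\tfrac{h}{2}\right)^k$; together these give $2^{1-k}\left(\tfrac{h}{2}\right)^k = 2\left(\tfrac{h}{4}\right)^k$. Everything else — the interpolation-remainder formula and the optimality of the orthogonal projection — is routine, so the entire sharpness of the constant $\tfrac{2}{4^k}$ rests on this single Chebyshev extremal fact rather than on a cruder Taylor-remainder estimate (which would lose a factor $2^{k-1}$).
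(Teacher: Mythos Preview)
Your proof is correct and follows essentially the same route as the paper: decompose $[0,1]$ into the $2^r$ dyadic subintervals, use that the $L^2$-projection onto $\mathcal{G}_{(r)}^k$ is optimal on each piece and hence dominated by the Chebyshev interpolant, and invoke the Chebyshev interpolation error bound to obtain the constant $2^{-rk}\tfrac{2}{4^k k!}$. The only difference is cosmetic: the paper sums the squared errors over the subintervals and takes a square root, whereas you bound the mean error by the uniform sup-error directly; and you spell out the Chebyshev minimax computation that the paper simply cites as ``the well-known maximum error bound for Chebyshev interpolation.''
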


\begin{proof}
\label{proof for approximation error}
Similar to \cite{alpert1993class}, we divide the interval $[0,1]$ into subintervals on which $g$ is a polynomial; the restriction of $g$ to one such subinterval $I_{r, l}$ is the polynomial of degree less than $k$ that approximates $f$ with minimum mean error. Also, the optimal $g$ can be regarded as the orthonormal projection $Q_r^k f$ onto $\mathcal{G}_{(r)}^k$. We then use the maximum error estimate for the polynomial, which interpolates $f$ at Chebyshev nodes of order $k$ on $I_{r, l}$.
We define $I_{r, l}=\left[2^{-r} l, 2^{-r}(l+1)\right]$ for $l=0,1, \ldots, 2^r-1$, and obtain
$$
\begin{aligned}
\left\|Q_r^k f-f\right\|^2 & =\int_0^1\left[\left(Q_r^k f\right)(x)-f(x)\right]^2 d x \\
& =\sum_l \int_{I_{r, l}}\left[\left(Q_r^k f\right)(x)-f(x)\right]^2 d x \\
& \leq \sum_l \int_{I_{r, l}}\left[\left(C_{r, l}^k f\right)(x)-f(x)\right]^2 d x \\
& \leq \sum_l \int_{I_{r, l}}\left(\frac{2^{1-r k}}{4^k k !} \sup _{x \in I_{r, l}}\left|f^{(k)}(x)\right|\right)^2 d x \\
& \leq\left(\frac{2^{1-r k}}{4^k k !} \sup _{x \in[0,1]}\left|f^{(k)}(x)\right|\right)^2
\end{aligned}
$$
and by taking square roots, we have bound (7). Here $C_{r, l}^k f$ denotes the polynomial of degree $k$, which agrees with $f$ at the Chebyshev nodes of order $k$ on $I_{r, l}$, and we have used the well-known maximum error bound for Chebyshev interpolation.

The error of the approximation $Q_r^k f$ of $f$ therefore decays like $2^{-r k}$ and, since $S_r^k$ has a basis of $2^r k$ elements, we have convergence of order $k$. For the generalization to $m$ dimensions in the dynamic structure modeling, a similar argument shows that the rate of convergence is of order $k / m$.
\end{proof}

\begin{proposition} The process of gradient updates in the IOSSM($\bm{ABC}$) is a spectral space skewing:
$$span\{\phi_i\} \rightarrow span\{\psi_i\}, \quad \psi_i=\chi(t,s)\circ\phi_i.$$
\end{proposition}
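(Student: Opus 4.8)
The plan is to treat one gradient step as an infinitesimal perturbation of the triple $(\bm{A},\bm{B},\bm{C})$ and to track its effect on the continuous SSM kernel $K_n(t,s)$, whose components encode the basis functions $\phi_i$ spanning the spectral space. The decisive observation is that $\mathrm{span}\{\phi_i\}$ is invariant under any invertible linear recombination of the $\phi_i$; hence I must first isolate which part of the update actually moves the span, and show the remaining part acts as a composition $\phi_i\mapsto\chi(t,s)\circ\phi_i$.

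First I would decompose the update $\delta(\bm{A},\bm{B},\bm{C})$ with the help of Lemma \ref{ssm_translate}. Any component generated by an infinitesimal similarity $\bm{T}=\bm{I}+\epsilon\bm{G}$ sends $\bm{A}\mapsto\bm{T}^{-1}\bm{A}\bm{T}$, $\bm{B}\mapsto\bm{T}^{-1}\bm{B}$, $\bm{C}\mapsto\bm{C}\bm{T}$; by Lemma \ref{ssm_translate} this preserves the eigenvalues (the system dynamics) and only re-expresses the same kernel in rotated coordinates. Such a recombination replaces each $\phi_i$ by a linear combination of the $\phi_j$, so it leaves $\mathrm{span}\{\phi_i\}$ unchanged and contributes nothing to the claimed skewing. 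Therefore the span can move only through the complementary part of the update, namely the part that perturbs the spectrum of $\bm{A}$.

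Next I would diagonalize $\bm{A}=\bm{V}^{-1}\bm{\Lambda}\bm{V}$ as in the discussion following Lemma \ref{ssm_translate} and push a spectral perturbation $\bm{\Lambda}\mapsto\bm{\Lambda}+\delta\bm{\Lambda}$ through the kernel. Since the kernel depends on $\bm{A}$ only via $e^{\bm{A}(t-s)}$ and diagonal matrices commute, in the eigenbasis each mode is modulated by $e^{\delta\lambda_i(t-s)}$, i.e. $\psi_i=\chi(t,s)\circ\phi_i$ with $\chi=\mathrm{diag}\!\big(e^{\delta\lambda_i(t-s)}\big)$ a warping of the $(t,s)$-plane rather than a linear mixing of the basis. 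This is exactly the reweighting that carries one measure/basis pair into another in the spectral-transform dictionary of Propositions \ref{fourier}ff.\ (Fourier $\to$ Laplace $\to$ wavelet), so the warped family stays inside the same functional class. Combining the span-preserving and span-warping parts yields $\mathrm{span}\{\phi_i\}\to\mathrm{span}\{\psi_i\}$ with $\psi_i=\chi(t,s)\circ\phi_i$, as claimed.

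The main obstacle is the third step: one must argue that the span-changing part of the gradient really acts as a controlled composition with $\chi(t,s)$ and not as an uncontrolled distortion. Because distinct modes may receive distinct increments $\delta\lambda_i$, $\chi$ is most honestly read as the diagonal warping operator above, and I would have to verify that it maps the initialized orthogonal family $\{\phi_i\}$ into a well-defined family $\{\psi_i\}$ spanning a valid spectral space, i.e.\ that invertibility in the sense of Definition \ref{def: Issm} is retained. Handling the genuinely time-varying setting, where $\bm{B}$, $\bm{C}$, and the step $\bm{\dt}$ are input-dependent, requires restricting to the frozen-$\bm{A}$ regime adopted throughout Section \ref{real-ssm}, so that Lemma \ref{ssm_translate} applies mode-by-mode and the perturbation can be cleanly separated into its similarity and spectral components.
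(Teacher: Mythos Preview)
Your route is genuinely different from the paper's. The paper does not track the SSM parameters or the kernel $e^{\bm{A}(t-s)}$ at all; it simply invokes the measure-tilting construction from the original $\mathsf{Hippo}$ paper. Its whole argument is: for any scalar function $\chi^{(t)}(x)$, the tilted family $p_n\chi$ is automatically orthogonal with respect to the tilted density $\omega/\chi^2$, so one may pass from the pair $(p_n,\omega)$ to $(\lambda_n\zeta^{1/2}p_n\chi,\,\omega/(\zeta\chi^2))$ while retaining an orthogonal-projection structure; a gradient step is then \emph{declared} to be such a passage. There is no similarity decomposition, no use of Lemma~\ref{ssm_translate}, and no eigenvalue perturbation---the single computation is $\|\zeta^{1/2}p_n\chi\|_{\nu}^2=\|p_n\|_\mu^2$. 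What the paper's argument buys is that $\chi$ is one function acting uniformly on every basis element, so $\psi_i=\chi\circ\phi_i$ holds literally with a single $\chi$; what your argument buys is an explicit mechanism linking the parameter update to the shape of $\chi$ (namely $\chi\sim e^{\delta\lambda(t-s)}$), which the paper never supplies.

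There is, however, a gap in your decomposition. Splitting $\delta(\bm{A},\bm{B},\bm{C})$ into an infinitesimal similarity plus a spectral shift of $\bm{A}$ accounts only for perturbations that move $\bm{A}$; a generic gradient step also moves $\bm{B}$ and $\bm{C}$ along directions that are \emph{not} of the form $(-\bm{G}\bm{B},\,\bm{C}\bm{G})$ for any $\bm{G}$, and these residual displacements alter the kernel components $e^{\bm{A}(t-s)}\bm{B}$ (hence the $\phi_i$) without touching the spectrum of $\bm{A}$. Your argument gives no reason why such residual $\bm{B},\bm{C}$ motion should act as a pointwise multiplier $\chi(t,s)$ rather than as an uncontrolled deformation, and your ``main obstacle'' paragraph only addresses the mode-dependence of $\delta\lambda_i$, not this missing piece. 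The paper sidesteps the issue entirely by working at the basis/measure level from the outset, where the tilting degree of freedom $\chi$ already parametrizes the admissible spectral spaces and no parameter-level decomposition is needed.
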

\begin{proof}
    \label{proof: space skewing}
It can be regarded as a basis skewing from an orthogonal basis to an arbitrary basis by skewing in the basis itself or the measure, as shown in Hippo \cite{gu2020hippo}.

For any scaling function
$$
\chi(t, x)=\chi^{(t)}(x),
$$
the functions $p_n(x) \chi(x)$ are orthogonal with respect to the density $\omega / \chi^2$ at every time $t$. Thus, we can choose this alternative basis and measure to perform the projections.

To formalize this tilting with $\chi$, define $\nu^{(t)}$ to be the normalized measure with density proportional to $\omega^{(t)} /\left(\chi^{(t)}\right)^2$.
We will calculate the normalized measure and the orthonormal basis for it. Let
$$
\zeta(t)=\int \frac{\omega}{\chi^2}=\int \frac{\omega^{(t)}(x)}{\left(\chi^{(t)}(x)\right)^2} \mathrm{~d} x
$$
be the normalization constant, so that $\nu^{(t)}$ has density $\frac{\omega^{(t)}}{\zeta(t)\left(\chi^{(t)}\right)^2}$. If $\chi(t, x)=1$ (no tilting), this constant is $\zeta(t)=1$. In general, we assume that $\zeta$ is constant for all $t$; if not, it can be folded into $\chi$ directly.
Next, note that (dropping the dependence on $x$ inside the integral for shorthand)
$$
\begin{aligned}
\left\|\zeta(t)^{\frac{1}{2}} p_n^{(t)} \chi^{(t)}\right\|_{\nu^{(t)}}^2 & =\int\left(\zeta(t)^{\frac{1}{2}} p_n^{(t)} \chi^{(t)}\right)^2 \frac{\omega^{(t)}}{\zeta(t)\left(\chi^{(t)}\right)^2} \\
& =\int\left(p_n^{(t)}\right)^2 \omega^{(t)} \\
& =\left\|p_n^{(t)}\right\|_{\mu^{(t)}}^2=1 .
\end{aligned}
$$

Thus we define the orthogonal basis for $\nu^{(t)}$
$$
g_n^{(t)}=\lambda_n \zeta(t)^{\frac{1}{2}} p_n^{(t)} \chi^{(t)}, \quad n \in \mathbb{N} .
$$

We let each element of the basis be scaled by a $\lambda_n$ scalar, for reasons discussed soon, since arbitrary scaling does not change orthogonality:
$$
\left\langle g_n^{(t)}, g_m^{(t)}\right\rangle_{\nu^{(t)}}=\lambda_n^2 \delta_{n, m}
$$

Note that when $\lambda_n= \pm 1$, the basis $\left\{g_n^{(t)}\right\}$ is an orthonormal basis with respect to the measure $\nu^{(t)}$, at every time $t$. Notationally, let $g_n(t, x):=g_n^{(t)}(x)$ as usual.

\end{proof}

\begin{lemma} For the n-dimensional SSM($\bm{A}$,$\bm{B}$,$\bm{C}$), any transformation defined by a nonsingular matrix $\bm{T}$ will generate a transformed SSM($\hat{\bm{A}}$,$\hat{\bm{B}}$,$\hat{\bm{C}}$):
$$
\hat{\bm{A}}=\bm{T}^{-1} \bm{A} \bm{T} \quad \hat{\bm{B}}=\bm{T}^{-1} \bm{B} \quad \hat{\bm{B}}=\bm{C} \bm{T},
$$
where the dynamic matrix $\hat{\bm{A}}$ has the same characteristic polynomial and eigenvalues (same dynamics) as $\bm{A}$, but its eigenvectors are different. 
\end{lemma}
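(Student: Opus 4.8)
The plan is to recognize this as the standard state-space similarity transformation and to prove it by an explicit change of variables in the latent state, after which the spectral claims follow from elementary determinant algebra. First I would introduce the coordinate change $\hat{x}(t) = \bm{T}^{-1} x(t)$ (equivalently $x(t) = \bm{T}\hat{x}(t)$), which is a linear, invertible relabeling of the $N$-dimensional latent space of Equation~\eqref{eq:ssm}. Substituting $x = \bm{T}\hat{x}$ into the state equation~\eqref{eq:ssm-a} gives $\bm{T}\hat{x}'(t) = \bm{A}\bm{T}\hat{x}(t) + \bm{B}u(t)$; left-multiplying by $\bm{T}^{-1}$ isolates $\hat{x}'(t) = (\bm{T}^{-1}\bm{A}\bm{T})\hat{x}(t) + (\bm{T}^{-1}\bm{B})u(t)$, from which I read off $\hat{\bm{A}} = \bm{T}^{-1}\bm{A}\bm{T}$ and $\hat{\bm{B}} = \bm{T}^{-1}\bm{B}$. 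Applying the same substitution to the output equation~\eqref{eq:ssm-b} yields $y(t) = \bm{C}\bm{T}\hat{x}(t)$, so $\hat{\bm{C}} = \bm{C}\bm{T}$. This establishes the three transformation formulas directly from the definition of the SSM.

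Next I would verify that the transformed triple describes the \emph{same} input--output operator, which is what justifies regarding it as an equivalent system. Working with the convolution-kernel form~\eqref{eq:ssm-conv-a}, the key ingredient is that the matrix exponential commutes with conjugation: since $(\bm{T}^{-1}\bm{A}\bm{T})^k = \bm{T}^{-1}\bm{A}^k\bm{T}$ for every $k$, summing the defining power series of the matrix exponential gives $e^{t\hat{\bm{A}}} = \bm{T}^{-1} e^{t\bm{A}} \bm{T}$. Consequently $\hat{\bm{C}} e^{t\hat{\bm{A}}} \hat{\bm{B}} = \bm{C}\bm{T}(\bm{T}^{-1} e^{t\bm{A}}\bm{T})\bm{T}^{-1}\bm{B} = \bm{C}e^{t\bm{A}}\bm{B}$, so the two convolution kernels coincide and the zero-state response in Lemma~\ref{general solution} is unchanged under the transformation.

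For the spectral statement I would invoke invariance of the characteristic polynomial under similarity: $\det(\lambda \bm{I} - \hat{\bm{A}}) = \det\!\big(\bm{T}^{-1}(\lambda\bm{I} - \bm{A})\bm{T}\big) = \det(\lambda\bm{I}-\bm{A})$, using multiplicativity of the determinant together with $\det(\bm{T}^{-1})\det(\bm{T}) = 1$; hence $\hat{\bm{A}}$ and $\bm{A}$ share the same characteristic polynomial and eigenvalues. The eigenvectors, by contrast, are rotated by $\bm{T}^{-1}$: if $\bm{A}v = \lambda v$ then $\hat{\bm{A}}(\bm{T}^{-1}v) = \bm{T}^{-1}\bm{A}v = \lambda(\bm{T}^{-1}v)$, so each eigenvector $v$ of $\bm{A}$ is carried to the generally distinct vector $\bm{T}^{-1}v$. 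This is precisely the sense in which $\bm{T}$ preserves the intrinsic dynamics while changing the dynamic coordinate space, which is the geometric point the lemma is used for.

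There is no serious obstacle here, since the entire content is the algebra of similarity transformations. The one step that deserves explicit care rather than being waved through is the conjugation identity $e^{t\hat{\bm{A}}} = \bm{T}^{-1}e^{t\bm{A}}\bm{T}$, which must be derived from the power-series definition of the matrix exponential~\eqref{2.8}; it is not a consequence of any elementwise rule, as matrix exponentials do not factor through the individual entries.
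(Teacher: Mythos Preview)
Your proposal is correct and follows essentially the same approach as the paper: both introduce the coordinate change $x(t)=\bm{T}\hat{x}(t)$ and substitute into the state equation to read off $\hat{\bm{A}}=\bm{T}^{-1}\bm{A}\bm{T}$ and $\hat{\bm{B}}=\bm{T}^{-1}\bm{B}$. In fact your write-up is more complete than the paper's own proof, which stops after the state-equation substitution and does not spell out the derivation of $\hat{\bm{C}}$, the kernel invariance via $e^{t\hat{\bm{A}}}=\bm{T}^{-1}e^{t\bm{A}}\bm{T}$, or the determinant argument for the characteristic polynomial.
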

\begin{proof}
    \label{proof: translation ssm}
For the transformation
$$
x(t)=\bm{T} z(t) \quad z(t)=\bm{T}^{-1} x(t)
$$
We have
$$
\begin{aligned}
\dot{z}(t) & =\bm{T}^{-1} \dot{x}(t) \\
& =\bm{T}^{-1}[\bm{A} x(t)+\bm{B} u(t)] \\
& =\bm{T}^{-1} \bm{A} \bm{T} z(t)+\bm{T}^{-1} \bm{B} u(t)
\end{aligned}
$$

\end{proof}

\begin{definition} \textbf{(K-Lipschitz Continuity)}
Suppose that the function $f:[0,1] \in \mathbb{R}$ is $k$ times continuously differentiable, the piecewise polynomial $g\in\mathcal{G}_{r}^k$ approximates $f$ with mean error bounded as follows:
$$
\left\|f\left(x_1\right)-f\left(x_2\right)\right\| \leq K\left\|x_1-x_2\right\|, \forall x_1, x_2 \in {dom} f
$$
\end{definition}

\begin{lemma} The TOSSM($\bm{ABC}$) is K-Lipschitz with $\operatorname*{eig}(\bm{A})\leq K^2$.
\end{lemma}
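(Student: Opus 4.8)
The plan is to read the TOSSM as the linear map it applies to the state and to identify its Lipschitz constant with an operator norm, then translate that operator norm into a spectral statement about $\bm{A}$. Because the system is time-invariant, the only object governing how a perturbation of the state propagates is $\bm{A}$ itself: the vector field $f(x)=\bm{A}x+\bm{B}u$ (equivalently, the one-step transition $x\mapsto\overline{\bm{A}}x+\overline{\bm{B}}u$ after discretization) differs between two states only through $\bm{A}(x_1-x_2)$, since the input term is common. So the Lipschitz question in the definition reduces to bounding $\norm{\bm{A}(x_1-x_2)}$ by $K\norm{x_1-x_2}$.

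First I would invoke linearity: $f(x_1)-f(x_2)=\bm{A}(x_1-x_2)$, so the smallest admissible $K$ is exactly the spectral norm $K=\norm{\bm{A}}_2$. Second, I would expand the squared norm through the Rayleigh quotient with $v=x_1-x_2$,
\[
\norm{\bm{A}v}^2 = v^{\top}\bm{A}^{\top}\bm{A}\,v \le \lambda_{\max}(\bm{A}^{\top}\bm{A})\,\norm{v}^2,
\]
so that $K^2=\norm{\bm{A}}_2^2=\lambda_{\max}(\bm{A}^{\top}\bm{A})=\sigma_{\max}(\bm{A})^2$. Third, by Courant--Fischer every eigenvalue of $\bm{A}^{\top}\bm{A}$ is at most $\lambda_{\max}(\bm{A}^{\top}\bm{A})$, which is precisely the asserted bound $\operatorname{eig}(\bm{A})\le K^2$ once $\operatorname{eig}(\bm{A})$ is read as the spectrum of $\bm{A}^{\top}\bm{A}$, i.e. the squared singular values.

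The main subtlety, and the step I expect to be the real obstacle, is the identification between $\operatorname{eig}(\bm{A})$ and the eigenvalues of $\bm{A}^{\top}\bm{A}$, i.e. between eigenvalues and singular values of $\bm{A}$. In general these differ, and one only has $\abs{\lambda_i(\bm{A})}^2\le\lambda_{\max}(\bm{A}^{\top}\bm{A})=K^2$ (spectral radius bounded by spectral norm). The clean statement therefore relies on $\bm{A}$ being normal: for the orthogonal SSMs considered here one passes to the normal form $\bm{A}^{(N)}$ of Equations~\eqref{eq:hippo-legsd}--\eqref{eq:hippo-legtd} through the similarity of Lemma~\ref{ssm_translate}, and for a normal matrix $\sigma_{\max}(\bm{A})=\max_i\abs{\lambda_i(\bm{A})}$, so the squared eigenvalue magnitudes coincide with the eigenvalues of $\bm{A}^{\top}\bm{A}$ and the bound is exact. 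I would state this normality reduction explicitly. Finally I would connect back to the stated motivation: the per-step Lipschitz factor of the recurrence controls the norm of the product of backpropagated Jacobians over a horizon, so enforcing it to be at most one, which under discretization amounts to pushing the spectrum into the left half-plane so that $\norm{\overline{\bm{A}}}\le 1$, is exactly what guards against recursive gradient explosion.
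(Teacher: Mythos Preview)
Your proposal is correct and takes essentially the same approach as the paper: both reduce the Lipschitz constant of the TOSSM to the operator norm of $\bm{A}$ and then characterize it via the spectrum of $\bm{A}^{\top}\bm{A}$, the paper by an explicit expansion in the orthonormal eigenbasis of $\bm{A}^{\top}\bm{A}$ and you via the Rayleigh quotient and Courant--Fischer. Your additional normality reduction is extra care the paper omits---it simply reads $\operatorname*{eig}(\bm{A})$ in the statement as the eigenvalues of $\bm{A}^{\top}\bm{A}$ without comment.
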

\begin{proof}
    \label{proof: lipschi}
For a complex function, it has
$$
\|f \circ g\|_{\text {Lip }} \leq\|f\|_{\text {Lip }} \cdot\|g\|_{\text {Lip }}
$$
The essence of SSM lies in computing the Krylov kernel of matrix $\bm{A}$, so if we want a multi-layer SSM to be 1-Lipschitz, the $\bm{A}$ needs to be 1-Lipschitz.

For any $\bm{A}$, we have
$$\begin{aligned}
& \bm{A} \in \mathrm{K}-\text { Lipschitz, } \forall \bm{A}: R^n \rightarrow R^m / \forall \bm{A} \in R^{m \times n} \\
\Longleftrightarrow & \|\bm{A} \vec{x}\| \leq K\|\vec{x}\|, \forall \vec{x} \in R^n \\
\Longleftrightarrow & \langle \bm{A} \vec{x}, \bm{A} \vec{x}\rangle \leq K^2\langle\vec{x}, \vec{x}\rangle \\
\Longleftrightarrow & \vec{x}^T\left(\bm{A}^T \bm{A}-K^2 I\right) \vec{x} \leq 0 \\
\Longleftrightarrow & \left\langle\left(\bm{A}^T \bm{A}-K^2 I\right) \vec{x}, \vec{x}\right\rangle \leq 0
\end{aligned}$$

Since $\bm{A}^T\bm{A}$ is a real symmetric matrix, the eigenvectors corresponding to different eigenvalues are orthogonal to each other. Moreover, since $\bm{A}^T\bm{A}$ is positive semidefinite, all its eigenvalues are non-negative. Let's assume that the eigenvectors of $\bm{A}^T\bm{A} \in \mathbb{R}^{n \times n}$ are denoted as $\overrightarrow{v_i}$ for $i=1,2,\ldots,n$, with corresponding eigenvalues $\lambda_i$ for $i=1,2,\ldots,n$. Since $\overrightarrow{v_i}$ are pairwise orthogonal (not rigorously, as there may not be exactly $n$ of them), we can assume that $\overrightarrow{v_i}$ has already been normalized to unit length, thus forming an orthonormal basis for the $\mathbb{R}^n$ space. Therefore, for any $\vec{x} \in \mathbb{R}^n$, we can express it as $\vec{x}=\sum_{i=1}^n x_i \overrightarrow{v_i}$. we have:
$$
\begin{array}{ll} 
& \left\langle\left(A^T A-K^2 I\right) \vec{x}, \vec{x}\right\rangle \leq 0 \\
\Longleftrightarrow \quad & \left\langle\left(A^T A-K^2 I\right) \sum_{i=1}^n x_i \overrightarrow{v_i}, \sum_{i=1}^n x_i \overrightarrow{v_i}\right\rangle \leq 0 \\
\Longleftrightarrow \quad & \sum_{i=1}^n \sum_{j=1}^n x_i x_j\left\langle\left(A^T A-K^2 I\right) \overrightarrow{v_i}, \overrightarrow{v_j}\right\rangle \leq 0 \\
\Longleftrightarrow \quad & \sum_{i=1}^n \sum_{j=1}^n x_i x_j\left[\left(A^T A-K^2 I\right) \overrightarrow{v_i}\right]^T \overrightarrow{v_j} \leq 0
\end{array}
$$

because
$$
\begin{aligned}
\left[\left(A^T A-K^2 I\right) \overrightarrow{v_i}\right]^T \overrightarrow{v_j}={\overrightarrow{v_i}}^T\left(A^T A-K^2 I\right) \overrightarrow{v_j} = \overrightarrow{v_i}^T A^T A\overrightarrow{v_j} - K^2\overrightarrow{v_i}^T\overrightarrow{v_j} \\
= \begin{cases}0, & \text { if } i=1 j \\ \left(\lambda_i-K^2\right)\left\langle\overrightarrow{v_i}, \overrightarrow{v_i}\right\rangle=\lambda_i-K^2, & \text { if } i=j\end{cases}
\end{aligned}
$$

so we have
$$
\sum_{i=1}^n \sum_{j=1}^n x_i x_j\left[\left(A^T A-K^2 I\right) \overrightarrow{v_i}\right]^T \overrightarrow{v_j} \leq 0 \Longleftrightarrow \sum_{i=1}^n x_i^2\left(\lambda_i-K^2\right) \leq 0
$$

specifically
$$
A \in \text { K-Lipschitz }, \forall A: R^n \rightarrow R^m / \forall A \in R^{m \times n} \Longleftrightarrow \sum_{i=1}^n x_i^2\left(K^2-\lambda_i\right) \geq 0
$$

So the TOSSM($\bm{ABC}$) is K-Lipschitz with $\operatorname*{eig}(\bm{A})\leq K^2$.
\end{proof}
\section{Additional Experiments} \label{add exp}
\label{expdetails}
\subsection{Datasets}
\label{appendix:dataset}
We conduct experiments on 8 real-world datasets to evaluate the performance of our model, and the details of these datasets are provided in Table \ref{tab:dataset}. Dimension denotes the variate number of each dataset. Dataset Size denotes the total number of time points in the train, Validation, and Test) split, respectively. Forecasting Length denotes the future time points to be predicted, and four prediction settings are included in each dataset. Frequency denotes the sampling interval of time points.

Specifically, \textbf{ETT} \cite{zhou2021informer} contains 7 factors of electricity transformer from July 2016 to July 2018. We use four subsets where ETTh1 and ETTh2 is recorded every hour, ETTm1 and ETTm2 is recorded every 15 minutes. \textbf{Exchange}~\cite{wu2021autoformer} collects the panel data of daily exchange rates from 8 countries from 1990 to 2016. \textbf{Crypots} comprises historical transaction data for various cryptocurrencies, including Bitcoin and Ethereum. We select samples with $Asset\_ID$ set to 0, and remove the column $Count$. We download the data from \url{https://www.kaggle.com/competitions/g-research-crypto-forecasting/data?select=supplemental_train.csv}. \textbf{Weather} \cite{wu2021autoformer} includes 21 meteorological factors collected every 10 minutes from the Weather Station of the Max Planck Bio-geochemistry Institute in 2020. \textbf{Air Convection\footnote{https://www.psl.noaa.gov/}} dataset is obtained by scraping the data from NOAA, and it includes the data for the entire year of 2023. The dataset consists of 10 variables, including air humidity, pressure, convection characteristics, and others. The data was sampled at intervals of 15 minutes and averaged over the course of the entire year.

\begin{table}[!ht]
\centering
\caption{Detailed dataset descriptions.}
\small
\begin{tabular}{@{} c c c c c @{}}
\toprule
\textbf{Dataset} & {\textbf{Dimension}} & \textbf{Forecasting Length} & \textbf{Dataset Size} & \textbf{Information (Frequency)} \\
\midrule
ETTm1 & 7 & \{96, 192, 336, 720\} & (34369, 11425, 11425) & Electricity (15 min) \\
ETTh1 & 7 & \{96, 192, 336, 720\} & (8445, 2785, 2785) & Electricity (Hourly) \\
ETTm2 & 7 & \{96, 192, 336, 720\} & (34369, 11425, 11425) & Electricity (15 min) \\
ETTh2 & 7 & \{96, 192, 336, 720\} & (8545, 2881, 2881) & Electricity (Hourly) \\
Exchange      & 8 & \{96, 192, 336, 720\} & (5120, 665, 1422) & Exchange rate (Daily) \\
Crypto & 7 & \{96, 192, 336, 720\} & (125699, 17891, 35873) & Finance (1 min) \\
Weather       & 21 & \{96, 192, 336, 720\} & (36696, 5175, 10440) & Weather (10 min) \\
Air Convection  & 10 & \{96, 192, 336, 720\} & (14647, 4882, 4882) & Weather (15 min) \\
\bottomrule
\end{tabular}

\label{tab:dataset}
\end{table}

\subsection{Baselines}
\textbf{Mamba4TS:} Mamba4TS is a novel SSM architecture tailored for TSF tasks, featuring a parallel scan (\href{https://github.com/alxndrTL/mamba.py/tree/main}{https://github.com/alxndrTL/mamba.py/tree/main}). Additionally, this model adopts a patching operation with both patch length and stride set to 16. We use the recommended configuration as our experimental settings with a batch size of 32, and the learning rate is 0.0001.

\textbf{S-Mamba~\cite{wang2024mamba}:} S-Mamba utilizes a linear tokenization of variates and a bidirectional Mamba layer to efficiently capture inter-variate correlations and temporal dependencies. This approach underscores its potential as a scalable alternative to Transformer technologies in TSF. We download the source code from: \href{https://github.com/wzhwzhwzh0921/S-D-Mamba}{https://github.com/wzhwzhwzh0921/S-D-Mamba} and adopt the recommended setting as its experimental configuration.

\textbf{RWKV-TS~\cite{hou2024rwkv}:} RWKV-TS is an innovative RNN-based architecture for TSF that offers linear time and memory efficiency. We download the source code from: \href{https://github.com/howard-hou/RWKV-TS}{https://github.com/howard-hou/RWKV-TS}. We follow the recommended settings as experimental configuration.



\textbf{TimeMachine~\cite{atik2024timemachine}:} TimeMachine utilizes a unique quadruple-Mamba architecture to handle both channel-mixing and channel-independence effectively, allowing for precise content selection based on contextual cues at multiple scales. We download the source code from: \href{https://github.com/Atik-Ahamed/TimeMachine}{https://github.com/Atik-Ahamed/TimeMachine}. We use the recommended configuration as the experimental settings where the batch size is 64, the learning rate is 0.001 to train the model in 10 epochs for fair comparison.

\textbf{Koopa~\cite{liu2024koopa}:} Koopa is a novel forecasting model that tackles non-stationary time series using Koopman theory to differentiate time-variant dynamics. It features a Fourier Filter and Koopman Predictors within a stackable block architecture, optimizing hierarchical dynamics learning. We download the source code from: \href{https://github.com/thuml/Koopa}{https://github.com/thuml/Koopa}. We set the lookback window to fixed values of \{96, 192, 336, 720\} instead of twice the output length as in the original experimental settings.

\textbf{iTransformer~\cite{liu2023itransformer}:} iTransformer modifies traditional Transformer models for time series forecasting by inverting dimensions and applying attention and feed-forward networks across variate tokens. We download the source code from: \href{https://github.com/thuml/iTransformer}{https://github.com/thuml/iTransformer}. We follow the recommended settings as experimental configuration.

\textbf{PatchTST~\cite{nie2022time}:} PatchTST introduces a novel design for Transformer-based models tailored to time series forecasting. It incorporates two essential components: patching and channel-independent structure. We obtain the source code from: \href{https://github.com/PatchTST}{https://github.com/PatchTST}. This code serves as our baseline for long-term forecasting, and we follow the recommended settings for our experiments.

\textbf{LTSF-Linear~\cite{zeng2023transformers}:} In LTSF-Linear family, DLinear decomposes raw data into trend and seasonal components and NLinear is just a single linear models to capture temporal relationships between input and output sequences. We obtain the source code from: \href{https://github.com/cure-lab/LTSF-Linear}{https://github.com/cure-lab/LTSF-Linear}, using it as our long-term forecasting baseline and adhering to recommended settings for experimental configuration.

\textbf{TimesNet~\cite{wu2023timesnet}:} TimesNet is a task-generic framework for time series analysis that processes complex temporal variations by transforming one-dimensional time series into two-dimensional tensors based on the observation of multi-periodicity in time series. We obtain the source code from: \href{https://github.com/thuml/Time-Series-Library}{https://github.com/thuml/Time-Series-Library} and follow the recommended settings.

\textbf{GPT4TS~\cite{zhou2023one}:} This study explores the application of pre-trained language models to time series analysis tasks, demonstrating that the Frozen Pretrained Transformer (FPT), without modifications to its core architecture, achieves state-of-the-art results across various tasks. We download the source code from: \href{https://github.com/DAMO-DI-ML/NeurIPS2023-One-Fits-All}{https://github.com/DAMO-DI-ML/NeurIPS2023-One-Fits-All}. We follow the recommended settings as experimental configuration.

\textbf{Crossformer~\cite{zhang2023crossformer}:} Crossformer explicitly exploits dependencies across channel dimensions and time steps. It introduces a two-stage attention mechanism for learning dependencies and achieves linear complexity in its operations. We train the model integrated in: \href{https://github.com/thuml/Time-Series-Library}{https://github.com/thuml/Time-Series-Library}.

\textbf{Scaleformer~\cite{shabani2023scaleformer}:} Scaleformer introduces a multi-scale framework that improves the performance of transformer-based time series forecasting models, achieving higher accuracy with minimal additional computational cost. We download the source code from: \href{https://github.com/BorealisAI/scaleformer}{https://github.com/BorealisAI/scaleformer} and use the recommended configuration as the experimental settings. 

\textbf{TiDE~\cite{das2023tide}:} TiDE is an MLP-based encoder-decoder model that combines the lightweight and fast training attributes of linear models while also handling covariates and nonlinear dependencies. We train the model integrated in: \href{https://github.com/thuml/Time-Series-Library}{https://github.com/thuml/Time-Series-Library}.

\textbf{Stationary~\cite{liu2022non}:} Non-stationary Transformer focuses on exploring the stabilization issues in time series forecasting tasks. We train the model integrated in: \href{https://github.com/thuml/Time-Series-Library}{https://github.com/thuml/Time-Series-Library} and use the recommended configuration as the experimental settings.

\textbf{FEDformer~\cite{zhou2022fedformer}:} FEDformer introduces an attention mechanism with a low-rank approximation in the frequency domain, along with a mixed decomposition to control distribution shifts. We adheres to the recommended settings for the experimental configuration and download the code from \href{https://github.com/thuml/Time-Series-Library}{https://github.com/MAZiqing/FEDformer}.

\subsection{Experiments Setting}
All experiments are conducted on the NVIDIA RTX3090-24G and A6000-48G GPUs. The Adam optimizer is chosen. A grid search is performed to determine the optimal hyperparameters, including the learning rate from \{0.0001, 0.0005, 0.001\} and patch length from \{8, 16, 24\}. The stride is equal to the patch length. The primary mode in the variable kernel is 64, the hidden dim is 256, the state space dim is 64, the piecewise scale is based on $log2(L)$ (L is the patches number), the model layers are 2, and the input length is 96. All experimental results are averaged over two runs.

\subsection{Full Results}\label{full result}
As shown in Table \ref{full 96}, we provide the full experiment results.
\begin{table*}[h!]
\centering
\caption{Multivariate long-term series forecasting results in mainstream datasets with an input length of 96 and prediction horizons of \{96, 192, 336, 720\}. \textbf{{\color{red} Red}}: best, \textbf{{\color{blue} Blue}}: second best. The bottom part introduces the variable Kernel for multivariate variables.}\label{full 96}
\setlength{\tabcolsep}{10pt}
\begin{adjustbox}{width=1\columnwidth, center}
\resizebox{1\columnwidth}{!}{
\begin{tabular}{p{0.3cm}|c|cc|cc|cc||cc|cc|cc|cc|cc|cc}
\toprule[1.5pt]
\multicolumn{2}{c}{\multirow{2}{*}{\large{Model}}}&\multicolumn{2}{c}{\cellcolor{mycolor4}{Time-SSM}}&\multicolumn{2}{c}{\cellcolor{mycolor4}Mamba4TS}&\multicolumn{2}{c}{\cellcolor{mycolor4}S-Mamba}&\multicolumn{2}{c}{\cellcolor{mycolor4}RWKV-TS} &\multicolumn{2}{c}{\cellcolor{mycolor4}\color{mypurple}\textbf{TimeMachine}}&\multicolumn{2}{c}{\cellcolor{mycolor4}Koopa}&\multicolumn{2}{c}{\cellcolor{mycolor4}InvTrm}&\multicolumn{2}{c}{\cellcolor{mycolor4}PatchTST}&\multicolumn{2}{c}{\cellcolor{mycolor4}DLinear}\\
\multicolumn{2}{c}{} & \multicolumn{2}{c}{\cellcolor{mycolor4}{(Ours)}} & \multicolumn{2}{c}{\cellcolor{mycolor4}(Temporal Emb.)}&\multicolumn{2}{c}{\cellcolor{mycolor4}(Channel Emb.\cite{wang2024mamba})} & \multicolumn{2}{c}{\cellcolor{mycolor4}\cite{hou2024rwkv}} & \multicolumn{2}{c}{\cellcolor{mycolor4}\cite{atik2024timemachine}} & \multicolumn{2}{c}{\cellcolor{mycolor4}\cite{liu2024koopa}} & \multicolumn{2}{c}{\cellcolor{mycolor4}\cite{liu2023itransformer}} & \multicolumn{2}{c}{\cellcolor{mycolor4}\cite{nie2022time}} & \multicolumn{2}{c}{\cellcolor{mycolor4}\cite{zeng2023transformers}} \\
\cmidrule(l){1-2}\cmidrule(l){3-4}\cmidrule(l){5-6}\cmidrule(l){7-8}\cmidrule(l){9-10}\cmidrule(l){11-12}\cmidrule(l){13-14}\cmidrule(l){15-16}\cmidrule(l){17-18}\cmidrule(l){19-20}
\multicolumn{2}{c}{Metric}&MSE & MAE & MSE &MAE & MSE &MAE & MSE & MAE & MSE & MAE & MSE & MAE & MSE & MAE & MSE & MAE & MSE & MAE \\
\midrule
\midrule
\multirow{5}{*}{\begin{sideways}ETTh1\end{sideways}} 
& 96  & \textbf{\color{blue}0.377} & \textbf{\color{red}0.394} & 0.386 & 0.400 & 0.388 & 0.407 & 0.383 & 0.401 & 0.373 & 0.393 & 0.385 & 0.408 & 0.393 & 0.409 & \textbf{\color{red}0.375} & \textbf{\color{blue}0.396} & 0.396 & 0.410 \\
& 192 & \textbf{\color{red}0.423} & \textbf{\color{red}0.424} & \textbf{\color{blue}0.426} & 0.430 & 0.443 & 0.439 & 0.441 & 0.431 & 0.430 & 0.421 & 0.441 & 0.431 & 0.448 & 0.442 & 0.429 & \textbf{\color{blue}0.426} & 0.449 & 0.444 \\
& 336 & \textbf{\color{blue}0.466} & \textbf{\color{red}0.437} & 0.484 & 0.451 & 0.492 & 0.467 & 0.493 & 0.465 & 0.480 & 0.443 & 0.474 & 0.454 & 0.491 & 0.465 & \textbf{\color{red}0.461} & \textbf{\color{blue}0.448} & 0.487 & 0.465 \\
& 720 & \textbf{\color{red}0.452} & \textbf{\color{red}0.448} & 0.481 & 0.472 & 0.511 & 0.499 & 0.501 & 0.487 & 0.463 & 0.456 & 0.501 & 0.480 & 0.518 & 0.501 & \textbf{\color{blue}0.469} & \textbf{\color{blue}0.471} & 0.515 & 0.512 \\
& AVG & \textbf{\color{red}0.425} & \textbf{\color{red}0.426} & 0.444 & 0.438 & 0.459 & 0.453 & 0.454 & 0.446 & 0.437 & 0.428 & 0.450 & 0.443 & 0.463 & 0.454 & \textbf{\color{blue}0.434} & \textbf{\color{blue}0.435} & 0.462 & 0.458 \\
\midrule
\multirow{5}{*}{\begin{sideways}ETTh2\end{sideways}} 
& 96  & \textbf{\color{red}0.290} & \textbf{\color{red}0.341} & 0.297 & 0.347 & 0.296 &0.347 & \textbf{\color{red}0.290} & \textbf{\color{blue}0.342} & 0.284 & 0.336 & 0.317 & 0.359 & 0.302 & 0.351 & 0.295 & 0.344 & 0.353 & 0.405 \\
& 192 & \textbf{\color{red}}{0.368} & \textbf{\color{red}0.387} & 0.392 & 0.409 & 0.377 & 0.398 & \textbf{\color{blue}0.372} & \textbf{\color{blue}0.393} & 0.363 & 0.389 & 0.375 & 0.399 & 0.379 & 0.399 &0.375 & 0.399 & 0.482 & 0.479 \\
& 336 & \textbf{\color{red}0.416} & \textbf{\color{blue}0.430} & 0.424 & 0.436 & 0.425 & 0.435& \textbf{\color{blue}0.417} & 0.431 & 0.399 & 0.419 & 0.436 & 0.446 & 0.423 & 0.432 & 0.420 & \textbf{\color{red}0.429} & 0.588 & 0.539 \\
& 720 & \textbf{\color{blue}0.424} & \textbf{\color{red}0.439} & 0.431 & 0.448 & 0.427 & 0.446 & \textbf{\color{red}0.421} & \textbf{\color{blue}0.442} & 0.427 & 0.443 & 0.460 & 0.463 & 0.429 & 0.447 &0.431 & 0.451 & 0.833 & 0.658 \\
& AVG & \textbf{\color{red}0.374} & \textbf{\color{red}0.399} & 0.386 & 0.410 & 0.381 & 0.407 & \textbf{\color{blue}0.375} & \textbf{\color{blue}0.402} & 0.368 & 0.397 & 0.397 & 0.417 & 0.383 & 0.407 & 0.380 & 0.406 & 0.564 & 0.520 \\
\midrule
\multirow{5}{*}{\begin{sideways}ETTm1\end{sideways}} 
& 96  & 0.329 & \textbf{\color{blue}0.365} & 0.331 & 0.368 & 0.332 & 0.368 & 0.328 & 0.366 & 0.313 & 0.352 & \textbf{\color{red}0.322} & \textbf{\color{red}0.360} & 0.343 & 0.377 & \textbf{\color{blue}0.326} & \textbf{\color{blue}0.365} & 0.345 & 0.372 \\
& 192 & \textbf{\color{blue}0.370} & \textbf{\color{red}0.379} & 0.376 & 0.391 & 0.378 & 0.393 & 0.372 & 0.389 & 0.362 & 0.382 & 0.378 & 0.393 & 0.379 & 0.394 & \textbf{\color{red}0.365} & \textbf{\color{blue}0.383} & 0.382 & 0.391 \\
& 336 & \textbf{\color{red}0.396} & \textbf{\color{red}0.402} & 0.406 & 0.413 & 0.409 &0.414 & 0.401 & 0.409 & 0.391 & 0.400 & 0.405 & 0.413 & 0.418 & 0.418 & \textbf{\color{red}0.396} & \textbf{\color{blue}0.405} & 0.413 & 0.413 \\
& 720 & \textbf{\color{red}0.449} & \textbf{\color{blue}0.440} & 0.469 & 0.452  & 0.476 & 0.453 & 0.462& 0.446 & 0.446 & 0.435 & 0.473 & 0.447 & 0.488 & 0.458 & \textbf{\color{blue}0.458} & \textbf{\color{red}0.439} & 0.472 & 0.450 \\
& AVG & \textbf{\color{red}0.386} & \textbf{\color{red}0.396} & 0.396 & 0.406 & 0.399 & 0.407 &\textbf{\color{blue}0.391} & 0.403 & 0.378 & 0.392 & 0.395 & 0.403 & 0.407 & 0.412 & 0.403 & \textbf{\color{blue}0.398} & 0.403 & 0.406 \\
\midrule
\multirow{5}{*}{\begin{sideways}ETTm2\end{sideways}} 
& 96  & \textbf{\color{red}0.176} & \textbf{\color{red}0.260} & 0.186 & 0.268 & 0.182 &0.267 & 0.181 &0.264  & 0.175 & 0.255 & 0.180 & \textbf{\color{blue}0.261} & 0.184 & 0.269 & \textbf{\color{blue}0.177} & \textbf{\color{red}0.260} & 0.192 & 0.291 \\
& 192 & 0.246 & \textbf{\color{blue}0.305} & 0.261 & 0.320 &0.248 &0.309 & \textbf{\color{blue}0.245} & 0.307 & 0.236 & 0.297 & \textbf{\color{red}0.244} & \textbf{\color{red}0.304} & 0.253 & 0.313 & 0.246 & 0.308 & 0.284 & 0.360 \\
& 336 & 0.305 &0.344 & 0.331 & 0.366 & 0.312 &0.350 &0.306 & 0.344 & 0.302 & 0.340 & \textbf{\color{red}0.300} & \textbf{\color{red}0.340} & 0.313 & 0.351 & \textbf{\color{blue}0.302} & \textbf{\color{blue}0.343} & 0.371 & 0.420 \\
& 720 & \textbf{\color{blue}0.406} & 0.405 & 0.418 & 0.416 & 0.412 & 0.406 & \textbf{\color{blue}0.406} & 0.406 & 0.394 & 0.394 & \textbf{\color{red}0.398} & \textbf{\color{red}0.400} & 0.413 & 0.406 & 0.407 & 0.405 & 0.532 & 0.511 \\
& AVG & \textbf{\color{blue}0.283} & \textbf{\color{blue}0.328} & 0.299 & 0.343 & 0.289 & 0.333 & 0.285 & 0.330 & 0.277 & 0.323 & \textbf{\color{red}0.281} & \textbf{\color{red}0.326} & 0.291 & 0.335 & \textbf{\color{blue}0.283} & 0.329 & 0.345 & 0.396 \\
\midrule
\multirow{5}{*}{\begin{sideways}Exchange\end{sideways}} 
& 96  & \textbf{\color{red}0.083} & \textbf{\color{red}0.202} & \textbf{\color{blue}0.086} & \textbf{\color{blue}0.205} & 0.087 & 0.209 &0.129 & 0.256 & 0.087 & 0.203 & 0.093 & 0.215 & 0.097 & 0.222 & 0.093 & 0.212 & 0.094 & 0.227 \\
& 192 & \textbf{\color{red}0.170} & \textbf{\color{red}0.295} & \textbf{\color{blue}0.173} & \textbf{\color{blue}0.297} & 0.180 & 0.303 &0.231 & 0.346 & 0.180 & 0.300 & 0.189 & 0.313 & 0.184 & 0.309 & 0.201 & 0.319 & 0.185 & 0.325 \\
& 336 & 0.334 &0.418 & 0.340 & 0.423 & \textbf{\color{blue}0.330} & \textbf{\color{blue}0.417} &0.380 & 0.448 & 0.346 & 0.423 & 0.371 & 0.443 & \textbf{\color{red}0.327} & \textbf{\color{red}0.416} & 0.338 & 0.422 & \textbf{\color{blue}0.330} & 0.437 \\
& 720 & \textbf{\color{blue}0.824} & \textbf{\color{blue}0.677} & 0.855 & 0.696 & 0.860 & 0.700& 0.883 & 0.704 & 0.943 & 0.720 & 0.908 & 0.726 & 0.885 & 0.715 & 0.900 & 0.711 & \textbf{\color{red}0.774} & \textbf{\color{red}0.673} \\
& AVG & \textbf{\color{blue}0.352} & \textbf{\color{red}0.398} & 0.364 & \textbf{\color{blue}0.405} & 0.364 & 0.407 & 0.406 & 0.439 & 0.389 & 0.412 &  0.390 & 0.424 & 0.366 & 0.416 & 0.383 & 0.416 & \textbf{\color{red}0.346} & 0.416 \\
\midrule
\multirow{5}{*}{\begin{sideways}Crypto\end{sideways}} 
& 96  & \textbf{\color{blue}0.177} & 0.142 & 0.179 & 0.143 & 0.187 & 0.147 & \textbf{\color{red}0.176} & \textbf{\color{red}0.139} & 0.182 & 0.138 & 0.181 & 0.143 & 0.183 & 0.144 & \textbf{\color{blue}0.177} & \textbf{\color{blue}0.141} & 0.183 & 0.155 \\
& 192 & \textbf{\color{blue}0.188} & \textbf{\color{blue}0.152} & \textbf{\color{blue}0.188} & 0.154 & 0.191 & 0.153 & \textbf{\color{red}0.186} & \textbf{\color{red}0.151} & 0.191 & 0.153 & 0.191 & 0.153 & 0.190 & 0.155 & \textbf{\color{blue}0.188} & \textbf{\color{blue}0.152} & 0.195 & 0.169 \\
& 336 & \textbf{\color{blue}0.195} & \textbf{\color{red}0.162} & 0.197 & 0.166 & 0.197 & \textbf{\color{blue}0.163} & \textbf{\color{red}0.192} & \textbf{\color{red}0.162} & 0.197 & 0.164 & 0.208 & 0.173 & 0.199 & 0.167 & \textbf{\color{blue}0.195} & 0.164 & 0.206 & 0.180 \\
& 720 & \textbf{\color{blue}0.206} & \textbf{\color{red}0.183} & 0.207 & 0.186 & 0.216 & 0.190 & \textbf{\color{red}0.205} & \textbf{\color{blue}0.184} & 0.208 & 0.183 & 0.215 & 0.189 & 0.212 & 0.189 & 0.208 & 0.185 & 0.219 & 0.201\\
& AVG & \textbf{\color{blue}0.192} & \textbf{\color{blue}0.160} & 0.193 & 0.162 & 0.198 & 0.163 & \textbf{\color{red}0.190} & \textbf{\color{red}0.159} & 0.195 & 0.160 & 0.199 & 0.165 & 0.196 & 0.164 & \textbf{\color{blue}0.192} & 0.161 & 0.201 & 0.176\\
\midrule
\multirow{5}{*}{\begin{sideways}Air\end{sideways}} 
& 96  & \textbf{\color{red}0.441} & \textbf{\color{red}0.307} & 0.451 & 0.314 & 0.468 & 0.329 & 0.447 & \textbf{\color{blue}0.308} & 0.441 & 0.305 & \textbf{\color{blue}0.443} & \textbf{\color{red}0.307} & 0.470 & 0.337 & 0.465 & 0.331 & \textbf{\color{red}0.441} & 0.325 \\
& 192 & 0.461 & \textbf{\color{red}0.325} & 0.472 & 0.331 & 0.481 & 0.340 & 0.467 & 0.328 & 0.465 & 0.331 & \textbf{\color{red}0.451} & \textbf{\color{blue}0.329} & 0.485 & 0.349 & 0.477 & 0.341 & \textbf{\color{blue}0.460} & 0.338 \\
& 336 & \textbf{\color{blue}0.463} & \textbf{\color{red}0.339} & 0.468 & 0.342 & 0.485 & 0.351 & \textbf{\color{red}0.461} & \textbf{\color{red}0.339} & 0.473 & 0.348 & 0.468 & 0.342 & 0.499 & 0.363 & 0.484 & 0.353 & \textbf{\color{red}0.461} & \textbf{\color{blue}0.341} \\
& 720 & \textbf{\color{red}0.470} & \textbf{\color{red}0.358} & 0.492 & 0.379 & 0.501 & 0.386 & 0.482 & 0.367 & 0.488 & 0.374 & 0.488 & 0.369 & 0.516 & 0.401 & 0.504 & 0.392 & \textbf{\color{blue}0.474} & \textbf{\color{blue}0.359}\\
& AVG & \textbf{\color{red}0.459} & \textbf{\color{red}0.332} & 0.471 & 0.342 & 0.484 & 0.352 & 0.464 & \textbf{\color{blue}0.336} & 0.467 & 0.340 & 0.463 & 0.337 & 0.493 & 0.363 & 0.483 & 0.354 &\textbf{\color{red}0.459} & 0.341 \\
\midrule
\multirow{5}{*}{\begin{sideways}Weather\end{sideways}} 
& 96  & 0.167 &0.212 & 0.175 & 0.215 & \textbf{\color{blue}0.165} & \textbf{\color{blue}0.208} &0.175 &0.217 & 0.170 & 0.215 & \textbf{\color{red}0.158} & \textbf{\color{red}0.203} & 0.175 & 0.216 & 0.176 & 0.217 & 0.197 & 0.259 \\
& 192 & 0.217 & 0.255 & 0.223 & 0.257 & \textbf{\color{blue}0.215} & \textbf{\color{blue}0.254} &0.219 & 0.256 & 0.214 & 0.253 & \textbf{\color{red}0.211} & \textbf{\color{red}0.252} & 0.225 & 0.257 & 0.223 & 0.257 & 0.238 & 0.299 \\
& 336 & 0.274 & \textbf{\color{blue}0.294} & 0.278 & 0.297 & \textbf{\color{blue}0.273} & 0.297 &0.275 & 0.298 & 0.266 & 0.292 & \textbf{\color{red}0.267} & \textbf{\color{red}0.292} & 0.280 & 0.298 & 0.277 & 0.296 & 0.282 & 0.331 \\
& 720 & \textbf{\color{blue}0.351} & \textbf{\color{red}0.345} & 0.355 & 0.349 & 0.354 &0.349 &0.353 & 0.349& 0.347 & 0.345 & \textbf{\color{blue}0.351} & \textbf{\color{blue}0.346} & 0.358 & 0.350 & 0.354 & 0.348 & \textbf{\color{red}0.350} & 0.388 \\
& AVG & \textbf{\color{blue}0.252} & \textbf{\color{blue}0.276} & 0.258 & 0.280 & \textbf{\color{blue}0.252} & 0.277 & 0.256 & 0.280 & 0.249 & 0.276 & \textbf{\color{red}0.247}& \textbf{\color{red}0.273} & 0.260 & 0.280 & 0.258 & 0.280 & 0.267 & 0.319 \\
\midrule
\midrule
\multicolumn{2}{c|}{$1^{st}/2^{st}$\text{Count}}& \textbf{\color{red}31} & \textbf{\color{red}34} &{4} & {3} & 5 & 4 & 14 & 12 & {-} & {-} & 12 & 13 & 1 & 1 & \textbf{\color{blue}16} & \textbf{\color{blue}14} & 9 & 3 \\
\bottomrule[1.5pt]
\end{tabular}}
\end{adjustbox}
\end{table*}
 
\begin{remark}
    {\color{mypurple}\textbf{TimeMachine}} is an ensemble learning model that falls outside the scope of our comparison. However, we still include the experimental results in the table for future reference in research. It is worth mentioning that {\color{mypurple}\textbf{TimeMachine}} has several times the number of parameters and complexity compared to Time-SSM, yet it still performs worse than Time-SSM on five datasets.
\end{remark}

\subsection{Hyper-parameter}
\begin{figure}[h!]
	\centering
    {\includegraphics[width=.45\columnwidth]{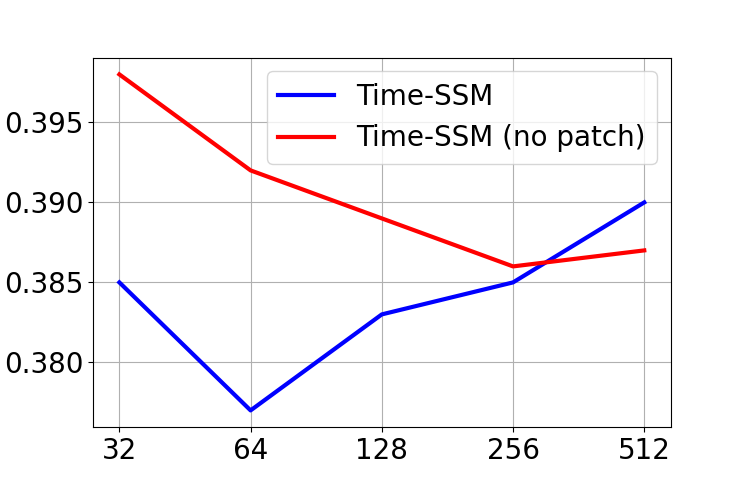}}
	\caption{\cite{hu2023twins} Hyper-parameter analysis of MSE in ETTh1(96-96)}
 \label{hyper}
\end{figure}
As shown in Figure \ref{hyper}, we conducted a hyperparameter analysis of the Time-SSM state spectral space dimensions. It can be observed that for the chunked version of Time-SSM, 64 dimensions yield optimal results, which further validates our inference in Section 4.3 (Representation Ability). However, the absence of patch operations necessitates higher spatial dimensions for representation.

\begin{table*}[h!]
\centering
\caption{Additional Baselines.}\label{extra table}
\setlength{\tabcolsep}{10pt}
\begin{adjustbox}{width=1\columnwidth, center}
\resizebox{1\columnwidth}{!}{
\begin{tabular}{p{0.3cm}|c|cc|cc|cc|cc|cc|cc|cc|cc|cc}

\toprule[1.5pt]
\multicolumn{2}{c}{\multirow{2}{*}{\large{Model}}}&\multicolumn{2}{c}{\cellcolor{mycolor4}Time-SSM}&\multicolumn{2}{c}{\cellcolor{mycolor4}GPT4TS}&\multicolumn{2}{c}{\cellcolor{mycolor4}TimesNet}&\multicolumn{2}{c}{\cellcolor{mycolor4}Crossformer} &\multicolumn{2}{c}{\cellcolor{mycolor4}Scaleformer}&\multicolumn{2}{c}{\cellcolor{mycolor4}TiDE}&\multicolumn{2}{c}{\cellcolor{mycolor4}NLinear}&\multicolumn{2}{c}{\cellcolor{mycolor4}Stationary}&\multicolumn{2}{c}{\cellcolor{mycolor4}FEDformer}\\
\multicolumn{2}{c}{} & \multicolumn{2}{c}{\cellcolor{mycolor4}{(Ours)}} & \multicolumn{2}{c}{\cellcolor{mycolor4}\cite{zhou2023one}}&\multicolumn{2}{c}{\cellcolor{mycolor4}\cite{wu2023timesnet}} & \multicolumn{2}{c}{\cellcolor{mycolor4}\cite{zhang2023crossformer}} & \multicolumn{2}{c}{\cellcolor{mycolor4}\cite{shabani2023scaleformer}} & \multicolumn{2}{c}{\cellcolor{mycolor4}\cite{das2023tide}} & \multicolumn{2}{c}{\cellcolor{mycolor4}\cite{zeng2023transformers}} & \multicolumn{2}{c}{\cellcolor{mycolor4}\cite{liu2022non}} & \multicolumn{2}{c}{\cellcolor{mycolor4}\cite{zhou2022fedformer}} \\
\cmidrule(l){1-2}\cmidrule(l){3-4}\cmidrule(l){5-6}\cmidrule(l){7-8}\cmidrule(l){9-10}\cmidrule(l){11-12}\cmidrule(l){13-14}\cmidrule(l){15-16}\cmidrule(l){17-18}\cmidrule(l){19-20}
\multicolumn{2}{c}{Metric}&MSE & MAE & MSE &MAE & MSE &MAE & MSE & MAE & MSE & MAE & MSE & MAE & MSE & MAE & MSE & MAE & MSE & MAE \\
\midrule
\midrule
\multirow{5}{*}{\begin{sideways}ETTh1\end{sideways}} 
& 96 & \textbf{\color{blue}0.377} & \textbf{\color{red}0.394} & 0.398 & 0.424 & 0.384 & 0.402 &0.391 & 0.417 & 0.396 & 0.440 & 0.479 & 0.464 & 0.386 & \textbf{\color{blue}0.400} & 0.513 & 0.419 & \textbf{\color{red}0.376} & 0.419 \\
& 192 & 0.423 & \textbf{\color{red}0.424} & 0.441 & 0.436 & 0.436 & \textbf{\color{blue}0.429} & 0.449 & 0.452 & 0.434 & 0.460 & 0.524  & 0.490  & \textbf{\color{red}0.400} & 0.430 & 0.533 & 0.505 & \textbf{\color{blue}0.420} & 0.448 \\
& 336 & 0.466 & \textbf{\color{red}0.437} & 0.492 & 0.466 & 0.491 & 0.469 & 0.510 & 0.489 & \textbf{\color{blue}0.462} & 0.476 & 0.566  & 0.513  & 0.480 & \textbf{\color{blue}0.443} & 0.586 & 0.535 & \textbf{\color{red}0.460} & 0.465 \\
& 720 & \textbf{\color{red}0.452} & \textbf{\color{red}0.448} & 0.487 & 0.483 & 0.521 & 0.500 & 0.594 & 0.567 & 0.494 & 0.500 & 0.595  & 0.558  & \textbf{\color{blue}0.486} & \textbf{\color{blue}0.472} & 0.644 & 0.616 & 0.505 & 0.507\\
& AVG & \textbf{\color{red}0.374} & \textbf{\color{red}0.399} & 0.457 & \textbf{\color{blue}0.450} & 0.458 & \textbf{\color{blue}0.450} & 0.486 & 0.481 & 0.447 & 0.469 & 0.541  & 0.506& \textbf{\color{blue}0.438} & 0.436 & 0.569 & 0.519 & 0.440 & 0.460\\
\midrule
\multirow{5}{*}{\begin{sideways}ETTh2\end{sideways}} 
& 96 &\textbf{\color{red} 0.290} & \textbf{\color{red}0.341} & \textbf{\color{blue}0.312} & \textbf{\color{blue}0.360} & 0.340 & 0.374 & 0.641 & 0.549 & 0.364 & 0.407 & 0.400 & 0.440 & 0.324 & 0.369 & 0.476 & 0.458 & 0.358 & 0.397\\
& 192 & \textbf{\color{red}0.368} & \textbf{\color{red}0.387} & \textbf{\color{blue}0.387} & \textbf{\color{blue}0.405} & 0.402 & 0.414 & 0.896 & 0.656 & 0.466 & 0.458 & 0.528 & 0.509 & 0.413 & 0.421 & 0.512 & 0.493 & 0.429 & 0.439\\
& 336 & \textbf{\color{red}0.396} & \textbf{\color{red}0.402} & \textbf{\color{blue}0.424} & \textbf{\color{blue}0.437} & 0.452 & 0.452 & 0.936 & 0.690 & 0.479 & 0.476 & 0.627 & 0.560 & 0.455 & 0.455 & 0.552 & 0.551 & 0.496 & 0.487 \\
& 720 & \textbf{\color{blue}0.449} & \textbf{\color{red}0.440} & \textbf{\color{red}0.433} & \textbf{\color{blue}0.453} & 0.462 & 0.468 & 1.390 & 0.863 & 0.487 & 0.492 & 0.874 & 0.679 & 0.457 & 0.466 & 0.562 & 0.560 & 0.463 & 0.474\\
& AVG & \textbf{\color{red}0.374} & \textbf{\color{red}0.399} & \textbf{\color{blue}0.389} & \textbf{\color{blue}0.414} & 0.414 & 0.427 & 0.436 & 0.453 & 0.445 & 0.458 & 0.607 & 0.547 & 0.412 & 0.428 & 0.526 & 0.516 & 0.437 & 0.449 \\
\midrule
\multirow{5}{*}{\begin{sideways}ETTm1\end{sideways}} 
& 96 & \textbf{\color{red}0.329} & \textbf{\color{red}0.365} & \textbf{\color{blue}0.335} & \textbf{\color{blue}0.369} & 0.338 & 0.375 & 0.366 & 0.400 & 0.355 & 0.398 & 0.364  & 0.387  & 0.339 & \textbf{\color{blue}0.369} & 0.386 & 0.398 & 0.379 & 0.419 \\
& 192 & \textbf{\color{red}0.370} & \textbf{\color{red}0.379} & \textbf{\color{blue}0.374} & \textbf{\color{blue}0.385} & \textbf{\color{blue}0.374} & 0.387 & 0.396 & 0.414 & 0.428 & 0.455 & 0.400  & 0.402  & 0.379 & 0.386 & 0.460 & 0.445 & 0.426 & 0.440 \\
& 336 & \textbf{\color{red}0.396} & \textbf{\color{red}0.402} & \textbf{\color{blue}0.407} & \textbf{\color{blue}0.406} & 0.410 & 0.411 & 0.439 & 0.443 & 0.524 & 0.487 & 0.428  & 0.424  & 0.411 & 0.407 & 0.495 & 0.464 & 0.445 & 0.459 \\
& 720 & \textbf{\color{red}0.449} & \textbf{\color{red}0.440} & \textbf{\color{blue}0.469} & \textbf{\color{blue}0.442} & 0.478 & 0.450 & 0.540 & 0.509 & 0.558 & 0.517 & 0.486  & 0.460  & 0.478 & 0.442 & 0.584 & 0.515 & 0.543 & 0.491 \\
& AVG & \textbf{\color{red}0.386} &\textbf{\color{red} 0.396 }& \textbf{\color{blue}0.396} & \textbf{\color{blue}0.401} & 0.400 & 0.406 & 0.435 & 0.442 & 0.466 & 0.464 & 0.420  & 0.418  & 0.402 & \textbf{\color{blue}0.401} & 0.481 & 0.456 & 0.448 & 0.452 \\
\midrule
\multirow{5}{*}{\begin{sideways}ETTm2\end{sideways}} 
& 96 & \textbf{\color{red}0.176} & \textbf{\color{blue}0.260} & 0.190 & 0.275 & 0.187 & 0.267 & 0.273 & 0.346 & 0.182 & 0.275 & 0.207 & 0.305 & \textbf{\color{blue}0.177} & \textbf{\color{red}0.257} & 0.192 & 0.274 & 0.203 & 0.287 \\
& 192 & \textbf{\color{blue}0.246} & \textbf{\color{blue}0.305} & 0.253 & 0.313 & 0.249 & 0.309 & 0.350 & 0.421 & 0.251 & 0.318 & 0.290 & 0.364 & \textbf{\color{red}0.241} & \textbf{\color{red}0.297} & 0.280 & 0.339 & 0.269 & 0.328 \\
& 336 & \textbf{\color{blue}0.305} & \textbf{\color{blue}0.344} & 0.321 & 0.360 & 0.321 & 0.351 & 0.474 & 0.505 & 0.340 & 0.375 & 0.377 & 0.422 & \textbf{\color{red}0.302} & \textbf{\color{red}0.337} & 0.334 & 0.361 & 0.325 & 0.336 \\
& 720 & \textbf{\color{blue}0.406} & 0.405 & 0.411 & 0.406 & 0.408 & \textbf{\color{blue}0.403} & 1.347 & 0.812 & 0.435 & 0.433 & 0.558 & 0.524 & \textbf{\color{red}0.405} & \textbf{\color{red}0.396} & 0.417 & 0.413 & 0.421 & 0.415\\
& AVG & \textbf{\color{blue}0.283} & \textbf{\color{blue}0.328} & 0.294 & 0.339 & 0.291 & 0.333 & 0.611 & 0.521 & 0.302 & 0.350 & 0.358 & 0.404 & \textbf{\color{red}0.281} & \textbf{\color{red}0.322} & 0.306 & 0.347 & 0.305 & 0.349 \\
\midrule
\multirow{5}{*}{\begin{sideways}Exchange\end{sideways}} 
& 96 & \textbf{\color{red}0.083} & \textbf{\color{red}0.202} & 0.091 & 0.212 & 0.107 & 0.234 & 0.256 & 0.367 & 0.155 & 0.285 & 0.094 & 0.218 & \textbf{\color{blue}0.089} & \textbf{\color{blue}0.208} & 0.111 & 0.237 & 0.148 & 0.278 \\
& 192 & \textbf{\color{red}0.170} & \textbf{\color{red}0.295} & 0.183 & 0.304 & 0.226 & 0.344 & 0.469 & 0.508 & 0.274 & 0.384 & 0.184 & 0.307 & \textbf{\color{blue}0.180} & \textbf{\color{blue}0.301} & 0.219 & 0.335 & 0.271 & 0.315 \\
& 336 & 0.334 & 0.418 & \textbf{\color{blue}0.328}& \textbf{\color{blue}0.417} & 0.367 & 0.448 & 0.901 & 0.741 & 0.452 & 0.498 & 0.349 & 0.431 & \textbf{\color{red}0.320} & \textbf{\color{red}0.409} & 0.421 & 0.476 & 0.460 & 0.427 \\
& 720 & \textbf{\color{red}0.824} & \textbf{\color{red}0.677} & 0.880 & 0.704 & 0.964 & 0.746 & 1.398 & 0.965 & 1.172 & 0.839 & 0.852 & 0.698 & \textbf{\color{blue}0.832} & \textbf{\color{blue}0.685} & 1.092 & 0.769 & 0.964 & 0.746 \\
& AVG & \textbf{\color{red}0.352} & \textbf{\color{red}0.398} & 0.371 & 0.409 & 0.416 & 0.443 & 0.756 & 0.645 & 0.513 & 0.502 & 0.370 & 0.413 & \textbf{\color{blue}0.355} & \textbf{\color{blue}0.401} & 0.461 & 0.454 & 0.519 & 0.429 \\
\midrule
\multirow{5}{*}{\begin{sideways}Weather\end{sideways}}
& 96 & \textbf{\color{blue}0.167} & \textbf{\color{blue}0.212} & 0.203 & 0.244 & 0.172 & 0.220 & \textbf{\color{red}0.164} & 0.232 & 0.288 & 0.365 & 0.202 & 0.261 & 0.168 & \textbf{\color{red}0.208} & 0.173 & 0.223 & 0.217 & 0.296 \\
& 192 & \textbf{\color{blue}0.217} &\textbf{\color{red} 0.255} & 0.247 & 0.277 & 0.219 & \textbf{\color{blue}0.261} & \textbf{\color{red}0.211} & 0.276 & 0.368 & 0.425 & 0.242 & 0.298 & \textbf{\color{blue}0.217} & \textbf{\color{red}0.255} & 0.245 & 0.285 & 0.219 & \textbf{\color{blue}0.261} \\
& 336 & 0.274 & \textbf{\color{blue}0.294} & 0.297 & 0.311 & 0.280 & 0.306 & \textbf{\color{blue}0.269} & 0.327 & 0.447 & 0.469 & 0.287 & 0.335 & \textbf{\color{red}0.267} & \textbf{\color{red}0.292} & 0.321 & 0.338 & 0.339 & 0.380 \\
& 720 & \textbf{\color{red}0.351} & \textbf{\color{red}0.345} & 0.368 & \textbf{\color{blue}0.356} & 0.365 & 0.359 & \textbf{\color{blue}0.355} & 0.404 & 0.640 & 0.574 & \textbf{\color{blue}0.355} & 0.404 & \textbf{\color{red}0.351} & 0.386 & 0.414 & 0.410 & 0.403 & 0.428 \\
& AVG &0.252 & \textbf{\color{blue}0.276} & 0.279 & 0.297 & 0.259 & 0.287 & \textbf{\color{red}0.250} & 0.310 & 0.436 & 0.458 & 0.271 & 0.320 & \textbf{\color{blue}0.251} & \textbf{\color{red}0.275} & 0.288 & 0.314 & 0.309 & 0.360 \\
\midrule
\midrule
\multicolumn{2}{c|}{$1^{st}/2^{st}$\text{Count}}& \textbf{\color{red}25} & \textbf{\color{red}28} & 11 & 13 & 1 & 4 & 5 & - &1 & - &1 & - & \textbf{\color{blue}17} & \textbf{\color{blue}19} & - & - & 3 & - \\
\bottomrule[1.5pt]
\end{tabular}}
\end{adjustbox}
\end{table*}

\section{Discussion}
Although some recent models, such as Pathformer \cite{chen2024pathformer} and Time-LLM \cite{jin2023time}, have surpassed Time-SSM in performance, we believe that operations such as multi-scale patching and large model enhancement can be integrated into Time-SSM. This integration can lead to the emergence of Time-multipatch-SSM or even Time-LargeSSM. It is important to note that the focus of this paper is to explore the feasibility of SSM in the field of time series prediction, with a commitment to the simplest SSM framework and smallest model size. Furthermore, we are the first to provide a detailed theoretical basis for applying SSM to TSF tasks in our article. We provide guidance for future studies on how to apply SSM effectively.

\section{Limitation}
\label{limitation}
This paper provides comprehensive theoretical guidance for the application of State Space Models (SSMs) to time series data. However, there are still areas that warrant further exploration. For instance, while this paper intuitively explains the S4-real initialization method as a rough approximation of dynamics, it lacks rigorous mathematical proof. Furthermore, experimental results demonstrate a significant performance improvement of the Hippo-LegP method compared to Hippo-LegT. As this framework is built upon a finite measure approximation, it remains unclear whether it can be extended to Hippo-LegS, which employs an exponential decay approximation.



\end{document}